\documentclass{article}
\usepackage{iclr2027_conference,times}

\usepackage{microtype}
\usepackage{graphicx}
\usepackage{booktabs}
\usepackage{amsmath,amssymb,amsthm,mathtools}
\usepackage{float,array}
\usepackage[hidelinks]{hyperref}
\usepackage{url}

\newtheorem{theorem}{Theorem}[section]
\newtheorem{proposition}[theorem]{Proposition}
\newtheorem{lemma}[theorem]{Lemma}
\newtheorem{corollary}[theorem]{Corollary}
\newtheorem{remark}[theorem]{Remark}
\newtheorem{definition}[theorem]{Definition}
\DeclareMathOperator{\tr}{tr}

\newcommand{\R}{\mathbb{R}}
\newcommand{\W}{\mathbf{W}}
\newcommand{\U}{\mathbf{U}}
\newcommand{\E}{\mathbf{E}}

\newcommand{\supp}{\operatorname{supp}}
\newcommand{\norm}[1]{\left\lVert #1\right\rVert}

\title{Support Selection Beyond Smooth DAG\\
Exactness: Completion Geometry,\\
Score Margins, and Selective Certificates}

\author{
Rui Wu \qquad Zongyuan Chen \qquad
Hong Xie\thanks{Corresponding author: hongx87@ustc.edu.cn} \qquad
Defu Lian \qquad Enhong Chen\\
School of Computer Science and Engineering,\\
University of Science and Technology of China\\
\texttt{\{wurui22,chenzongyuan\}@mail.ustc.edu.cn}\\
\texttt{\{hongx87,liandefu,cheneh\}@ustc.edu.cn}
}

\iclrfinalcopy

\begin{document}
\maketitle
\pagestyle{plain}

\begin{abstract}
Smooth acyclicity constraints answer whether a weighted support is a DAG.
Structure learning asks a different question: which support change should be
made?  Existing analyses establish degeneracy for particular constraint
formulas, but do not isolate what follows from smooth exactness itself.  At a
DAG boundary, we show that minimal cycle completions generate a squarefree
monomial ideal containing every restricted Taylor jet of an exact
representation.  If the smallest completion has \(q\) edges, the first
possible response has order \(q\) for a vector residual and \(2q\) for a
nonnegative scalar.  Exponentially many constant-scale cyclic manifolds exhibit
the same lack of ranking away from the boundary for NOTEARS and DAGMA.
To connect this representation limit to optimization, we derive the exact
selection time for an isolated cycle.  When \(\Psi'(h)\asymp h^\nu\), the
feasibility-only time is
\(T_0(\varepsilon)=\Theta(\varepsilon^{-(2\nu+1)})\).  A score margin changes
the leading dynamics at scale \(T_0^{-1}\) for \(\nu>0\).  The endpoint
\(\nu=0\) has a logarithmic boundary layer: the unperturbed limit requires
\(\gamma T_0\log(1/\varepsilon)\to0\).  Controlled experiments check the law,
and a separation statistic computed without the generating graph predicts
selection time on 320 official NOTEARS/DAGMA trajectories (Spearman \(-0.52\)
and \(-0.66\), permutation \(p<10^{-4}\)).
The score margin is unknown in finite samples.  A parent-set confidence family
and forced-opposite queries certify skeleton and unshielded-collider labels
shared by every population optimum of a frozen score.  In a 320-run audit with
four frontends, four graph families, and four SEM regimes, every regret bound
covers an independent oracle-score audit.  None of 3,042 certified skeleton or
2,396 collider labels disagrees with the oracle-score optimum, although
\(4.4\%\) and \(5.5\%\), respectively, disagree with the generating graph.
The results separate three claims often conflated in continuous DAG learning:
DAG feasibility, score-based support selection, and causal identification.
\end{abstract}

\section{Introduction}

A weighted matrix represents a directed graph through its zero pattern.
Edge magnitudes determine statistical fit, whereas acyclicity changes only
when an entry crosses zero.  Continuous DAG learning must therefore make a
discrete support decision while optimizing over a Euclidean space.  A typical
estimator combines
\begin{equation}
\min_{\W}\;
\mathcal L(\W;X)+\lambda\,\mathcal R(\W)+\Psi(h(\W)),
\label{eq:joint-objective}
\end{equation}
where \(\mathcal L\) measures fit, \(\mathcal R\) promotes sparsity, and
\(h\) enforces acyclicity.  A score may rank competing graphs, and a
nonsmooth update or threshold may create zeros.  Exactness of \(h\), however,
only identifies the acyclic supports.  We ask what support-ranking information,
if any, follows from exactness itself.

NOTEARS supplies the smooth equality
\begin{equation}
h_{\exp}(\W)=\tr\!\left(\exp(\W\circ\W)\right)-d=0,
\label{eq:notears}
\end{equation}
where $\W\in\R^{d\times d}$ is a weighted adjacency matrix
\citep{zheng2018dags}.  Polynomial, log-determinant, and spectral alternatives
change the geometry of the optimization problem
\citep{yu2019dag,bello2022dagma,nazaret2024stable,zhang2025analytic}.  Their
far-field landscapes differ substantially, but their exactness statements make
the same promise: \(h(\W)=0\) characterizes DAG support.  That promise does
not imply that \(\nabla h\) distinguishes two deletions that both restore
feasibility.  Rescaling a tied or absent local signal cannot supply the missing
ranking.

The obstruction comes from support geometry.  At a DAG boundary, suppose a set \(F\) of
absent edges creates a cycle but no proper subset does.  A smooth exact
constraint vanishes on every coordinate face obtained by omitting one edge of
\(F\).  Those faces forbid low-order Taylor terms according to the number of
edges needed to complete a cycle.  Conditioning can change coefficients and
far-field behavior, but not a term excluded by this support geometry.

The boundary calculation alone does not describe ordinary-sized iterates.
Relabeling symmetry supplies the missing link: it creates
constant-weight cyclic manifolds on which the feasibility gradient ties
competing deletions.  A perturbation of size \(\varepsilon\) breaks the tie,
but the time to reach a fixed edge ratio diverges as \(\varepsilon^{-1}\) for
a linear feasibility force and \(\varepsilon^{-3}\) for a cold quadratic
penalty.  These are worst-case conditioning statements, not a claim that
observational data are usually symmetric.  They raise the statistical question
that closes the paper: when the score supplies the ranking, do finite data
resolve it?

The paper develops this argument in three steps.
\begin{itemize}
    \item Earlier degeneracy results are tied to a chosen entrywise power.  We
    define the cycle-completion ideal and its initial degree \(q_\W(F)\), which
    depend on how several absent edges jointly close a cycle.  Every restricted
    Taylor jet of a smooth exact representation lies in this ideal, forcing
    response order at least \(q\) for vector residuals and \(2q\) for
    nonnegative scalars.
    \item A local Taylor barrier need not govern finite-scale optimization.  We
    close this gap with \(2^{0.332d}\) constant-scale cyclic manifolds in the
    NOTEARS and DAGMA domains and an exact isolated-cycle hitting-time law.  The
    law identifies the score scale that changes support selection and exposes
    a logarithmic transition layer at \(\nu=0\).  Controlled flows and 320
    official trajectories test these predictions.
    \item Feasibility does not reveal whether data resolve the resulting score
    margin.  We construct a parent-set confidence family whose forced-opposite
    queries certify skeleton and collider labels shared by all population
    optima of a frozen score.  The audit reports both oracle-score agreement and
    disagreement with the generating graph, separating statistical selection
    from causal identification.
\end{itemize}

No Fears studies degeneracy for positive Hadamard-power constraints
\citep{wei2020fears}; the present order is determined instead by the joint
completion statistic \(q_\W(F)\).  DAGMA improves the far-field path
\citep{bello2022dagma}, while remaining subject to the local representation
limit.
Acyclic parameterizations, boundary-constrained formulations, and discrete
graph operations lie outside our assumptions and can break the symmetry
\citep{yu2021nocurl,massidda2024cosmo,gillot2022large,rey2026nonnegative}.
The first two results are therefore not global iteration lower bounds for
Eq.~\eqref{eq:joint-objective}.  The certificate is likewise conditional on a
training-frozen candidate family and bounded decomposable score; it certifies
score selection, not causal identification.  Appendix~\ref{app:closest-work}
gives a claim-by-claim comparison and the precise boundaries.

\section{Setup and Scope}

Let $\mathcal Z=\{\W\in\R^{d\times d}:\operatorname{diag}(\W)=0\}$, and let
$\mathcal D\subset\mathcal Z$ be the matrices whose nonzero support is a DAG.
We work on an open domain $\Omega\subseteq\mathcal Z$ containing the candidate
subspaces of interest; a proper domain accommodates log-determinant
constraints.

\begin{definition}[Exact representations]
A smooth map $H:\Omega\to\R^r$ is a \emph{signed/vector exact representation} if, for every $\W\in\Omega$,
\begin{equation}
H(\W)=0\quad\Longleftrightarrow\quad \W\in\mathcal D.
\label{eq:vector-exact}
\end{equation}
A smooth scalar $h:\Omega\to\R$ is \emph{nonnegative exact} if, for every $\W\in\Omega$,
\begin{equation}
h(\W)\ge0,\qquad h(\W)=0\quad\Longleftrightarrow\quad \W\in\mathcal D.
\label{eq:scalar-exact}
\end{equation}
\end{definition}

The scalar definition includes EXP, trace polynomials, and DAGMA on its domain;
the vector definition also permits signed systems.  Nonsmooth functions,
discrete order searches, acyclic parameterizations, and boundary-constrained
domains fall outside these assumptions.

A pathwise quantity has \emph{exact order $p$} if
$R(t)=|t|^pv+o(|t|^p)$ for nonzero $v$.  Our barriers lower-bound the first
possible order; exactness need not attain it.  Likelihood choice and sparsity
regularization affect which member of a Markov equivalence class a
differentiable program selects \citep{deng2024markov,jin2026revisiting};
the next four sections isolate how feasibility enters a local optimizer before
Section~\ref{sec:completion-margin} returns to finite-data score selection.

\section{Cycle-Completion Complexity}
\label{sec:completion-geometry}

Fix a DAG $\W\in\mathcal D$ and a finite set
$F=\{e_1,\ldots,e_m\}$ of absent off-diagonal coordinates.  Let $\E_s$ be a signed coordinate matrix at $e_s$.  For $G\in\{H,h\}$ define the local restriction
\begin{equation}
V_F=\left\{x\in\R^m:\W+\sum_{s=1}^m x_s\E_s\in\Omega\right\},
\qquad
G_F(x)=G\!\left(\W+\sum_{s=1}^m x_s\E_s\right),\quad x\in V_F.
\label{eq:restriction}
\end{equation}
Because $\Omega$ is open, $V_F$ is an open neighborhood of the origin.

\begin{definition}[Completion number]
\label{def:q}
The candidate-subspace completion number is
\begin{equation}
q_\W(F)=\min\left\{|S|:S\subseteq F,\ \supp(\W)\cup S\text{ is cyclic}\right\},
\label{eq:q}
\end{equation}
with $q_\W(F)=\infty$ if no subset completes a cycle.
\end{definition}

Geometrically, every coordinate face with fewer than $q_\W(F)$ candidates lies
in the exact zero set; only after $q_\W(F)$ coordinates are active can the
subspace leave the acyclic set.  At $\W=0$, this is the directed girth of
$(V,F)$.  The definition also covers overlapping cycles and nonempty base DAGs.

Candidate edges need not approach zero at the same rate.  For positive weights
$a=(a_1,\ldots,a_m)\in\R_{>0}^m$, define the weighted analogue
\begin{equation}
\tau_\W(F;a)=\min_{\substack{S\subseteq F:\ 
\supp(\W)\cup S\text{ cyclic}}}\sum_{e_s\in S}a_s.
\label{eq:tau}
\end{equation}
Along $x_s=c_st^{a_s}$ as $t\downarrow0$, $\tau_\W(F;a)$ is the cheapest
cycle-completion exponent.  At the empty graph it is weighted directed girth.

\begin{definition}[Cycle-completion ideal]
\label{def:completion-ideal}
The acyclic candidate supports form a restricted directed-subgraph complex
\citep{hultman2004directed},
\begin{equation}
\Delta_{\W,F}=\{S\subseteq[m]:\supp(\W)\cup\{e_s:s\in S\}
\text{ is acyclic}\}.
\end{equation}
Let $\mathcal C_{\rm min}(\W,F)$ collect its inclusion-minimal nonfaces.
Its Stanley--Reisner ideal is
\begin{equation}
I_{\W,F}=\left\langle x^C:C\in\mathcal C_{\rm min}(\W,F)\right\rangle
\subset\R[x_1,\ldots,x_m],
\label{eq:completion-ideal}
\end{equation}
where $x^C=\prod_{s\in C}x_s$.
\end{definition}

Thus $q_\W(F)$ and $\tau_\W(F;a)$ are the least ordinary and weighted degrees
in $I_{\W,F}$.  When this ideal is nonzero, call
$r_\W(F)=\max_{C\in\mathcal C_{\rm min}(\W,F)}|C|$ its \emph{completion
width}.  The two unweighted degrees can differ: $q_\W(F)$ controls the
earliest Taylor signal, while $r_\W(F)$ controls the worst local error bound.

\begin{theorem}[Cycle-completion jet ideal]
\label{thm:jet-ideal}
Let $G_F$ be the restriction of either exact representation in
Eqs.~\eqref{eq:vector-exact}--\eqref{eq:scalar-exact}, and suppose $G_F$ is
$C^k$.  Every component of its order-$k$ Taylor polynomial at zero belongs to
$I_{\W,F}$.  Conversely, the vector of minimal generators
\begin{equation}
M_{\W,F}(x)=(x^C)_{C\in\mathcal C_{\rm min}(\W,F)}
\label{eq:ideal-generator-vector}
\end{equation}
vanishes exactly on the acyclic candidate supports.
\end{theorem}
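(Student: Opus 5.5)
The plan is to reduce membership in the Stanley--Reisner ideal \(I_{\W,F}\) to a statement about monomials, and then to use exactness on coordinate faces.

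\textbf{Step 1: monomial criterion.} Because \(I_{\W,F}\) is a monomial ideal, a polynomial \(P\) lies in it if and only if every monomial \(x^\alpha\) appearing in \(P\) with nonzero coefficient lies in it. A monomial \(x^\alpha\) lies in a squarefree monomial ideal exactly when its support \(\supp(\alpha)=\{s:\alpha_s>0\}\) contains some minimal nonface \(C\in\mathcal C_{\rm min}(\W,F)\). Equivalently, \(x^\alpha\) lies in the ideal exactly when \(\supp(\alpha)\notin\Delta_{\W,F}\); here we use that \(\Delta_{\W,F}\) is closed under subsets, since subgraphs of acyclic graphs are acyclic. So it suffices to show that every monomial \(x^\alpha\) with \(|\alpha|\le k\) and \(\supp(\alpha)\in\Delta_{\W,F}\) has zero Taylor coefficient, componentwise.

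\textbf{Step 2: vanishing on faces.} Fix \(S\in\Delta_{\W,F}\) and consider the coordinate face \(L_S=\{x\in V_F: x_s=0\text{ for } s\notin S\}\). For \(x\in L_S\), the support of \(\W+\sum_s x_s\E_s\) is contained in \(\supp(\W)\cup\{e_s:s\in S\}\). This support is acyclic, and it is off-diagonal because \(F\) consists of off-diagonal coordinates. Hence the matrix lies in \(\mathcal D\). Exactness, Eq.~\eqref{eq:vector-exact} or \eqref{eq:scalar-exact}, gives \(G_F\equiv0\) on \(L_S\), which is an open neighborhood of \(0\) in the coordinate subspace \(\R^S\).

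\textbf{Step 3: extracting the coefficients.} Since \(G_F\) is \(C^k\), the mixed partial \(\partial^\alpha G_F(0)\) with \(\supp(\alpha)\subseteq S\) involves only derivatives in the directions of \(\R^S\). It therefore equals the corresponding partial of the restriction \(G_F|_{L_S}\equiv0\), so it vanishes. The Taylor coefficient of \(x^\alpha\) is \(\partial^\alpha G_F(0)/\alpha!\), hence it is zero. Applying this with \(S=\supp(\alpha)\) for each monomial with \(\supp(\alpha)\in\Delta_{\W,F}\) kills every such term, and Step 1 puts each component of the order-\(k\) Taylor polynomial in \(I_{\W,F}\). For the scalar case no extra argument is needed, since only vanishing on \(\mathcal D\) is used; nonnegativity matters only for the later \(2q\) refinement.

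\textbf{Step 4: the converse.} We show that \(M_{\W,F}(x)=0\) if and only if \(\supp(x)=\{s:x_s\neq0\}\in\Delta_{\W,F}\). If \(\supp(x)\) is a face, then no minimal nonface \(C\) is contained in it, so each \(C\) has some \(s\in C\) with \(x_s=0\), and hence \(x^C=0\). If \(\supp(x)\) is a nonface, it contains some minimal nonface \(C\), and \(x^C=\prod_{s\in C}x_s\neq0\). Since the support of \(\W+\sum_s x_s\E_s\) is exactly \(\supp(\W)\cup\{e_s:s\in\supp(x)\}\), this says precisely that \(M_{\W,F}\) vanishes exactly on the acyclic candidate supports. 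This uses that \(F\) consists of absent coordinates, so there is no cancellation with entries of \(\W\).

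\textbf{Main obstacle.} The only delicate point is Step 3. We must justify that partial derivatives along a coordinate subspace commute with restriction for merely \(C^k\) maps, and that no derivatives outside \(S\) enter \(\partial^\alpha\). Both are standard once we note that \(L_S\) is a relatively open set in \(\R^S\) containing \(0\).
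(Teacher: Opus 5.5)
Your proof is correct and follows the same route as the paper: exactness makes $G_F$ vanish near the origin on every coordinate face $S\in\Delta_{\W,F}$, which kills the Taylor coefficient of each monomial whose support is a face, so every surviving monomial is divisible by a minimal nonface. The converse is the same support-containment check the paper uses. You simply make explicit the monomial-ideal membership criterion, the no-cancellation property of absent coordinates, and the fact that partials commute with restriction to a coordinate subspace, all of which the paper's three-line proof (and its coordinate-subspace jet lemma in the appendix) leaves implicit.
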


\begin{proof}
If $\supp(\alpha)\in\Delta_{\W,F}$, exactness makes $G_F$ identically zero
near the origin on that coordinate subspace, so $D^\alpha G_F(0)=0$.
Every surviving Taylor monomial therefore contains a minimal nonface and
belongs to $I_{\W,F}$.  The generator vector is zero exactly when the active
support contains no minimal nonface, which is precisely membership in
$\Delta_{\W,F}$.
\end{proof}

\section{Signed and Vector Representations}

The coordinate-face observation already constrains a vector representation.
No sign or nonnegativity assumption is needed: exactness alone forces the
low-order Taylor coefficients to disappear.

\begin{theorem}[Vector cycle-completion barrier]
\label{thm:vector}
Assume Eq.~\eqref{eq:vector-exact}, let $q=q_\W(F)<\infty$, and suppose $H$ is $C^{q-1}$ near $\W$.  Then
\begin{equation}
D^\alpha H_F(0)=0
\qquad\text{for every multi-index }|\alpha|\le q-1.
\label{eq:vector-jet}
\end{equation}
If $H$ is $C^q$, then for every $\U$ supported on $F$,
\begin{equation}
\norm{H(\W+t\U)}_2=O(|t|^q),
\qquad
\norm{D H_F(tu)}_{\rm op}=O(|t|^{q-1}).
\label{eq:vector-order}
\end{equation}
\end{theorem}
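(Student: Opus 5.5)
The plan is to derive the vanishing of the low-order jet directly from Theorem~\ref{thm:jet-ideal}, and then read off the two growth bounds from Taylor's theorem.

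\textbf{Step 1: the jet vanishes.} Take a multi-index $\alpha$ with $|\alpha|\le q-1$. Its support $\supp(\alpha)$ has at most $q-1$ coordinates. By the definition of $q=q_\W(F)$, adding any fewer than $q$ candidate edges to $\supp(\W)$ leaves an acyclic graph. So $\supp(\alpha)\in\Delta_{\W,F}$.

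On the coordinate subspace $\{x: x_s=0 \text{ for } s\notin\supp(\alpha)\}$, intersected with $V_F$, every point is therefore a DAG. Exactness, Eq.~\eqref{eq:vector-exact}, then gives $H_F\equiv0$ on that subspace near the origin. The derivative $D^\alpha$ differentiates only in directions lying inside this subspace. Hence $D^\alpha H_F(0)=0$.

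This is exactly the first half of the proof of Theorem~\ref{thm:jet-ideal}. The same conclusion also follows from the degree count: every generator of $I_{\W,F}$ has degree at least $q$, so the order-$(q-1)$ Taylor polynomial must vanish. The only regularity needed here is $C^{q-1}$, which is what makes the partial derivatives up to order $q-1$ well defined and continuous. That proves Eq.~\eqref{eq:vector-jet}.

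\textbf{Step 2: growth bounds under $C^q$.} Now assume $H$ is $C^q$. Write $\U=\sum_s u_s\E_s$, so that $H(\W+t\U)=H_F(tu)$.

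Taylor's theorem with integral (or Peano) remainder gives
\[
H_F(tu)=\sum_{|\alpha|\le q-1}\frac{D^\alpha H_F(0)}{\alpha!}(tu)^\alpha+R_q(tu).
\]
The sum vanishes by Step 1. The remainder satisfies
\[
\|R_q(tu)\|\le C\,|t|^q\|u\|^q\sup_{|\beta|=q}\sup_{\|y\|\le |t|\|u\|}\|D^\beta H_F(y)\|.
\]
This is bounded near $t=0$ by continuity of the $q$-th derivatives on a compact neighborhood inside the open set $V_F$. That gives $\norm{H(\W+t\U)}_2=O(|t|^q)$.

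For the Jacobian, apply the same argument to each partial derivative $\partial_s H_F$, which is $C^{q-1}$. By Step 1, all its derivatives of order at most $q-2$ vanish at zero, since they are derivatives of $H_F$ of order at most $q-1$. Taylor expansion of $\partial_s H_F$ to order $q-1$ then gives $\partial_sH_F(tu)=O(|t|^{q-1})$. Because $DH_F$ has finitely many columns, the operator norm obeys the same bound.

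\textbf{Main obstacle.} The only delicate point is bookkeeping rather than depth. The remainder estimates must use continuity of the top-order derivatives on a closed ball contained in $V_F$; this is available because $\Omega$ is open. In the edge case $q=1$, the gradient bound is just boundedness of $DH_F$ near $0$.
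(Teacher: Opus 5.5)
Your proposal is correct and follows essentially the same route as the paper. Step 1 is the paper's Lemma~\ref{lem:coordinate-jet}: $H_F$ vanishes identically near the origin on the coordinate subspace indexed by $\supp(\alpha)$, which has fewer than $q$ candidates and hence gives acyclic support, so $D^\alpha H_F(0)=0$. Step 2 matches the paper's use of the integral-remainder Taylor formula, applied first to $H_F$ and then to $DH_F$, whose derivatives through order $q-2$ are derivatives of $H_F$ through order $q-1$.
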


\begin{proof}
Consider a Taylor coefficient indexed by $\alpha$ and let $S=\supp(\alpha)$.  If $|\alpha|<q$, then $|S|<q$ and $\supp(\W)\cup S$ is acyclic.  Exactness makes $H_F$ identically zero near the origin in the coordinate subspace indexed by $S$, so the coefficient must vanish.  Taylor remainder bounds give Eq.~\eqref{eq:vector-order}.
\end{proof}

The restriction in Eq.~\eqref{eq:vector-order} matters.  The full Jacobian may contain directions outside $F$ that close an already existing path at lower order; the theorem controls the chosen candidate subspace.

\begin{corollary}[Weighted vector barrier]
\label{cor:weighted-vector}
Let $a\in\mathbb N^m$, $\tau=\tau_\W(F;a)<\infty$, $c_s\ne0$, and
$\W_a(t)=\W+\sum_s c_st^{a_s}\E_s$.  If $H$ is $C^{\tau-1}$, every derivative of $H(\W_a(t))$ below order $\tau$ vanishes at zero.  If $H$ is $C^\tau$, then
\begin{equation}
\norm{H(\W_a(t))}_2=O(|t|^\tau).
\end{equation}
\end{corollary}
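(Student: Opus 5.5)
The plan is to reduce the weighted statement to the jet-ideal description of Theorem~\ref{thm:jet-ideal}, and then read off orders along the monomial curve. Set $g(t)=H(\W_a(t))=H_F(\phi(t))$ with $\phi(t)=(c_1t^{a_1},\ldots,c_mt^{a_m})$. Because $a\in\mathbb N^m$, the map $\phi$ is polynomial with $\phi(0)=0$, so $g$ is $C^{\tau-1}$ near $t=0$ whenever $H$ is. The derivatives $g^{(j)}(0)$ for $j<\tau$ are then the coefficients of $t^j$ in the order-$(\tau-1)$ Taylor expansion of $g$. That expansion is obtained by substituting $\phi$ into the order-$(\tau-1)$ Taylor polynomial $P$ of $H_F$ at zero and truncating at degree $\tau-1$ in $t$.

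The first step uses Theorem~\ref{thm:jet-ideal}: every component of $P$ lies in $I_{\W,F}$. The natural first attempt is to argue monomial by monomial. A monomial ideal contains a polynomial exactly when it contains each of that polynomial's monomials. So every monomial $x^\alpha$ occurring in $P$ is divisible by some $x^C$ with $C\in\mathcal C_{\rm min}(\W,F)$. The set $\{e_s:s\in C\}$ completes a cycle, and therefore
\[
\textstyle\sum_s a_s\alpha_s\ \ge\ \sum_{s\in C}a_s\ \ge\ \tau .
\]
Substituting $\phi$ turns $x^\alpha$ into $c^\alpha t^{\sum_s a_s\alpha_s}$, which has $t$-degree at least $\tau$. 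Hence no monomial of $P$ contributes to any power $t^j$ with $j<\tau$.

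The remaining piece of the Taylor expansion is the remainder $H_F(x)-P(x)=o(\|x\|^{\tau-1})$. Since $\|\phi(t)\|=O(|t|)$ (each $a_s\ge1$), this gives only $o(|t|^{\tau-1})$, which matches the truncation order. Therefore $g(t)=o(|t|^{\tau-1})$, and because $g$ is $C^{\tau-1}$, $g^{(j)}(0)=0$ for all $j\le\tau-1$.

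The potential obstacle is a mismatch of orders. A remainder of order $\tau-1$ in $x$ could in principle hide a $t$-power below $\tau$ when some $a_s>1$, because $|x|$ is then much smaller than $|t|$. Phrasing the argument through Taylor's theorem for the composite $g$ avoids this: the degree-$(\tau-1)$ jet of $g$ at $0$ is the truncation of $P\circ\phi$. That truncation vanishes identically by the monomial count above.

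For the $C^\tau$ statement, apply Taylor's theorem to $g$ at order $\tau$. All derivatives up to order $\tau-1$ vanish, so
\[
\norm{g(t)}_2\le \tfrac{1}{\tau!}\sup_{|s|\le|t|}\norm{g^{(\tau)}(s)}_2\,|t|^\tau=O(|t|^\tau),
\]
where the supremum is finite by continuity of $g^{(\tau)}$ on a compact interval. This reduces to Theorem~\ref{thm:vector} when every $a_s=1$, since then $\tau=q_\W(F)$.
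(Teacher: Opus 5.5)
Your proof is correct, but it takes a different route from the paper's. The paper does not count weighted degrees in the original coordinates. Instead it lifts each edge $e_s$ to $a_s$ clone variables through $\Gamma(z)=\W+\sum_s c_s\bigl(\prod_j z_{s,j}\bigr)\E_s$. It then observes that $\tau_\W(F;a)$ becomes the \emph{unweighted} coordinate-subspace completion number of $H\circ\Gamma$. Applying the coordinate-subspace jet lemma, it concludes that every derivative of $H\circ\Gamma$ of order below $\tau$ vanishes, and that $H\circ\Gamma(z)=O(\|z\|^\tau)$ under $C^\tau$. Finally it restricts to the diagonal $z_{s,j}=t$, which is exactly $\W_a(t)$.

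You stay in the original coordinates. Theorem~\ref{thm:jet-ideal} places the $(\tau-1)$-jet of $H_F$ in $I_{\W,F}$, and every monomial of that ideal has $a$-weighted degree at least $\tau$. The Peano remainder composed with $\phi(t)=O(|t|)$ is $o(|t|^{\tau-1})$, so uniqueness of Taylor expansions kills the $(\tau-1)$-jet of $g$.

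What each route buys:
\begin{itemize}
\item Your argument is shorter and needs no auxiliary variables. It also makes explicit the main-text remark that $\tau$ is the least weighted degree in $I_{\W,F}$.
\item The lift gives a bound on a full lifted neighborhood, not just along one curve.
\item More importantly, the lift reduces weighted statements to unweighted ones. The nonnegative scalar $2\tau$ barrier then follows by rerunning the ordinary Newton-vertex argument verbatim on $h\circ\Gamma$.
\item Your approach would need a weighted analogue for the scalar case: a limit of $t^{-w}h_F(t^{a_1}x_1,\ldots,t^{a_m}x_m)$ giving a nonnegative weighted-homogeneous form.
\end{itemize}

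One small remark: the ``potential obstacle'' you raise is not real. Larger $a_s$ only make $\|\phi(t)\|$ smaller, so a remainder that is $o(\|\phi(t)\|^{\tau-1})$ is automatically $o(|t|^{\tau-1})$. That is all your argument uses.
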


The proof lifts edge $e_s$ to $a_s$ auxiliary coordinates whose product activates the edge, then applies the local coordinate-subspace jet lemma underlying Theorem~\ref{thm:vector}; Appendix~\ref{app:weighted} gives details.

The bound is sharp.  To see this directly, let $\mathcal C_d$ be the set of all
simple directed cycles and define
\begin{equation}
H_{\rm cyc}(\W)=\left(\prod_{(i,j)\in C}W_{ij}\right)_{C\in\mathcal C_d}.
\label{eq:cycle-vector}
\end{equation}
This polynomial vector is zero exactly on DAG support.  A minimal $q$-edge completion produces a component of order exactly $q$, and a minimum-weight completion produces order $\tau$.  Its representation size is
\begin{equation}
|\mathcal C_d|=\sum_{k=2}^d \binom{d}{k}(k-1)!.
\label{eq:cycle-count}
\end{equation}
Equation~\eqref{eq:cycle-vector} gives an explicit sharp representation, but
Theorem~\ref{thm:vector} does not require a representation to enumerate cycles.

Theorem~\ref{thm:jet-ideal} is the finite-jet form of the classical
Stanley--Reisner correspondence \citep{miller2005combinatorial}.
The same geometry also limits succinct vector residuals: uniform
$q$th-order sensitivity on an explicit channel family requires
$\Theta(d^2)$ outputs for fixed $q$.  This dimension statement concerns
uniform directional sharpness, not convergence of an arbitrary
lower-dimensional map; Appendix~\ref{sec:structural-consequences} states and
proves the near-tight bound.

\section{Nonnegative Scalar Representations}

The scalar case contains one additional restriction.  A vector-valued leading
term may change sign, whereas the leading Taylor polynomial of a nonnegative
scalar must itself be nonnegative.  This parity requirement doubles the first
possible order.

\begin{lemma}[Newton vertices]
\label{lem:newton}
Every vertex exponent of the Newton polytope of a globally nonnegative real polynomial is coordinatewise even, and its coefficient is positive \citep{reznick1978extremal}.
\end{lemma}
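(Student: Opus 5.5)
The plan is to reduce the statement to a univariate fact by restricting the polynomial to rays in a direction that singles out the vertex. Let $p(x)=\sum_{\beta\in A}c_\beta x^\beta$ be nonnegative on $\R^m$, with support $A$ and Newton polytope $N(p)=\operatorname{conv}(A)$, and let $\beta^\ast$ be a vertex of $N(p)$. Since $\beta^\ast$ is a vertex, we choose a linear functional $w\in\R^m$ with $\langle w,\beta^\ast\rangle>\langle w,\beta\rangle$ for every $\beta\in A\setminus\{\beta^\ast\}$. Perturbing $w$ slightly keeps this strict inequality, so we may take $w\in\mathbb Q^m$ and then clear denominators to get $w\in\mathbb Z^m$. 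Vertex exponents lie in $A$, so $c_{\beta^\ast}\neq0$.

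\textbf{Coefficient sign.} For a sign vector $\sigma\in\{\pm1\}^m$ and $t>0$, substitute $x_s=\sigma_s t^{w_s}$. This is a legitimate real point, even when $w_s<0$, because $t>0$. The result is
\[
p(\sigma t^{w})=\sum_{\beta\in A}c_\beta\,\sigma^\beta\,t^{\langle w,\beta\rangle}.
\]
As $t\to\infty$, the term with the largest exponent dominates, and that term is uniquely $\beta^\ast$. Nonnegativity of $p$ therefore forces $c_{\beta^\ast}\sigma^{\beta^\ast}\ge0$. The sum is a nonzero generalized polynomial in $t$, so the leading coefficient must be strictly positive. Taking $\sigma=(1,\ldots,1)$ gives $c_{\beta^\ast}>0$.

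\textbf{Parity.} Suppose some coordinate $\beta^\ast_s$ were odd. Flip only $\sigma_s$ to $-1$. This changes the sign of $\sigma^{\beta^\ast}$ and makes the leading coefficient negative. Then $p(\sigma t^{w})<0$ for large $t$, which contradicts global nonnegativity. Hence every coordinate of $\beta^\ast$ is even.

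The only delicate step is producing a strictly separating functional that is integral and whose maximizer is unique over the whole support, not just over the other vertices. I would handle this with the standard fact that the normal cone of a vertex of a polytope has nonempty interior. That interior contains rational points, and any interior point separates $\beta^\ast$ strictly from all of $N(p)\setminus\{\beta^\ast\}$, which includes every other point of $A$. No step uses anything specific to the paper; the lemma is the classical result cited from \citet{reznick1978extremal}. In the next result it will be applied to the leading homogeneous Taylor part of $h_F$.
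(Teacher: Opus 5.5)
Your proposal is correct and is essentially the paper's proof. Both use a functional \(w\) that uniquely exposes the vertex, evaluate on the signed monomial curve \(x_s=\sigma_s t^{w_s}\), flip one sign to force even exponents, and take \(\sigma=\mathbf 1\) for positivity; the paper sends \(t\downarrow0\) with the reversed inequality instead of \(t\to\infty\), which is the same argument. Your rational/integral perturbation of \(w\) is harmless but unnecessary: for \(t>0\) the powers \(t^{w_s}\) are defined for real \(w_s\), as you yourself observe.
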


\begin{theorem}[Nonnegative scalar barrier]
\label{thm:scalar}
Assume Eq.~\eqref{eq:scalar-exact}, let $q=q_\W(F)<\infty$, and suppose $h$ is $C^{2q-1}$ near $\W$.  Then
\begin{equation}
D^\alpha h_F(0)=0\qquad\text{for every }|\alpha|\le2q-1.
\label{eq:scalar-jet}
\end{equation}
If $h$ is $C^{2q}$, then
\begin{equation}
h(\W+t\U)=O(|t|^{2q}),
\qquad \norm{D h_F(tu)}=O(|t|^{2q-1}).
\label{eq:scalar-order}
\end{equation}
\end{theorem}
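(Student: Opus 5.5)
The plan is to combine the ideal membership of Theorem~\ref{thm:jet-ideal} with the parity constraint of Lemma~\ref{lem:newton}, applied to the lowest-degree homogeneous part of the Taylor polynomial of $h_F$. Since $h$ is $C^{2q-1}$, let $T(x)=\sum_{|\alpha|\le 2q-1} D^\alpha h_F(0)x^\alpha/\alpha!$ be its Taylor polynomial of order $2q-1$ at zero.

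\emph{Step 1: surviving monomials.} By Theorem~\ref{thm:jet-ideal} with $k=2q-1$, every monomial $x^\alpha$ that appears in $T$ with nonzero coefficient lies in $I_{\W,F}$. The argument in that proof is coefficientwise: if $\supp(\alpha)\in\Delta_{\W,F}$, then $D^\alpha h_F(0)=0$. Since $I_{\W,F}$ is a monomial ideal, each surviving $\alpha$ therefore has $\supp(\alpha)\supseteq C$ for some minimal nonface $C$, so $|\supp(\alpha)|\ge q$.

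\emph{Step 2: the leading form is nonnegative.} Suppose, for contradiction, that $T\not\equiv0$. Let $p\le 2q-1$ be the least degree with a nonzero coefficient, and let $P$ be the homogeneous degree-$p$ part of $T$. For fixed $x\in\R^m$ and small $t>0$, the point $tx$ lies in $V_F$, and Taylor's theorem gives
\[
h_F(tx)=t^pP(x)+o(t^p).
\]
Nonnegativity of $h$ then gives $P(x)=\lim_{t\downarrow0}t^{-p}h_F(tx)\ge0$. Thus $P$ is a globally nonnegative real polynomial.

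\emph{Step 3: parity forces degree at least $2q$.} Take a vertex $v$ of the Newton polytope of $P$. By Lemma~\ref{lem:newton}, every coordinate of $v$ is even and its coefficient is nonzero. By Step~1, $|\supp(v)|\ge q$. Each coordinate of $v$ in its support is even and positive, hence at least $2$. Since $P$ is homogeneous,
\[
p=|v|\ge 2|\supp(v)|\ge 2q,
\]
which contradicts $p\le2q-1$. Hence $T\equiv0$, which proves Eq.~\eqref{eq:scalar-jet}.

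\emph{Step 4: orders.} If $h$ is $C^{2q}$, Taylor's theorem with Eq.~\eqref{eq:scalar-jet} gives $h(\W+t\U)=h_F(tu)=O(|t|^{2q})$. For the gradient, each partial derivative $\partial_s h_F$ is $C^{2q-1}$. Its Taylor coefficients through order $2q-2$ are coefficients $D^{\alpha+e_s}h_F(0)$ with $|\alpha+e_s|\le 2q-1$, and these vanish by Eq.~\eqref{eq:scalar-jet}. Hence $\norm{Dh_F(tu)}=O(|t|^{2q-1})$.

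\emph{Main obstacle.} I expect Step~2 to be the delicate point. Nonnegativity holds only on the open set $V_F$, and only finite differentiability is available, so the argument must use a Taylor polynomial of order exactly $2q-1$ and pass to the leading form through the homogeneous scaling limit, rather than invoking analyticity. The limit is legitimate because the remainder is $o(|tx|^{2q-1})$, which is $o(t^p)$ for fixed $x$ since $p\le 2q-1$.
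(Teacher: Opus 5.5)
Your proof is correct and follows the paper's argument step for step. The paper also takes the first nonzero homogeneous Taylor form, gets its global nonnegativity from the scaling limit $t^{-r}h_F(tx)\to P_r(x)$, uses exactness on acyclic coordinate subspaces to force every monomial to involve at least $q$ candidates, applies the Newton-vertex parity lemma to get $r\ge 2q$, and then obtains the order bounds from Taylor's remainder applied to $h_F$ and $Dh_F$.
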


\begin{proof}
Suppose the first nonzero homogeneous Taylor term $P_r$ has degree $r<2q$.  Nonnegativity of $h$ makes $P_r$ globally nonnegative.  Exactness on acyclic coordinate subspaces implies that every monomial of $P_r$ uses at least $q$ distinct candidates.  Choose a vertex exponent $\alpha$ of its Newton polytope.  Lemma~\ref{lem:newton} makes every nonzero $\alpha_s$ at least two, so
$r=|\alpha|\ge2|\supp(\alpha)|\ge2q$, a contradiction.
\end{proof}

\begin{corollary}[Newton-square geometry]
\label{cor:newton-square}
Let $P_r$ be the first nonzero homogeneous Taylor form of $h_F$.  Every
vertex exponent $\alpha$ of its Newton polytope satisfies
$x^\alpha\in I_{\W,F}^2$.  Hence, for every positive weight vector $a$,
the minimum $a$-weight at a Newton vertex is at least
$2\tau_\W(F;a)$.
\end{corollary}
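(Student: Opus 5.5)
We combine the two ingredients already used for Theorem~\ref{thm:scalar}: the jet-ideal membership of Theorem~\ref{thm:jet-ideal} and the parity constraint of Lemma~\ref{lem:newton}. The plan is to show that a Newton vertex $\alpha$ of $P_r$ has the form $\alpha=2\beta$ with $x^\beta\in I_{\W,F}$. From this, $x^\alpha=(x^\beta)^2\in I_{\W,F}^2$ follows immediately. The weighted bound then reduces to evaluating $a$-weights of monomials in $I_{\W,F}$.

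First, we establish evenness and nonnegativity. Since $h\ge0$ and $P_r$ is the lowest-order nonzero homogeneous Taylor form, $P_r(x)=\lim_{t\downarrow0}t^{-r}h_F(tx)\ge0$ for every $x\in\R^m$. So $P_r$ is a globally nonnegative polynomial. Lemma~\ref{lem:newton} then says every vertex exponent $\alpha$ of its Newton polytope is coordinatewise even, with a positive coefficient. We write $\alpha=2\beta$ with $\beta\in\mathbb N^m$ and $\supp(\beta)=\supp(\alpha)$.

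Second, we establish membership. The coefficient of $x^\alpha$ is nonzero because $\alpha$ is a vertex. By Theorem~\ref{thm:jet-ideal}, the monomial $x^\alpha$ lies in the squarefree monomial ideal $I_{\W,F}$. A monomial lies in a squarefree monomial ideal if and only if its support contains some minimal nonface $C\in\mathcal C_{\rm min}(\W,F)$. Hence $C\subseteq\supp(\alpha)=\supp(\beta)$, so $x^C$ divides $x^\beta$ and $x^\beta\in I_{\W,F}$. This gives $x^\alpha=x^\beta\cdot x^\beta\in I_{\W,F}^2$.

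Third, we derive the weighted inequality. For a positive weight vector $a$, write $\langle a,\alpha\rangle=2\langle a,\beta\rangle$. Because $C\subseteq\supp(\beta)$ and each $\beta_s\ge1$ on its support,
\[
\langle a,\beta\rangle\ \ge\ \sum_{s\in C}a_s\ \ge\ \tau_\W(F;a).
\]
The last inequality holds because $\supp(\W)\cup\{e_s:s\in C\}$ is cyclic, so $C$ is admissible in Eq.~\eqref{eq:tau}. Minimizing over vertices gives a bound of at least $2\tau_\W(F;a)$.

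The only step needing care is the first. We must justify that the leading form inherits global nonnegativity under the stated smoothness, which requires $h$ to be $C^r$ so that the Taylor limit makes sense. We also need that the Newton vertices of $P_r$ are exactly exponents with nonzero coefficients, which holds by definition of the Newton polytope. Everything else is combinatorics of squarefree monomial ideals.
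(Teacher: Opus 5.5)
Your proof is correct and follows the paper's argument. Theorem~\ref{thm:jet-ideal} puts every monomial of $P_r$ in $I_{\W,F}$, Lemma~\ref{lem:newton} makes a vertex exponent even, $\alpha=2\beta$, and because $\supp(\beta)=\supp(\alpha)$ contains a minimal nonface, $x^\beta\in I_{\W,F}$ and hence $x^\alpha\in I_{\W,F}^2$. Beyond the paper's terse proof, you also spell out two steps it leaves implicit: the global nonnegativity of $P_r$ (taken from the proof of Theorem~\ref{thm:scalar}), and the weighted bound $\langle a,\alpha\rangle=2\langle a,\beta\rangle\ge2\sum_{s\in C}a_s\ge2\tau_\W(F;a)$.
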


\begin{proof}
Theorem~\ref{thm:jet-ideal} gives $P_r\in I_{\W,F}$.
Lemma~\ref{lem:newton} makes $\alpha$ even, while
$\supp(\alpha/2)=\supp(\alpha)$ is a nonface.  Thus
$x^{\alpha/2}\in I_{\W,F}$ and $x^\alpha\in I_{\W,F}^2$.
\end{proof}

For $a\in\mathbb N^m$, the same auxiliary-coordinate argument gives a weighted
$2\tau_\W(F;a)$ barrier with the required regularity made explicit.  If $h$ is
$C^{2\tau-1}$ near $\W$, every derivative of $h(\W_a(t))$ below order $2\tau$
vanishes; if $h$ is $C^{2\tau}$, then
$h(\W_a(t))=O(|t|^{2\tau})$.  For one minimal completion, a stronger local
factorization holds:
\begin{equation}
h_F(x)=\left(\prod_{s=1}^q x_s^2\right)\Psi(x),
\qquad \Psi(x)\ge0\quad\text{locally},
\label{eq:factorization}
\end{equation}
under $C^{2q}$ regularity.  The positive walk families used by standard
continuous methods attain this lower bound.

\begin{corollary}[Local order of positive walk constraints]
\label{cor:positive-walk-order}
Let $F$ be a minimal $q$-edge completion at a DAG $W$ and let $U$ activate
every edge in $F$ with a nonzero equal-scale coefficient.  Then NOTEARS EXP,
the usual positive trace polynomial, and DAGMA's log determinant while
$\rho(W\circ W)<s$ satisfy
\begin{equation}
h(W+tU)=\Theta(|t|^{2q}),\qquad
\norm{D h_F(tu)}_2=\Theta(|t|^{2q-1}).
\label{eq:positive-walk-order}
\end{equation}
The same conclusion holds for every convergent positive walk sum with a
positive coefficient at each simple-cycle length, and for smooth entrywise
squashes with a positive quadratic leading term.
\end{corollary}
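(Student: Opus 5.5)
The upper bounds $h(W+tU)=O(|t|^{2q})$ and $\norm{Dh_F(tu)}=O(|t|^{2q-1})$ follow directly from Theorem~\ref{thm:scalar}, since each listed constraint is smooth (real-analytic) on its domain and nonnegative exact. What remains is to show the matching lower bounds. The plan is to compute the leading coefficient of each constraint explicitly along the ray. All of these constraints are functions of $A(t)=(W+tU)\circ(W+tU)$, a nonnegative matrix. Write $A(t)=A_0+t\,B_1+t^2B_2$ with $A_0=W\circ W$, $B_1=2\,W\circ U$ and $B_2=U\circ U$. Because $F$ consists of absent edges, $W\circ U=0$ on $F$, so $B_1=0$ and $A(t)=A_0+t^2B_2$. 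Here $B_2\ge0$ is supported on $F$ with $(B_2)_e=c^2>0$ for each $e\in F$, where $c$ is the common equal-scale coefficient.

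Each constraint has the form $h=\sum_{k\ge2}\beta_k\tr(A^k)$ (up to constants), with $\beta_k>0$ at every simple-cycle length. For DAGMA, $-\log\det(sI-A)+d\log s=\sum_k \tr(A^k)/(k s^k)$, which converges while $\rho(A)<s$. The quantity $\tr(A^k)$ sums weights of closed walks of length $k$ in the support of $A_0+t^2B_2$. Each closed walk is a product of nonnegative entries, so there is no cancellation. A closed walk contributes only if its edge set contains a cycle in $\supp(W)\cup F$. Every such cycle uses at least $q$ edges of $F$, because $\supp(W)$ is acyclic and $q$ is the completion number. Each $F$-edge traversal carries a factor $t^2$, so every nonzero term is $O(t^{2q})$.

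For the lower bound, minimality of $F$ means $\supp(W)\cup F$ contains a cycle $C$ using all $q$ edges of $F$, each exactly once. Its length $\ell=|C|$ is a simple-cycle length, and its contribution to $\beta_\ell\tr(A^\ell)$ is $\beta_\ell\,\ell\,c^{2q}\prod_{e\in C\setminus F}W_e^2\,t^{2q}>0$. Since all other contributions are nonnegative, the coefficient of $t^{2q}$ is positive. Convergence of the series, and hence uniform control of the remainder, follows from $\rho(A(t))<\rho(A_0)+o(1)=o(1)<s$ for the EXP and DAGMA cases; note that $\rho(A_0)=0$ because $A_0$ is nilpotent. This gives $h(W+tU)=\kappa t^{2q}+O(t^{2q+2})$ with $\kappa>0$.

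The gradient bound is the step I expect to be hardest, since $\Theta$ for $\norm{Dh_F(tu)}$ needs a nonvanishing leading term. I would use the identity $\frac{d}{dt}h(W+tU)=Dh_F(tu)\cdot u$ to get $2q\kappa t^{2q-1}+O(t^{2q+1})$. Combined with Cauchy--Schwarz, this gives $\norm{Dh_F(tu)}\ge |Dh_F(tu)\cdot u|/\norm{u}\gtrsim |t|^{2q-1}$. The matching upper bound comes from Theorem~\ref{thm:scalar}.

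For an entrywise squash $\phi$ with $\phi(x)=bx^2+O(x^3)$ and $b>0$, the matrix $A$ becomes $\phi(W+tU)$. On $F$ its entries are $bc^2t^2(1+O(t))$, and its entries on $\supp(W)$ are fixed positive constants, assuming $\phi>0$ away from zero. The same walk-counting argument then applies, with $c^{2q}$ replaced by $b^qc^{2q}$. The general positive walk sum is handled by the identical argument with arbitrary coefficients $\beta_k>0$.
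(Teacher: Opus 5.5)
Your proposal is correct and follows essentially the paper's argument (Proposition~\ref{prop:walk-sharp}). A simple cycle through all $q$ completion edges gives a positive $t^{2q}$ walk term while every other walk term is nonnegative, and Theorem~\ref{thm:scalar} supplies the upper bounds. Your directional-derivative/Cauchy--Schwarz step is the paper's Euler-identity argument $\langle\nabla P_{2q}(u),u\rangle=2qP_{2q}(u)$ written along the ray. The only adjustment is that, if the nonzero coefficients of $U$ on $F$ differ, $c^{2q}$ should read $\prod_{e\in F}c_e^2$; nothing else changes.
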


The proof is Proposition~\ref{prop:walk-sharp} in Appendix~\ref{app:scalar}.
For DAGMA, $-\log\det(sI-A)+d\log s=\sum_{k\ge1}
\tr(A^k)/(ks^k)$ inside its M-matrix domain.  The log-determinant can therefore
improve coefficients and the landscape away from the boundary without
changing the topology-controlled order at the boundary itself.

\paragraph{Consequences for optimization.}
The corollary does not say that all smooth constraints optimize equally well.
Coefficient size and conditioning still matter, and the log-determinant path
used by DAGMA can improve both.  It says only that these changes do not remove
the local support-boundary order.  In practice, exact support may instead be
selected after optimization by thresholding
\citep{ng2024sober}, or enforced through absolute-value reformulations,
higher-order local search, acyclic parameterizations, discrete graph
operations, or boundary-constrained domains
\citep{wei2020fears,shridharan2025beta,yu2021nocurl,massidda2024cosmo,gillot2022large,rey2026nonnegative}.
Appendix~\ref{app:closest-work} compares these assumptions, and
Appendix~\ref{app:scalar} gives the scalar proofs.

\section{From Constraint Symmetry to the Score--Topology Crossover}
\label{sec:universality}

The far-field construction uses a small ordinary-coordinate gadget.  For
each $i\in[m]$, introduce nodes $a_i,b_i$, a candidate edge
$a_i\to b_i$, and fixed cross-edges $b_i\to a_j$ for every $i\ne j$.
The fixed graph is acyclic, while any two active candidates form a directed
four-cycle.  We take $t$ disjoint copies.  Appendix
\ref{sec:structural-consequences} shows that the same reduction has a broader
topological consequence: local candidate sections can realize any finite
homotopy type and illustrates the construction.  That result is not needed
for the selection theorem below.

Call $h$ \emph{square-symmetric} if
\begin{equation}
h(\W)=\Phi(\W\circ\W),
\qquad h(P\W P^\top)=h(\W)
\label{eq:square-symmetry}
\end{equation}
for every permutation matrix $P$.  NOTEARS, positive trace polynomials, and
DAGMA are square-symmetric on their domains.

\begin{theorem}[Far-field support-selection blindness]
\label{thm:symmetry-blindness}
For every integer $m\ge2$ and integer $t\ge1$, there is a base DAG on
$d=2mt$ nodes with $mt$ independent candidate entries and
\begin{equation}
\binom{m}{2}^{t}
\label{eq:blind-manifold-count}
\end{equation}
disjoint positive cyclic support manifolds.  Let $h:\Omega\to\R$ be a $C^1$
square-symmetric exact scalar, and suppose that $\Omega$ contains a
neighborhood of these manifolds.  On each manifold,
\begin{equation}
\nabla_{x_b}h(\W_0+x)\in\operatorname{span}\{x_b\},
\qquad b=1,\ldots,t,
\label{eq:block-radial-gradient}
\end{equation}
so the constraint gives zero first-order signal in every within-block
edge-redistribution direction.

If $\nabla h$ is locally Lipschitz, every scalarized flow
$\dot x=-\omega(x)\nabla h(\W_0+x)$ with locally Lipschitz scalar $\omega$,
when initialized on one of these manifolds, preserves it on the solution's
maximal interval of existence.  Positive active coefficients cannot reach
zero at a finite time in that interval, so the support remains cyclic whenever
the solution exists.  With $m=5$ there are
$10^t=2^{0.332\ldots d}$ such manifolds, and every active coefficient may be
fixed at $1/2$.  The resulting manifolds lie in the domains of NOTEARS and
DAGMA with $s=1$.
\end{theorem}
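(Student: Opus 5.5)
The plan is to build the gadget explicitly and read every claim off a symmetry group that fixes each manifold pointwise. In block $b$ the nodes are $a_i^{(b)},b_i^{(b)}$, the candidate edges are $a_i\to b_i$, and the cross-edges $b_i\to a_j$ ($i\ne j$) are fixed at a common weight $c>0$; $\W_0$ is the disjoint union of the $t$ blocks. Every cross-edge goes from the $b$-layer to the $a$-layer, so $\W_0$ is bipartite-layered and hence acyclic. A cycle must alternate between candidate and cross edges, so it needs at least two distinct active candidates $i\ne j$ in one block, and any such pair gives $a_i\to b_i\to a_j\to b_j\to a_i$. For each block choose a pair $\{i_b,j_b\}$; this gives $\binom m2^t$ choices. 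The manifold attached to a choice is the set of $x$ whose block-$b$ entries are $x_{i_b}=x_{j_b}>0$, with all other candidates zero. A separate count of $\binom m2^t$ manifolds is all the statement requires, but these sets are pairwise disjoint because their supports differ, and every point of them is cyclic.

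The key step is the symmetry. For block $b$, let $P$ be the node permutation that swaps $a_{i_b}\leftrightarrow a_{j_b}$ and $b_{i_b}\leftrightarrow b_{j_b}$ and fixes all other nodes. It maps candidates to candidates, since $(a_i\to b_i)\mapsto(a_{\pi i}\to b_{\pi i})$, and cross-edges to cross-edges, since $i\ne j$ implies $\pi i\ne\pi j$ and the weights are all equal. So $P(\W_0+x)P^\top=\W_0+\sigma x$, where $\sigma$ swaps coordinates $i_b,j_b$. Square-symmetry then gives $h(\W_0+\sigma x)=h(\W_0+x)$, and differentiating, $\nabla h(\W_0+x)=\sigma\nabla h(\W_0+\sigma x)$. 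At a manifold point we have $\sigma x=x$, so the two active gradient components in block $b$ are equal.

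For the inactive coordinates $k$ and the sign structure I use $h=\Phi(\W\circ\W)$. The map $x_k\mapsto h$ is even, so when $\Phi$ is $C^1$ the chain rule gives $\partial_{x_k}h=2x_k\,\partial\Phi=0$ at $x_k=0$. The caveat is that the statement only assumes $h$ is $C^1$, not $\Phi$. I would handle this using evenness alone: $x_k\mapsto h$ is a $C^1$ even function, so its derivative at $0$ vanishes. The gradient restricted to block $b$ is therefore $(g,g)$ on the active pair and $0$ elsewhere. This equals $(g/x_{i_b})\,x_b$ because $x_{i_b}=x_{j_b}$, which is exactly \eqref{eq:block-radial-gradient}. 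Redistribution directions $v\perp x_b$ within the block then get zero directional derivative.

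For the flow, I note that the manifold is a relatively open subset of the linear subspace $L$ given by $x_{i_b}=x_{j_b}$ and all other block coordinates zero. The vector field $-\omega\nabla h$ is locally Lipschitz and, by the previous step, is tangent to $L$ at all points of $L$ with positive active coordinates; by the same evenness argument it is tangent to $L$ even where the active coordinates are zero or negative. I will invoke Picard--Lindelöf uniqueness: the solution of the ODE restricted to $L$ is also a solution of the full ODE, so the two coincide and the trajectory stays in $L$ on the maximal interval. Within $L$, each block reduces to the scalar ODE $\dot y=-\omega\,g(y)$ with $g(y)=y\,\phi(y)$, where the evenness and $C^1$ structure from tangency give $g(0)=0$. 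The point $y=0$ is an equilibrium, so uniqueness prevents $y$ from reaching $0$ in finite time. I expect this to be the main obstacle. It needs $g$ to vanish at $y=0$ together with Lipschitz uniqueness, and the case where $\omega$ changes sign must also be handled; the uniqueness argument covers it. The active coefficients therefore stay positive, and the support remains cyclic.

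Finally, take $m=5$, so that $\binom52=10$ and $d=10t$, giving $10^t=2^{d\log_2 10/10}=2^{0.332\ldots d}$. Fix the actives at $1/2$ and choose $c$ small, say $c=1/2$. In the squared matrix $A=\W\circ\W$ each block has spectral radius at most $\sqrt{(1/4)(1/4)}$ times a combinatorial factor, and I will check that $\rho(A)<1$, which places the manifolds in DAGMA's domain with $s=1$. NOTEARS is globally defined, so no domain condition arises there.
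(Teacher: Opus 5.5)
Your proposal is correct and takes essentially the paper's approach. Evenness of $x_k\mapsto h$ kills the inactive derivatives, and the swap $a_{i_b}\leftrightarrow a_{j_b}$, $b_{i_b}\leftrightarrow b_{j_b}$ ties the active pair; your explicit equal-cross-weight requirement is what makes this a weighted automorphism, which the paper leaves implicit. Uniqueness then gives invariance, and your equilibrium argument plays the role of the paper's Gronwall bound $a_b(t)\ge a_b(0)e^{-Lt}$. Note that the blocks are coupled through $h$ and $\omega$, so the amplitude equation is really $\dot a_b=F_b(a)$ with $F_b(0,a_{-b})=0$, not a separate scalar ODE per block; the argument only needs that. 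The DAGMA check you deferred is immediate: the only cyclic strongly connected component in each block is the four-cycle with squared weights $1/4,c^2,1/4,c^2$, so $\rho(W\circ W)=c/2<1$. The paper's computation corresponds to $c=1$, giving $1/2$.
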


An automorphism ties the two active derivatives, while evenness zeros every
inactive derivative.  The flow claim then follows from uniqueness and
Gronwall's inequality.

\begin{proposition}[Full-cycle scalarization and score perturbation]
\label{prop:symmetry-time}
Let \(W(z)\) be supported on an isolated directed \(L\)-cycle with positive
edge weights \(z=(z_1,\ldots,z_L)\).  Suppose
\[
h(W(z))=\phi(p),\qquad p=\prod_{i=1}^Lz_i^2,\qquad
\phi(0)=0,\quad\phi'(0)>0,
\]
where \(\phi\) is \(C^1\) and \(\phi'>0\) on the relevant range.  Let
\(\Psi\) be \(C^1\), with \(\Psi'>0\) on that range and
\(\Psi'(s)=c s^\nu(1+o(1))\) as \(s\downarrow0\), where
\(c>0\) and \(\nu\ge0\).  Under
\(\dot z=-\nabla_z(\Psi\circ h)\), initialize two tracked edges as
\((x_0,y_0)=(a+\varepsilon,a-\varepsilon)\) and initialize the remaining
edges at \(b_j>a\).  Put
\[
C=\prod_{j=1}^{L-2}(b_j^2-a^2),\qquad
I_\nu(r)=\int_{r^2/(1-r^2)}^\infty
          [u(u+1)]^{-(\nu+1)}\,du .
\]
For fixed \(0<r<1\), let \(T_r(\varepsilon)\) be the first time \(y/x=r\).
Then every difference \(z_i^2-z_j^2\) is conserved and
\begin{equation}
T_r(\varepsilon)\sim
\frac{I_\nu(r)}
{4c[\phi'(0)C]^{\nu+1}(4a\varepsilon)^{2\nu+1}}.
\label{eq:symmetry-time}
\end{equation}
For a separate flow with a smooth score \(S\), allow arbitrary positive
tracked initial values with \(y_0/x_0>r\), and define the total logarithmic
deletion margin
\[
m_{\rm sc}(z)=
\frac{\partial_y[S+\Psi(h)]}{y}
-\frac{\partial_x[S+\Psi(h)]}{x}.
\]
If \(m_{\rm sc}(z)\ge\gamma_{\rm flow}>0\) until \(y/x=r\), then
\begin{equation}
T_r(\varepsilon)\le
\frac1{\gamma_{\rm flow}}\log\frac{y_0}{r x_0},
\label{eq:score-time}
\end{equation}
so a fixed score margin removes the divergence and can overturn an
\(O(\varepsilon)\) reversed initial ranking.  More precisely, add the smooth
edge score \(S_\gamma(z)=\gamma y^2/2\), let
\(T_r(\varepsilon,\gamma)\) be the new hitting time, and write
\(T_0(\varepsilon)=T_r(\varepsilon,0)\).  For any positive schedule
\(\gamma=\gamma(\varepsilon)\),
\begin{equation}
\left.
\begin{array}{ll}
\gamma T_0(\varepsilon)\to0, & \nu>0,\\
\gamma T_0(\varepsilon)\log(1/\varepsilon)\to0, & \nu=0
\end{array}
\right\}
\quad\Longrightarrow\quad
\frac{T_r(\varepsilon,\gamma)}{T_0(\varepsilon)}\to1,
\label{eq:topology-limited-regime}
\end{equation}
whereas, for every \(\nu\ge0\),
\begin{equation}
\gamma T_0(\varepsilon)\to\infty
\quad\Longrightarrow\quad
\frac{T_r(\varepsilon,\gamma)}{T_0(\varepsilon)}\to0.
\label{eq:score-limited-regime}
\end{equation}
Thus, when \(\nu>0\), the two regimes meet at the power scale
\begin{equation}
\gamma_c(\varepsilon)=T_0(\varepsilon)^{-1}
=\Theta(\varepsilon^{2\nu+1}).
\label{eq:score-topology-crossover}
\end{equation}
At \(\nu=0\), the scale
\([T_0(\varepsilon)\log(1/\varepsilon)]^{-1}
=\Theta(\varepsilon/\log(1/\varepsilon))\) is sufficient for asymptotically
negligible score perturbations, while
\(\gamma T_0\to\infty\) is sufficient for score domination.  In general a
logarithmic transition layer lies between these statements.
\end{proposition}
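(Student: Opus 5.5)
The plan is to reduce the gradient flow to a one-dimensional ODE for the common shift. Write \(G=\Psi\circ\phi\), so the flow is \(\dot z=-\nabla_z G(p)\). Since \(\partial_{z_i}p=2p/z_i\), each coordinate satisfies
\[
\tfrac{d}{dt}z_i^2=2z_i\dot z_i=-4G'(p)\,p .
\]
The right-hand side does not depend on \(i\), so every difference \(z_i^2-z_j^2\) is conserved.

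Put \(z_i^2=z_i(0)^2-\sigma\), where \(\sigma(0)=0\) and \(\dot\sigma=4G'(p)p\). Then
\[
x^2=(a+\varepsilon)^2-\sigma,\qquad y^2=(a-\varepsilon)^2-\sigma .
\]
The event \(y/x=r\) becomes \(y^2=r^2x^2\). With the gap \(\delta=x^2-y^2=4a\varepsilon\), this reads
\[
y^2=\frac{r^2}{1-r^2}\,\delta .
\]
It is natural to parametrize by \(u=y^2/\delta\), which decreases from \((a-\varepsilon)^2/\delta\to\infty\) down to \(r^2/(1-r^2)\). Along the way \(x^2y^2=\delta^2u(u+1)\). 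The other edges satisfy \(b_j^2-\sigma\to b_j^2-a^2\) uniformly, because \(\sigma\le(a-\varepsilon)^2\). Hence, on the final stretch,
\[
p=\delta^2u(u+1)\,C\,(1+o(1)).
\]

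Next I convert the hitting time into an integral over \(u\). Since \(\dot u=-\dot\sigma/\delta\),
\[
dt=\frac{-\delta\,du}{4G'(p)\,p}.
\]
Using \(G'(p)=\Psi'(\phi(p))\phi'(p)\) together with \(\phi(p)\sim\phi'(0)p\) and \(\Psi'(s)\sim cs^\nu\), we get \(G'(p)p\sim c\,\phi'(0)^{\nu+1}p^{\nu+1}\). Substituting \(p\) gives
\[
T_r\sim\int\frac{\delta\,du}{4c\,[\phi'(0)C]^{\nu+1}\,\delta^{2\nu+2}\,[u(u+1)]^{\nu+1}},
\]
which is exactly \eqref{eq:symmetry-time} with \(\delta=4a\varepsilon\).

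The main obstacle is justifying the asymptotics uniformly, since \(p\) is not small when \(u\) is of order \(1/\varepsilon\). I would split the time at \(u=K\) with \(K\) large but fixed.
\begin{itemize}
\item \textbf{Early phase} (\(u\ge K\)). Here \(p\ge c'\varepsilon^2K^2\), and \(G'(p)p\) is bounded below by \(c''p^{\nu+1}\) thanks to positivity of \(\phi',\Psi'\) on the compact range. This phase therefore takes time \(O(\varepsilon^{-(2\nu+1)}K^{-(2\nu+1)})\).
\item \textbf{Late phase} (\(u\le K\)). Here the \(o(1)\) asymptotics apply uniformly. This phase contributes \(\int_{r^2/(1-r^2)}^{K}\) of the integrand.
\end{itemize}
Letting \(K\to\infty\) after \(\varepsilon\to0\), the tail of \(I_\nu\) absorbs the early phase.

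For the score bound \eqref{eq:score-time}, I would compute
\[
\frac{d}{dt}\log\frac{y}{x}=\frac{\dot y}{y}-\frac{\dot x}{x}=-m_{\rm sc}\le-\gamma_{\rm flow}.
\]
Integrating from the initial ratio \(y_0/x_0\) down to \(r\) gives the bound immediately.

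For the perturbation \(S_\gamma=\gamma y^2/2\), the coupled flow only modifies the \(y\) equation:
\[
\tfrac{d}{dt}y^2=-4G'(p)p-2\gamma y^2 .
\]
So the gap now satisfies \(\dot\delta=2\gamma y^2\ge0\), and \(\log(y/x)\) gains an extra drift of \(-\gamma\).
\begin{itemize}
\item \textbf{Domination regime.} This drift alone forces \(y/x\) from near \(1\) to \(r\) within time \(O(\varepsilon/\gamma)\) plus a bounded time. Comparing with \(T_0\asymp\varepsilon^{-(2\nu+1)}\) under \(\gamma T_0\to\infty\) gives \eqref{eq:score-limited-regime}. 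I will need to check this more carefully: the ratio needs to move by \(\Theta(1)\), so the time is \(O(1/\gamma)\), and that is \(o(T_0)\) precisely when \(\gamma T_0\to\infty\).
\item \textbf{Negligible regime.} Here I would show that the perturbed gap satisfies \(\delta(t)=\delta_0(1+o(1))\) and that the \(u\)-dynamics are unchanged to leading order up to time \(\sim T_0\). The change in \(\delta\) is \(\int 2\gamma y^2\,dt\). When \(\nu>0\), most of \(T_0\) is spent at \(y^2=O(\delta)\), so the relative change is \(O(\gamma T_0)\). When \(\nu=0\), the integral \(\int y^2\,dt\) picks up a logarithm from the early phase, because there \(dt\propto du/u^2\) and \(y^2\propto u\delta\). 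That gives the \(\gamma T_0\log(1/\varepsilon)\) condition.
\end{itemize}
I expect this Gronwall-type comparison, and especially the logarithmic accounting at \(\nu=0\), to be the delicate step. The crossover statements \eqref{eq:score-topology-crossover} then follow by reading off \(T_0=\Theta(\varepsilon^{-(2\nu+1)})\).
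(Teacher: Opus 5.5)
Your proposal is correct. Outside the negligible-score regime it follows the paper's proof:
\begin{itemize}
\item the same conservation law;
\item the same separated-variable hitting-time integral, where your explicit tail bound above the scaled height $u=K$ replaces the paper's dominated-convergence split at a fixed unscaled $u_0$;
\item the same log-ratio identity for \eqref{eq:score-time};
\item the same domination argument.
\end{itemize}
In the domination step you should state explicitly that $\dot\delta\ge0$ keeps $y<x$. Then the feasibility part $2A(p)(y^{-2}-x^{-2})$ of $m_{\rm sc}$ is nonnegative, so $m_{\rm sc}\ge\gamma$.

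The two routes differ in the negligible regime. The paper rescales by the initial gap $\Delta$ (your $\delta_0$), setting $q=\Delta Q$, $v=\Delta V$, $\tau=\Delta^{2\nu+1}t$, and passes to a limiting planar system with $\eta=\gamma\Delta^{-(2\nu+1)}$. It bounds the drift of $D=Q-V$ only up to an entrance section $Q=M$, and then uses continuous dependence on the compact corridor below that section. You instead bound the total gap drift $\int 2\gamma y^2\,dt$ along the whole trajectory and compare hitting-time integrals directly. The key estimate is the same. Your route is more elementary and gives explicit relative gap drifts: $O(\gamma T_0)$ for $\nu>0$ and $O(\gamma T_0\log(1/\varepsilon))$ for $\nu=0$. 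The paper's route isolates the critical system, whose first integral $V-Q-\frac{\eta}{2}\log Q$ shows that the logarithm at $\nu=0$ cannot be removed; your sketch does not address this sharpness.

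To make your version rigorous, use $x^2$ as the clock, since its equation has no $\gamma$ term. Then $d\delta/d(x^2)=-\gamma y^2/(2A(p))$ with $A(p)=\Psi'(\phi(p))p\phi'(p)\ge k\,p^{\nu+1}$. This bounds the drift without bootstrapping; when $\nu>0$, use $y^2\ge r^2x^2$ before the hit. Dominated convergence in $x^2/\delta_0$ then gives $T_r(\varepsilon,\gamma)/T_0\to1$.

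Your stated reason at $\nu>0$, that most of $T_0$ is spent at $y^2=O(\delta)$, holds equally at $\nu=0$, so it cannot be what separates the cases. What distinguishes them is that $\int^\infty u\,[u(u+1)]^{-(\nu+1)}\,du$ converges exactly when $\nu>0$.
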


The endpoint logarithm is structural rather than an artifact of the proof.
For the critical limiting system, after a constant time rescaling,
\[
Q'=-4QV,\qquad V'=-4QV-2\eta V,
\]
the quantity \(V-Q-(\eta/2)\log Q\) is conserved.  There are schedules with
\(\eta\to0\) but \(\eta\log(1/\Delta)\to\infty\), where
\(\Delta=x_0^2-y_0^2\), for which the normalized hitting time tends to zero.
Thus the implication \(\gamma T_0\to0\Rightarrow T_r/T_0\to1\) is false at
\(\nu=0\) without additional entrance control.

The pathwise quantity \(\gamma_{\rm flow}\) and the completion-exchange margin
\(\kappa_{\mathcal C}\) introduced below measure different objects.  The
former lower-bounds a gradient ratio along one trajectory; the latter is a
discrete score gap between repairs.  Neither bounds the other without
additional assumptions on the score and path.

The conserved squared differences echo balancing invariants for homogeneous
models \citep{du2018algorithmic}.  Here the product is an exact DAG-cycle
restriction; the new consequence is the scalarization-dependent hitting-time
law and its score-margin counterpart, rather than the invariant by itself.
For an \(L\)-cycle,
\(\phi_{\rm EXP}(p)=L\sum_{\ell\ge1}p^\ell/(\ell L)!\), so
\(\phi_{\rm EXP}'(0)=1/(L-1)!\), whereas
\(\phi_{\rm DAGMA}(p)=-\log(1-p)\) and
\(\phi_{\rm DAGMA}'(0)=1\).  Thus the proposition applies exactly to both
constraints.  Linear feasibility and seeded ALM have \(\nu=0\), while a cold
quadratic penalty has \(\nu=1\), yielding the feasibility-only exponents
\(1,3,1\).  The logarithmic endpoint qualification applies to the first and
third scalarizations.

\begin{figure}[t]
    \centering
    \includegraphics[width=0.98\linewidth]{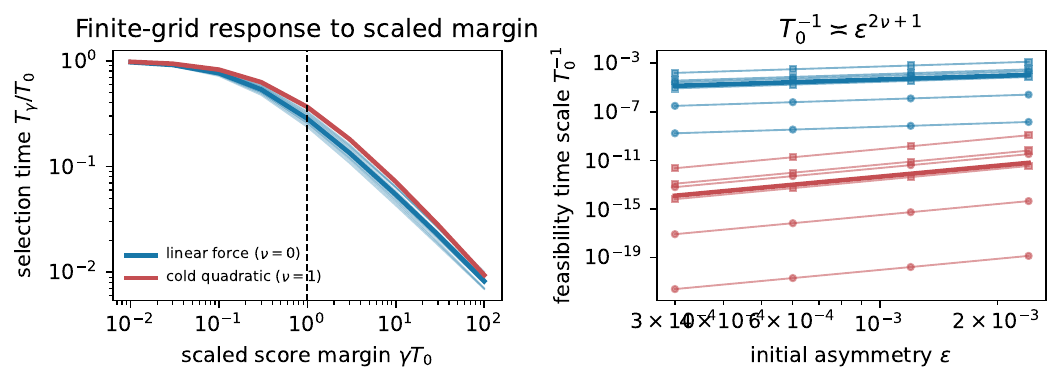}
    \caption{Finite-range score perturbation experiment.  After normalizing
    the margin by the feasibility-only time \(T_0\), 480 controlled
    NOTEARS/DAGMA flows separate around \(\gamma T_0=1\) on the plotted grid
    (left).  The fitted power scale is
    \(T_0^{-1}\asymp\varepsilon^{2\nu+1}\) (right).  This is the asymptotic
    crossover scale for \(\nu>0\); at the linear endpoint \(\nu=0\), the
    theorem contains the additional logarithmic entrance factor.}
    \label{fig:score-topology-phase}
\end{figure}

\paragraph{Numerical checks.}
The frozen grid supports each link in the argument.  Over 72 controlled
cycle settings, the largest exponent error in Eq.~\eqref{eq:symmetry-time} is
0.044, and direct matrix values and gradients agree with the closed form to
\(1.1\times10^{-15}\).  The 480-run phase experiment in Figure
\ref{fig:score-topology-phase} gives $T_\gamma/T_0\in[0.960,0.978]$ at
$\gamma T_0=0.01$, $[0.236,0.365]$ at $1$, and
$[0.0069,0.0102]$ at $100$; over this finite grid, the largest fitted
power-exponent error is 0.034.  These ratios check the normalization and the
finite-range phase change; for \(\nu=0\), they are not evidence for the
invalid fixed-\(\gamma T_0\) endpoint limit.  A population linear-Gaussian
audit selects the false back edge first
in all 216 dense flows, while 91.25\% of 720 finite-sample runs retain the
required score margin.  On 320 disjoint official NOTEARS/DAGMA trajectories,
first-stage separation correlates with selection time at
$-0.521/-0.664$ (Spearman, $p<10^{-4}$).  Appendix
\ref{app:score-topology-flow-audit} gives the full trajectories and Appendix
\ref{app:predictive-bridge} gives held-out errors and boundary cases.

\section{Interaction-Aware Score Certificates}
\label{sec:completion-margin}

Proposition~\ref{prop:symmetry-time} makes support selection depend on a score
margin that is unknown in finite samples.  Certifying that margin with
edge-additive deletion costs would be convenient, but would discard
interactions among parents of the same child.  We retain those interactions by
freezing candidate parent sets \(\mathcal P_j\), refitting each set on training
data, and evaluating bounded losses \(\ell_{j,S}\) on an independent holdout.
Let \(G_0\in\mathcal G(\mathcal P)\) be a reference DAG from any continuous,
constraint-based, or score-based procedure; the audit uses the holdout-score
minimizer.  For a candidate DAG \(G\), let
\[
Q(G)=\sum_j\{R_{j,\operatorname{pa}_G(j)}
                 +a_{j,\operatorname{pa}_G(j)}\}.
\]
Simultaneous paired intervals give, for every local alternative \(S\), a
lower bound \(g_{j,S}\) on its population score difference from the parent
set \(S_{0j}=\operatorname{pa}_{G_0}(j)\).  Put
\begin{align}
\Gamma(G)&=\sum_jg_{j,\operatorname{pa}_G(j)},
&L&=\min_G\Gamma(G),
&L_-&=\min_{G\ne G_0}\Gamma(G),
\label{eq:robust-dag-oracles}\\
\mathcal A_\alpha(G_0)&=
\{G\in\mathcal G(\mathcal P):\Gamma(G)\le0\}.
\label{eq:parent-set-confidence-family}
\end{align}
For a binary graph feature \(\varphi\), define
\begin{equation}
L_\varphi^{\rm opp}=
\min_{G:\varphi(G)\ne\varphi(G_0)}\Gamma(G).
\label{eq:forced-opposite-feature}
\end{equation}

\begin{theorem}[Interaction-aware selective feature certificate]
\label{thm:global-certificate}
Condition on the training sample.  Suppose the holdout observations are
i.i.d. and the paired intervals used to construct \(g_{j,S}\) cover all local
score differences simultaneously with probability at least \(1-\alpha\).
The reference $G_0$ may depend on this holdout.  Then every population
minimizer of \(Q\) belongs to
\(\mathcal A_\alpha(G_0)\), and
\begin{equation}
Q(G_0)-\min_{G\in\mathcal G(\mathcal P)}Q(G)\le-L.
\label{eq:global-regret-certificate}
\end{equation}
If \(L_->0\), \(G_0\) is the unique population minimizer.  If
\(L_\varphi^{\rm opp}>0\), every population minimizer shares the label
\(\varphi(G_0)\).  The feature statement applies to skeleton
adjacencies, directed edges, and unshielded colliders.
\end{theorem}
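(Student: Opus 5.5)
The proof is deterministic once we work on the coverage event. Let \(\mathcal E\) be the event that every paired interval covers its local population score difference. By hypothesis, \(\Pr(\mathcal E\mid\text{training})\ge1-\alpha\). We condition on the training sample, so the candidate families \(\mathcal P_j\), the refitted models and the population local scores \(R_{j,S}+a_{j,S}\) are all fixed.

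The first task is to pin down what coverage means. Write \(\Delta_{j,S}=(R_{j,S}+a_{j,S})-(R_{j,S_{0j}}+a_{j,S_{0j}})\) for the population difference between local alternative \(S\) and the reference parent set. On \(\mathcal E\) we have \(g_{j,S}\le\Delta_{j,S}\) simultaneously for all \(j\) and all \(S\in\mathcal P_j\). Simultaneity is what lets \(G_0\) depend on the holdout. Whatever \(S_{0j}\) turns out to be, coverage must hold uniformly over the pairs it could select. So I would state explicitly that the intervals cover every pairwise difference \(\Delta_{j,S,S'}\) over \(S,S'\in\mathcal P_j\), and that \(g_{j,S}\) is read off at \(S'=S_{0j}\). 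This is the step I expect to be the main obstacle: making precise that data-dependent selection of \(G_0\) costs nothing, because coverage is uniform over all candidate references. The remaining steps are algebra.

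Given coverage, decomposability does the rest. For any \(G\in\mathcal G(\mathcal P)\),
\[Q(G)-Q(G_0)=\sum_j\Delta_{j,\operatorname{pa}_G(j)}\ge\Gamma(G).\]
We also have \(\Gamma(G_0)\le0\), since \(g_{j,S_{0j}}\le\Delta_{j,S_{0j}}=0\); typically the interval for the zero difference gives \(g_{j,S_{0j}}\le0\). The claims then follow in turn.
\begin{itemize}
\item If \(G^\star\) minimizes \(Q\), then \(\Gamma(G^\star)\le Q(G^\star)-Q(G_0)\le0\), so \(G^\star\in\mathcal A_\alpha(G_0)\).
\item The regret bound: \(Q(G_0)-\min_G Q(G)=-(Q(G^\star)-Q(G_0))\le-\Gamma(G^\star)\le-L\).
\item Uniqueness: if \(L_->0\), every \(G\ne G_0\) satisfies \(Q(G)-Q(G_0)\ge\Gamma(G)\ge L_->0\), so \(G_0\) is the unique minimizer.
\item Feature labels: any minimizer \(G^\star\) with \(\varphi(G^\star)\ne\varphi(G_0)\) would satisfy \(0\ge Q(G^\star)-Q(G_0)\ge\Gamma(G^\star)\ge L^{\rm opp}_\varphi>0\), which is a contradiction.
\end{itemize}

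Finally, skeleton adjacencies, directed edges and unshielded colliders are binary functions of the graph \(G\in\mathcal G(\mathcal P)\). So \(L^{\rm opp}_\varphi\) is a well-defined minimum over a finite constrained set, and the feature statement specializes to each of them. All conclusions hold on \(\mathcal E\), hence with conditional probability at least \(1-\alpha\).
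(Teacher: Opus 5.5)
Your proposal is correct and follows essentially the same route as the paper: on the simultaneous ordered-pair coverage event, you sum the local lower bounds by decomposability to get $Q(G)-Q(G_0)\ge\Gamma(G)$, then read off membership in $\mathcal A_\alpha(G_0)$, the regret bound, uniqueness when $L_->0$, and the forced-opposite feature labels. The paper obtains that coverage event from an empirical Bernstein bound plus a union bound over all $M$ ordered pairs, precisely so that a holdout-dependent $G_0$ is allowed. The only cosmetic difference is that the paper defines $g_{j,S_{0j}}=0$, so $\Gamma(G_0)=0$ and $L\le0$ hold by construction rather than through the coverage inequality.
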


The parent-set score retains collider interactions.  Linear rows in the
parent-set integer program force skeleton and unshielded-collider labels, so
one opposite-label solve certifies each feature.  These features characterize
Markov equivalence.  Exact search is combinatorial; for edge-modular scores,
repairs reduce to transversals and a cycle-cover LP gives a polynomial-time
conservative certificate.  A bounded-parent audit runs the exact core and
predeclared feature queries through $d=200$, without claiming worst-case
tractability.  Appendices~\ref{app:completion-margin} and
\ref{app:global-certificate} give the modular specialization and main proof.

\begin{table}[t]
\centering
\scriptsize
\setlength{\tabcolsep}{3.0pt}
\caption{Unified 320-run pipeline with a shared top-$2d$ screen.  ``Oracle
parent'' uses 20,000 independent observations; labels are certified feature
percentages.}
\label{tab:interaction-aware-main}
\begin{tabular}{lrrrrrrr}
\toprule
Frontend & Screen & Front. & Modular & Parent & Oracle & Adj. & Collider \\
 & recall & SHD & SHD & SHD & parent SHD & labels & labels \\
\midrule
NOTEARS & 57.4 & 15.6 & 19.9 & 20.4 & 20.1 & 19.5 & 16.8 \\
DAGMA & 77.6 & 4.5 & 3.6 & 8.5 & 8.6 & 44.5 & 34.6 \\
SDCD & 53.4 & 39.3 & 24.1 & 21.1 & 21.0 & 10.5 & 4.4 \\
GOLEM & 77.2 & 8.1 & 4.0 & 8.5 & 8.5 & 44.4 & 36.3 \\
\bottomrule
\end{tabular}
\end{table}

Every regret bound covers the independent oracle-score audit across the 320
frontend--dataset pairs.  None of 3,042 certified skeleton or 2,396 collider
labels disagrees with its oracle-score optimum.  Parent-set SHD is 14.63,
compared with 14.56 for the 20,000-sample oracle parent-set solution, and has
correlation $-0.886$ with screen recall.  Finite holdout variance therefore does
not explain the weaker DAGMA/GOLEM parent-set results; the candidate family and
score target do.  Increasing the frozen screen from top-$2d$ to top-$6d$ raises
family recall only from $66.4\%$ to $68.5\%$ and worsens oracle-parent SHD from
14.56 to 15.67.  Modular, parent-set, and reported frontend SHDs average 12.91,
14.63, and 16.87.  Against the generating graph, $4.4\%$ of certified skeleton
labels and $5.5\%$ of collider labels disagree.  The certificate quantifies
uncertainty for its score target; it does not establish causal identification.
Appendix~\ref{app:global-certificate} gives stratified results and runtimes.

\section{Implications}

The analysis assigns separate roles to the terms in
Eq.~\eqref{eq:joint-objective}.  A smooth constraint can shape the far-field
landscape and restrict the search to DAGs, but its exactness does not rank
support changes.  When the ranking comes from data, the relevant quantity is a
score margin together with its uncertainty, not acyclicity residual alone.

This distinction leads to a concrete reporting rule.  Freeze a candidate
parent-set family, evaluate its score on independent data, report the skeleton
and collider labels shared by all interval-compatible DAGs, and leave the
remaining labels unresolved.  A modular cycle-cover relaxation exchanges
parent interactions for scalability.  The reference graph may come from
NOTEARS, PC, GES, or another screened search; continuous optimization is one
way to reduce the candidate family, not a source of statistical certification.

The guarantee remains conditional on the chosen screen and score.  It cannot
recover omitted edges, prove observational identifiability, or remove the
combinatorial cost of exact parent-set search.  These limits are part of the
main conclusion: DAG feasibility, score-based support selection, and causal
identification require different arguments.

\subsection*{AI use statement}
In this work, generative AI tools were used to assist with drafting and
editing portions of the manuscript for clarity, grammar, and readability.
They were not used to formulate the scientific claims, develop theoretical
results or proofs, design the methodology or experiments, or interpret the
experimental results. All AI-assisted text was reviewed, verified, and
revised by the authors. We take full responsibility for the final content
of this work.



\appendix
\paragraph{Appendix guide.}
The supplement is organized around the three claims in the main text.
Sections~\ref{sec:structural-consequences}--\ref{app:proofs} develop completion
geometry and its proofs.  The symmetry, crossover, and numerical audits follow,
after which Sections~\ref{app:completion-margin} and
\ref{app:global-certificate} give the two score certificates and their frozen
protocols.

\section{Additional Geometry and Consequences}
\label{sec:structural-consequences}

The main text uses only the completion order and the far-field symmetry
construction.  Two additional consequences describe the output dimension
needed for uniform sharpness and the topology of local candidate sections.

\subsection{Uniform sharpness and residual dimension}

Write $J_q(U)=D^qH(0)[U,\ldots,U]/q!$ for the degree-$q$ Taylor jet.  Call
$H$ \emph{directionwise $q$-sharp} on a family $U(z)$ if
$J_q(U(z))\ne0$ for every $z\ne0$ in that family.  Its
\emph{$q$-sharp residual complexity} $s_q(U)$ is the minimum output dimension
of a $C^q$ map $H:\mathcal Z\to\R^r$ that is globally exact and has this
property.

\begin{theorem}[Near-tight sensitivity--succinctness law]
\label{thm:dimension}
For every $3\le q\le d$, there is an explicit channel family $U_q$ with
\begin{equation}
M_q(d)=\left\lfloor\frac{(d-q+2)^2}{4}\right\rfloor
\label{eq:funnel-dimension}
\end{equation}
independent $q$-cycle channels satisfying
\begin{equation}
M_q(d)\le s_q(U_q)\le M_q(d)+1.
\label{eq:near-tight-dimension}
\end{equation}
For $q=2$, there is a family with one channel per unordered node pair and
\begin{equation}
\binom d2\le s_2(U_2)\le\binom d2+1.
\label{eq:dimension-lower}
\end{equation}
Consequently, $s_q(U_q)=\Theta(d^2)$ for every fixed $q$.  Below either
lower bound, some cyclic mixture has zero degree-$q$ jet; the upper bound is
globally exact and directionwise sharp.
\end{theorem}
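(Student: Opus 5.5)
The plan is to realize each channel as a $q$-cycle that shares a fixed "funnel" path with the other channels and has exactly one free edge. The degree-$q$ jet is then \emph{linear} in the channel parameters, so both bounds reduce to linear algebra plus an explicit polynomial representation.

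\emph{Construction for $q\ge3$.} Set $n=d-q+2$. Split $n$ nodes into sets $A,B$ with $|A|=\lfloor n/2\rfloor$ and $|B|=\lceil n/2\rceil$, so that $|A||B|=\lfloor n^2/4\rfloor=M_q(d)$. The remaining $q-2$ nodes form a path $p_1\to\cdots\to p_{q-2}$. The funnel consists of the edges $b\to p_1$ for all $b\in B$, the path edges, and $p_{q-2}\to a$ for all $a\in A$, all with unit coefficient. For $z\in\R^{A\times B}$ define
\[
U_q(z)=\mathrm{Funnel}+\sum_{(a,b)\in A\times B} z_{ab}\,\E_{ab},
\]
where $\E_{ab}$ is the coordinate matrix of the edge $a\to b$. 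The key combinatorial step is the following. Every directed cycle in the union graph must use some $A\to B$ edge and then the whole funnel to return to $A$. A cycle using one such edge $a\to b$ returns to $a$ and has length exactly $q$; a cycle using two or more has length at least $2q$. Hence, by Theorem~\ref{thm:jet-ideal}, the only degree-$q$ monomials that can appear in the Taylor jet at $0$ are the cycle products $z_{ab}\cdot(\text{funnel entries})$. Evaluating $J_q$ along $U_q(z)$ therefore gives
\[
J_q(U_q(z))=\sum_{a,b} z_{ab}\,v_{ab},\qquad v_{ab}\in\R^r,
\]
where $v_{ab}$ is the coefficient of the cycle monomial $\prod_{e\in C_{ab}}W_e$.

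\emph{Lower bound.} If $r<M_q(d)$, the linear map $z\mapsto\sum_{a,b} z_{ab}v_{ab}$ from $\R^{M_q(d)}$ to $\R^r$ has a nonzero kernel vector $z$. This cyclic mixture has zero degree-$q$ jet, so $H$ is not directionwise $q$-sharp on $U_q$, and hence $s_q(U_q)\ge M_q(d)$. This argument uses only exactness, through Theorem~\ref{thm:jet-ideal}; it does not use the specific form of $H$.

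\emph{Upper bound.} I would take $H$ with $M_q(d)+1$ components: one component $\prod_{e\in C_{ab}}W_e$ for each channel cycle $C_{ab}$, and a final scalar component $\sum_{C\in\mathcal C_d\setminus\{C_{ab}\}}\bigl(\prod_{e\in C}W_e\bigr)^2$. This $H$ is a polynomial, hence $C^q$. It vanishes iff every simple-cycle product vanishes, which holds iff $\W\in\mathcal D$, so $H$ is globally exact. Along $U_q(z)$ the degree-$q$ jet of the channel components is exactly $(z_{ab})_{a,b}$, which is nonzero whenever $z\ne0$. This gives $s_q(U_q)\le M_q(d)+1$.

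\emph{Case $q=2$.} For each unordered pair $\{i,j\}$ I would use the channel $z_{ij}\E_{ij}+\E_{ji}$. Any two-edge cycle is one of these two-cycles, so the jet is again linear in $z$, and the same two arguments give $\binom d2\le s_2(U_2)\le\binom d2+1$. The conclusion $\Theta(d^2)$ for fixed $q$ follows because $M_q(d)$ is of order $(d-q+2)^2/4$ and $\binom d2=\Theta(d^2)$.

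The main obstacle is the cycle census: one must verify that the union support has no cycle of length $\le q$ other than the channel cycles. Otherwise mixed monomials enter the degree-$q$ jet and destroy its linearity in $z$, which both bounds rely on. The funnel design is chosen precisely so that every return to $A$ costs $q-1$ fixed edges, which rules out such short cycles.
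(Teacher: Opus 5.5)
Your proposal is correct and follows the paper's proof essentially step for step. It uses the same layered funnel with $\lfloor n/2\rfloor\lceil n/2\rceil$ reverse channels, the same exactness argument making the degree-$q$ jet linear in $z$ followed by rank--nullity, and the same pair-channel family for $q=2$; the only difference is that your upper bound appends the sum of squared non-channel simple-cycle products where the paper appends $h_{\exp}$, and either choice gives a globally exact map without affecting the channel jet.
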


The lower bound is a rank argument on a layered funnel.  The upper bound
collects the selected cycle products and appends one globally exact scalar.
Section~\ref{app:dimension} gives the construction and proof.  The theorem
requires sharpness in every channel mixture; it is not a convergence lower
bound for arbitrary lower-dimensional residuals.

\subsection{Coordinate topology}

\begin{figure}[ht]
    \centering
    \includegraphics[width=0.63\linewidth]{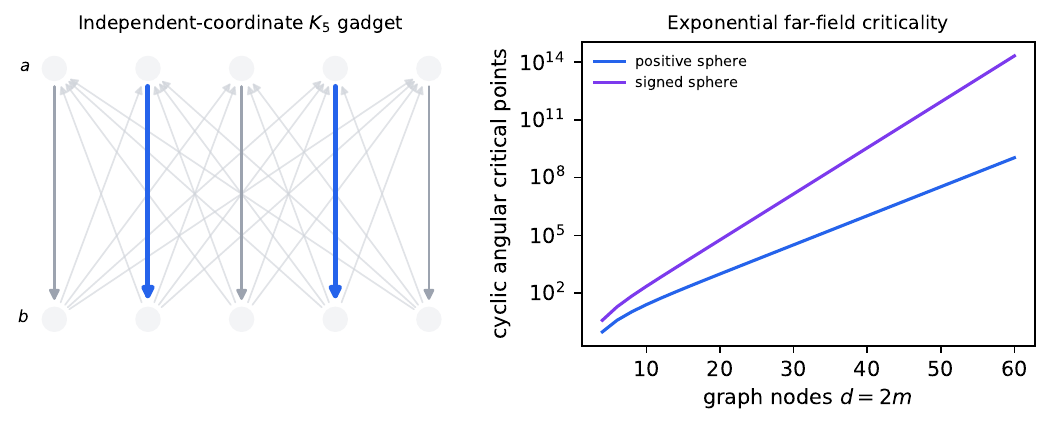}
    \caption{Ordinary-coordinate construction.  A $K_5$ gadget realizes all
    pairwise cyclic candidate supports (left), producing exponentially many
    cyclic angular critical points under disjoint union (right).}
    \label{fig:dag-germ-universality}
\end{figure}

The positive normalized candidate section is
\begin{equation}
\mathcal A^+_{\W,F}
=\left\{x\ge0:\|x\|_2=1,\
\W+\sum_{e\in F}x_e\E_e\in\mathcal D\right\}
\cong|\Delta_{\W,F}|.
\label{eq:positive-dag-section}
\end{equation}

\begin{theorem}[Coordinate DAG-germ universality]
\label{thm:coordinate-universality}
For every finite simplicial complex $K$, there are a base DAG $\W_0$ and
independently parameterized absent adjacency entries $F$ such that
\begin{equation}
\Delta_{\W_0,F}\cong\operatorname{sd}K.
\label{eq:coordinate-universality}
\end{equation}
Consequently $\mathcal A^+_{\W_0,F}$ is homeomorphic to $|K|$.
\end{theorem}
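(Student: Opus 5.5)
The plan is to realize \(\operatorname{sd}K\) as a \emph{flag} complex and then use the pairwise four-cycle gadget of Section~\ref{sec:universality}. Recall that \(\operatorname{sd}K\) is the order complex of the face poset \(P\) of \(K\). Its vertices are the nonempty faces \(\sigma\in K\), and its simplices are the chains of \(P\). Hence \(\operatorname{sd}K\) is the clique complex of the comparability graph of \(P\). Its minimal nonfaces are exactly the incomparable pairs \(\{\sigma,\sigma'\}\). So it suffices to build \(\W_0,F\) whose minimal cyclic candidate sets are exactly these pairs.

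\textbf{Construction.} For each \(\sigma\in P\), introduce two nodes \(a_\sigma,b_\sigma\) and the candidate (absent) entry \(e_\sigma=(a_\sigma,b_\sigma)\). For every ordered pair \(\sigma\ne\sigma'\) of \emph{incomparable} faces, add the fixed edge \(b_\sigma\to a_{\sigma'}\) to \(\W_0\), with weight \(1\), say. This is the gadget of Theorem~\ref{thm:symmetry-blindness}, with the cross-edges restricted to incomparable pairs. The base support is acyclic because every fixed edge goes from a \(b\)-node to an \(a\)-node. The entries \(e_\sigma\) are distinct off-diagonal coordinates absent from \(\supp(\W_0)\), so they are independently parameterized.

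\textbf{Key step: identify \(\Delta_{\W_0,F}\).} Let \(S\subseteq P\) and consider \(\supp(\W_0)\cup\{e_\sigma:\sigma\in S\}\).
\begin{itemize}
\item If \(S\) contains an incomparable pair \(\sigma,\sigma'\), then \(a_\sigma\to b_\sigma\to a_{\sigma'}\to b_{\sigma'}\to a_\sigma\) is a directed cycle, so \(S\notin\Delta_{\W_0,F}\).
\item Conversely, suppose the graph has a directed cycle. The only edges leaving an \(a\)-node are candidates \(a_\sigma\to b_\sigma\), and the only edges leaving a \(b\)-node are fixed edges \(b_\sigma\to a_{\sigma'}\). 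The cycle therefore alternates, reading \(a_{\sigma_1}\to b_{\sigma_1}\to a_{\sigma_2}\to b_{\sigma_2}\to\cdots\to a_{\sigma_1}\), with every \(\sigma_i\in S\). It has length at least \(4\), since there are no self-loops and fixed edges join only distinct incomparable faces. The existence of the fixed edge \(b_{\sigma_1}\to a_{\sigma_2}\) means \(\sigma_1,\sigma_2\) are incomparable, and both lie in \(S\).
\end{itemize}
Hence \(S\in\Delta_{\W_0,F}\) if and only if \(S\) is a chain of \(P\). This gives \(\Delta_{\W_0,F}=\operatorname{sd}K\) on the vertex set \(P\), which is Eq.~\eqref{eq:coordinate-universality}.

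\textbf{Topological consequence.} The identification in Eq.~\eqref{eq:positive-dag-section} needs justification. A point \(x\ge0\) with \(\|x\|_2=1\) lies in \(\mathcal A^+_{\W_0,F}\) exactly when \(\supp(x)\in\Delta_{\W_0,F}\), because acyclicity depends only on the support. The map \(x\mapsto x/\|x\|_1\) is continuous with continuous inverse \(y\mapsto y/\|y\|_2\). It carries the positive part of the unit sphere homeomorphically onto the standard simplex and preserves supports. It therefore restricts to a homeomorphism from \(\mathcal A^+_{\W_0,F}\) onto the geometric realization \(|\Delta_{\W_0,F}|\), viewed as the union of the simplex faces spanned by faces of \(\Delta\). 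Combining this with \(|\operatorname{sd}K|\cong|K|\) gives \(\mathcal A^+_{\W_0,F}\cong|K|\).

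The only nontrivial point is the converse direction in the key step: showing that every cycle already contains a \emph{pair} that closes a cycle. This is what makes \(\Delta_{\W_0,F}\) flag, and it is where we use that \(\operatorname{sd}K\), rather than \(K\) itself, is realized. The bipartite alternation argument handles it.
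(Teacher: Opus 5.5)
Your proof is correct and follows essentially the same route as the paper: realize $\operatorname{sd}K$ as the flag (independence) complex of the incomparability graph on the face poset, implement each incomparable pair with the $a_\sigma\to b_\sigma\to a_{\sigma'}\to b_{\sigma'}\to a_\sigma$ four-cycle gadget, and use the same bipartite alternation argument for the converse. Your explicit homeomorphism $x\mapsto x/\|x\|_1$ simply spells out the paper's one-line appeal to radial projection.
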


The construction represents the vertices of an independence complex by
candidate adjacency entries.  This differs from the full acyclic-subgraph
complex, whose homotopy type is restricted \citep{hultman2004directed}:
conditioning on a base DAG and then selecting candidate coordinates changes
the local topology.

\begin{corollary}[Exponential local homology]
\label{cor:exponential-homology}
There is a family with $d=10t$ graph nodes and $5t$ independent candidates
such that
\begin{equation}
\Delta_{\W_t,F_t}\simeq\bigvee_{4^t}S^{t-1},
\qquad
\sum_j\widetilde\beta_j(\Delta_{\W_t,F_t})
=4^t=2^{d/5}.
\label{eq:exponential-homology}
\end{equation}
\end{corollary}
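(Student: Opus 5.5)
The plan is to reuse the ordinary-coordinate gadget of Section~\ref{sec:universality} with $m=5$ and to compute $\Delta_{\W_t,F_t}$ directly. The general universality of Theorem~\ref{thm:coordinate-universality} is not needed. A single gadget has nodes $a_1,\dots,a_5,b_1,\dots,b_5$, that is, $10$ nodes. It has fixed cross-edges $b_i\to a_j$ for all $i\ne j$ and five candidate entries $a_i\to b_i$. Taking $t$ disjoint copies gives $\W_t$ on $d=10t$ nodes with $|F_t|=5t$ independent candidates, as required.

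\textbf{Step 1: one block.} I first verify that the fixed graph is acyclic: every fixed edge goes from some $b$ to some $a$, so no fixed path has length two. Next, adding a single candidate $a_i\to b_i$ keeps the support acyclic. Any walk through $a_i\to b_i$ continues by $b_i\to a_j$ with $j\ne i$, and $a_j$ has no outgoing edge unless $a_j\to b_j$ is also active. Finally, two active candidates $a_i\to b_i$ and $a_j\to b_j$ close the four-cycle
\[
a_i\to b_i\to a_j\to b_j\to a_i .
\]
Hence the complex for one block is exactly five isolated vertices. As a space this is $\simeq\bigvee_4 S^0$.

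\textbf{Step 2: disjoint union gives a join.} The copies share no nodes and no edges, so every cycle lies inside one copy. A candidate subset is therefore acyclic if and only if its restriction to each copy is acyclic. This identifies $\Delta_{\W_t,F_t}$ with the simplicial join of the $t$ single-block complexes, that is, the join of $t$ copies of the five-point discrete complex. This identification should be routine; the only point to check is that the faces are exactly the unions of one face per block.

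\textbf{Step 3: homotopy type of the join.} This is the only step with real content, though it is standard. I will use the following facts:
\begin{itemize}
\item for reasonable spaces, $X*Y\simeq\Sigma(X\wedge Y)$;
\item the smash product distributes over wedges;
\item $S^a\wedge S^b=S^{a+b}$.
\end{itemize}
Together these give
\[
\Bigl(\bigvee_{p}S^{a}\Bigr)*\Bigl(\bigvee_{q}S^{b}\Bigr)\simeq\bigvee_{pq}S^{a+b+1}.
\]
Inducting on $t$ yields $\Delta_{\W_t,F_t}\simeq\bigvee_{4^t}S^{t-1}$. Alternatively, I can note that the join of $t$ discrete sets of sizes $n_i$ is a $(t-1)$-dimensional complex. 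It is homotopy equivalent to a wedge of $(t-1)$-spheres, being a join of $0$-dimensional complexes, hence $(t-2)$-connected and $(t-1)$-dimensional. Its reduced Euler characteristic is multiplicative under joins up to sign, which gives the count $\prod_i(n_i-1)=4^t$.

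The reduced Betti numbers are then concentrated in degree $t-1$, with total $4^t=2^{2t}=2^{d/5}$, which proves Eq.~\eqref{eq:exponential-homology}.
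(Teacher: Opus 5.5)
Your proof is correct and follows the paper's route: the same $K_5$ gadget in $t$ node-disjoint copies, and the identification of $\Delta_{\W_t,F_t}$ with the $t$-fold join of the five-point complex $\operatorname{Ind}(K_5)$. The paper reaches this by specializing its independence-complex construction to $H_t=\bigsqcup_{b=1}^t K_5$, whereas you check the block complex directly.

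The only real difference is the last step. The paper computes reduced homology with the join K\"unneth formula and then asserts the wedge decomposition. Your $X*Y\simeq\Sigma(X\wedge Y)$ induction, or your $(t-2)$-connectivity plus reduced Euler characteristic argument, gives the homotopy equivalence directly. That is the more complete justification of the $\simeq$ claim.
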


\begin{corollary}[Exponential angular criticality]
\label{cor:angular-criticality}
On one $K_m$ gadget, the positive and signed candidate unit spheres contain at
least $2^m-m-1$ and $3^m-2m-1$ distinct cyclic critical points, respectively.
All corresponding DAGMA points lie in its $s=1$ M-matrix domain.
\end{corollary}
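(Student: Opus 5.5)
The plan is to make the $K_m$ gadget explicit and list its cyclic critical points by hand, using symmetry of the constraint. The gadget has nodes $a_i,b_i$ for $i\in[m]$, candidate edges $a_i\to b_i$, and fixed cross-edges $b_i\to a_j$ for $i\ne j$. Any two active candidates close a directed four-cycle, so a candidate support $S\subseteq[m]$ is cyclic exactly when $|S|\ge2$.

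\emph{Step 1: the positive candidates.} For each $S$ with $|S|\ge2$, take the equal-weight point $x_S=|S|^{-1/2}\mathbf 1_S$ on the positive unit sphere. There are $2^m-m-1$ such supports, and distinct supports give distinct points. Each $x_S$ is cyclic because $|S|\ge2$.

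\emph{Step 2: criticality.} I use the argument behind Theorem~\ref{thm:symmetry-blindness}. Let $P$ be the node permutation that permutes the pairs $(a_i,b_i)$, $i\in S$, among themselves. Conjugation by $P$ preserves the base DAG and fixes $x_S$. Square-symmetry (Eq.~\eqref{eq:square-symmetry}) then forces $\partial_{x_i}h$ to be equal for all $i\in S$. Because $h(\W)=\Phi(\W\circ\W)$, each partial derivative is $2x_i\,\partial\Phi$, so every inactive derivative vanishes. Hence $\nabla_x h\in\operatorname{span}\{x_S\}$, the tangential gradient on the sphere is zero, and $x_S$ is a critical point of the restricted constraint. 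The same conclusion follows from the \emph{principle of symmetric criticality}: $x_S$ is isolated in the fixed-point set of the symmetry group generated by the permutations of $S$, by the sign flips $x_j\mapsto-x_j$ for $j\notin S$ (valid by evenness), and by the flips on $S$ taken jointly with the permutations.

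\emph{Step 3: signed candidates.} Here I use points $x=|S|^{-1/2}\sigma$, where $\sigma\in\{\pm1\}^S$ and $|S|\ge2$. Criticality follows from evenness in each coordinate, since $h$ depends only on $x\circ x$, so I reduce to the positive case. The subtle part is counting. Sign patterns that differ only by a global sign flip could be identified, and a naive count of $\sum_{k\ge2}\binom mk2^k=3^m-2m-1$ counts every $\sigma$. That sum equals $3^m-1-2m$ exactly, matching the claimed number. So distinct signed points are counted without quotienting, and they are distinct points on the sphere because their coordinates differ.

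\emph{Step 4: DAGMA domain.} With all supports in the gadget, I bound the spectral radius of $\W\circ\W$. The fixed cross-edges carry some weight $w$ and the candidates carry $x_i^2\le1$. Every cycle alternates candidate and cross edges, so $A=\W\circ\W$ has cycle products at most $(w^2)^k$ times the candidate squares. Choosing $w^2$ small, for example cross weights $1/2$ as in the main theorem with $\|x\|_2=1$, keeps $\rho(A)<1$. This follows from a norm bound through the bipartite structure: $\rho(A)^2\le\rho(BC)\le\|B\|\|C\|$, where $B$ is the diagonal candidate block with entries at most $1$ and $C$ is the cross block with norm $(m-1)w^2$. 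Taking $w^2<1/(m-1)$ gives $\rho<1=s$, so $sI-A$ is an M-matrix.

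The main obstacle is Step 4, the domain claim. The paper fixes coefficients at $1/2$, so I would choose the fixed cross weights scaled by $m$ to make the spectral bound hold. I would then verify that criticality is unaffected, since it uses only symmetry and not the values of the weights.
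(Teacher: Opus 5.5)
Your Steps 1--3 are essentially the paper's proof. You use the same equal-weight points $k^{-1/2}\mathbf 1_S$ and the pair-permuting automorphism to tie the active derivatives. You use evenness of $h(\W)=\Phi(\W\circ\W)$ both to kill the inactive derivatives and to transport criticality to every sign pattern. Your counts $\sum_{k\ge2}\binom mk=2^m-m-1$ and $\sum_{k\ge2}\binom mk2^k=3^m-2m-1$ also match. One aside is off. A global sign flip on $S$ does not fix $x_S$, so it cannot belong to the isotropy group in your symmetric-criticality remark. This does not matter, because the direct tie-plus-evenness argument already suffices.

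The genuine difference is Step 4.

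\emph{The paper's route.} The paper keeps the unit cross weights used throughout, as in the proof of Theorem~\ref{thm:symmetry-blindness}, and computes the spectrum exactly. In $A=\W\circ\W$, the rows of the $a$-nodes vanish off $S$. Hence the $a$-block of $A^2$ is block triangular, and its nonzero spectrum is that of the active block $k^{-1}(J_k-I_k)$. This gives $\rho(A)=\sqrt{(k-1)/k}<1$ for every $m$, with no change to the gadget. Signed points have the same $A$.

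\emph{Your route.} Your bound $\rho(A)^2\le\|B\|\,\|C\|$ discards two things: the normalization $x_i^2=1/k$, and the fact that only the $k\times k$ active block matters. That is why you are forced to shrink the cross weights to $w^2<1/(m-1)$. This is still a valid proof of the corollary read as an existence statement, since criticality uses only the uniform symmetry and the support. It even buys a bound that holds uniformly over the whole candidate sphere, not just at the symmetric points.

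\emph{What your route costs.} It proves the claim for an $m$-dependent rescaled gadget, not the unit-weight gadget the paper uses elsewhere. The ``main obstacle'' you identify is an artifact of the crude bound. Your parenthetical ``cross weights $1/2$ as in the main theorem'' also conflates the candidate coefficient $1/2$ with the cross weight. Cross weights $1/2$ would satisfy $w^2<1/(m-1)$ only for $m\le4$.
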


The proof uses the fact that every finite complex becomes an independence
complex after barycentric subdivision \citep{ehrenborg2006topology}; the DAG
contribution is an ordinary-adjacency realization of that complex.

\subsection{Metric consequences}
\label{sec:metric-consequences}

Derivative order also controls how reliably a residual measures distance to
the DAG set.  Write
\begin{equation}
\mathcal D_F=\left\{x\in V_F:
\W+\sum_sx_s\E_s\in\mathcal D\right\}.
\end{equation}
A local H\"older error bound with exponent \(\theta>0\) is an inequality
\begin{equation}
\operatorname{dist}(x,\mathcal D_F)
\le \kappa\norm{R_F(x)}^\theta
\label{eq:holder-error-bound}
\end{equation}
for all \(x\) near zero and some \(\kappa>0\); this is also called H\"older
metric subregularity \citep{li2012holder}.

\begin{theorem}[Sharp topological error-bound exponents]
\label{thm:holder-barrier}
Suppose \(q_\W(F)<\infty\) and let \(r=r_\W(F)\).  For any \(C^r\)
signed/vector exact residual \(H_F\), every exponent in
Eq.~\eqref{eq:holder-error-bound} satisfies \(\theta\le1/r\).  For any
\(C^{2r}\) nonnegative scalar exact residual \(h_F\), every exponent satisfies
\(\theta\le1/(2r)\).  Both bounds are attained: the simple-cycle product
vector has optimal exponent \(1/r\), while the positive walk constraints in
Corollary~\ref{cor:positive-walk-order} have optimal exponent \(1/(2r)\).
Consequently, squaring these sharp residuals gives optimal exponents
\(1/(2r)\) and \(1/(4r)\).
\end{theorem}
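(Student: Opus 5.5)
The plan has two parts. First we prove the upper bounds on \(\theta\) by testing the error bound along a curve that enters the cyclic region through a widest minimal nonface. Then we prove attainment with a Łojasiewicz-type lower bound for the explicit residuals.

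\emph{Upper bound.} Choose \(C\in\mathcal C_{\rm min}(\W,F)\) with \(|C|=r\) and set \(x(t)=t\,\mathbf 1_C\) for \(t>0\). Every proper subset of \(C\) is a face, so deleting one coordinate of \(C\) gives an acyclic point. Hence
\[
\operatorname{dist}(x(t),\mathcal D_F)\le t.
\]
For the reverse inequality \(\operatorname{dist}(x(t),\mathcal D_F)\ge ct\), note that any point of \(\mathcal D_F\) within distance \(t/2\) of \(x(t)\) has all \(C\)-coordinates nonzero. Its support therefore contains the nonface \(C\) and is cyclic, a contradiction. Thus the distance is \(\Theta(t)\).

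Next we bound the residual along this curve. Restrict \(H_F\) to the coordinate subspace \(\R^C\). There, every coordinate face omitting one index lies in the zero set. Applying the coordinate-subspace jet argument of Theorem~\ref{thm:jet-ideal} inside \(\R^C\), every Taylor monomial of order below \(r\) misses some index of \(C\) and so vanishes. This gives \(\|H_F(x(t))\|=O(t^r)\) under \(C^r\) regularity. (The completion number inside \(\R^C\) is exactly \(r\), so Theorem~\ref{thm:vector} applies verbatim with \(F\) replaced by \(C\).) Likewise Theorem~\ref{thm:scalar} applied to \(C\) gives \(h_F(x(t))=O(t^{2r})\).

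Plugging into Eq.~\eqref{eq:holder-error-bound} gives \(ct\le\kappa' t^{r\theta}\) as \(t\downarrow0\), which forces \(\theta\le1/r\). The scalar case gives \(\theta\le1/(2r)\) in the same way. Squaring a residual doubles its order along every curve, so the squared versions have exponents at most \(1/(2r)\) and \(1/(4r)\). Attainment for the squared versions follows from attainment below, since \(\|R\|^{\theta}=(\|R\|^2)^{\theta/2}\).

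\emph{Attainment for the cycle-product vector.} Near zero, the components of \(H_{\rm cyc}\) restricted to \(F\) are, up to factors from fixed nonzero base weights of \(\W\), the monomials \(x^S\) for cyclic completions \(S\). Since the base weights are bounded away from zero locally, \(\|H_F(x)\|\gtrsim\max_{C\in\mathcal C_{\rm min}}|x^C|\). Products over nonminimal cyclic sets contain a minimal one and are dominated by it near zero.

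We claim that \(\operatorname{dist}(x,\mathcal D_F)\le\sqrt m\,\min_{s}|x_s|\) for a suitable hitting set. Zeroing one coordinate per minimal nonface lands in \(\Delta_{\W,F}\). Greedily, let \(C_1\) be a minimal nonface whose support lies in \(\supp x\) and zero its smallest coordinate, then repeat. Each step zeros a coordinate with \(|x_s|\le |x^{C_i}|^{1/|C_i|}\le |x^{C_i}|^{1/r}\) for \(|x|\le1\). Hence
\[
\operatorname{dist}\le \sqrt m\max_C|x^C|^{1/r}\lesssim\|H_F(x)\|^{1/r}.
\]
The one subtlety is that the greedy product should be measured at the original \(x\). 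Zeroing coordinates never increases the others, so this is fine.

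\emph{Attainment for positive walk constraints.} This is the main obstacle. We need the lower bound \(h_F(x)\gtrsim\sum_{C\text{ cyclic}}x^{2C}\) uniformly near zero, where \(x^{2C}=\prod_{s\in C}x_s^2\). We plan to prove it by expanding the positive walk sum \(\tr\sum_k c_k (A)^k\) with \(A=(\W+X)\circ(\W+X)\) entrywise nonnegative. Every term is nonnegative, and each simple cycle \(\sigma\) using candidate set \(S\) and base edges contributes \(c_{|\sigma|}\prod_{e\in\sigma}A_e\ge c\,x^{2S}\). Here the base weights are bounded below and the coefficient at each simple-cycle length is positive, as in Corollary~\ref{cor:positive-walk-order}. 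Dropping the other nonnegative terms gives \(h_F\ge c\max_C x^{2C}\), and the greedy argument above then yields exponent \(1/(2r)\).

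For DAGMA we use its series expansion inside the M-matrix domain to reduce to the same form. For smooth squashes we need the positive quadratic leading term to give \(\sigma(x_s)\ge c x_s^2\) locally, and this step would require checking the appendix's exact hypotheses.
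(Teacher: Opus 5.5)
Your proposal is correct and follows essentially the paper's argument. For necessity you use the test curve $t\mathbf 1_{C}$ through a largest minimal completion together with the vector and scalar barriers on $\R^{C}$; for attainment you use a hitting set that zeroes a smallest coordinate of each minimal completion, the min--geometric-mean bound, and domination of the generators by cycle products and of their squares by positive walk sums, exactly as in Proposition~\ref{prop:general-holder}. Your greedy hitting set yields the constant $\sqrt m$ instead of the paper's $\sqrt N$, which is immaterial.
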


\begin{proof}[Proof sketch]
A largest minimal completion gives necessity along \(t\mathbf1_C\).
Conversely, zeroing one smallest coordinate from each minimal completion gives
a hitting set and
\(\operatorname{dist}(x,\mathcal D_F)\le
\sqrt{|\mathcal C_{\rm min}|}\norm{M_{\W,F}(x)}_2^{1/r}\).
Cycle products dominate \(M_{\W,F}\) and positive walk scalars dominate its
squared norm; see Proposition~\ref{prop:general-holder}.
\end{proof}

Two further invariants describe the size of neighborhoods rather than their
worst direction.  Let
\begin{align}
b_\W(F)
&=m-\max_{S\in\Delta_{\W,F}}|S|, \nonumber\\
\lambda_\W(F)
&=\min_{\substack{z\ge0\\\sum_{s\in C}z_s\ge1,\ 
 C\in\mathcal C_{\rm min}}}\sum_{s=1}^m z_s .
\label{eq:volume-invariants}
\end{align}
Thus \(b\) is the minimum integral completion cover (the height of
\(I_{\W,F}\)), while \(\lambda\) is its fractional relaxation.

\begin{theorem}[Completion-volume geometry]
\label{thm:volume-geometry}
For a sufficiently small box \(B_\rho=[-\rho,\rho]^m\) and
\(0\le\varepsilon\le\rho\), the normalized \(\ell_\infty\) tube volume is
exactly
\begin{equation}
\sum_{S\in\Delta_{\W,F}}
(1-\varepsilon/\rho)^{|S|}(\varepsilon/\rho)^{m-|S|};
\label{eq:tube-main}
\end{equation}
its small-\(\varepsilon\) exponent is \(b\).  If
\(\mathcal V_R(\eta)=
\operatorname{vol}\{x\in B_\rho:\|R(x)\|\le\eta\}\), then
\begin{equation}
\lim_{\eta\downarrow0}\frac{\log\mathcal V_{H_{\rm cyc}}(\eta)}
{\log\eta}=\lambda,\qquad
\lim_{\eta\downarrow0}\frac{\log\mathcal V_{h_f}(\eta)}
{\log\eta}=\frac{\lambda}{2}
\label{eq:volume-main}
\end{equation}
for the simple-cycle vector and every locally convergent positive-walk scalar
\(h_f\).  Moreover, \(b/r\le\lambda\le b\).
\end{theorem}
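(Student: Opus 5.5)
The plan has three parts: an exact tube-volume computation, a polyhedral sublevel-volume estimate for the two residuals, and a covering-LP argument for the inequalities relating \(b\), \(\lambda\) and \(r\).

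\emph{Tube volume.} Take \(\rho\) small enough that \(B_\rho\subset V_F\). Then \(\mathcal D_F\cap B_\rho\) is the union of the coordinate faces \(\{x\in B_\rho:\supp(x)\in\Delta_{\W,F}\}\). In \(\ell_\infty\) distance, \(x\) lies within \(\varepsilon\) of this set exactly when \(S=\{s:|x_s|>\varepsilon\}\) belongs to \(\Delta_{\W,F}\). The reason is that the nearest face point zeroes a set of coordinates, that set can always be enlarged to contain all coordinates of size at most \(\varepsilon\), and \(\Delta_{\W,F}\) is closed under subsets. Partitioning the box by \(S\), the normalized volume of the cell indexed by \(S\) is \((1-\varepsilon/\rho)^{|S|}(\varepsilon/\rho)^{m-|S|}\), which gives Eq.~\eqref{eq:tube-main}. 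As \(\varepsilon\downarrow0\), the dominant terms are the faces of maximum size, so the exponent is \(m-\max|S|=b\).

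\emph{Sublevel volumes.} First reduce both residuals to the generator vector \(M_{\W,F}\). For the simple-cycle vector, each component is a monomial \(x^C\) times fixed base weights. Every cycle through \(F\) contains a minimal nonface, and every minimal nonface is realized by some cycle whose product has the same candidate monomial. Hence on the box \(\norm{H_{\rm cyc}}\asymp\max_C|x^C|\) up to constants. For \(h_f\), the local comparison from Proposition~\ref{prop:general-holder} gives \(h_f\asymp\norm{M_{\W,F}}_2^2\) near zero; this is the same two-sided domination used in the proof of Theorem~\ref{thm:holder-barrier}. Consequently \(\mathcal V_{h_f}(\eta)\asymp\mathcal V_{M}(\eta^{1/2})\), and the factor \(1/2\) follows once the first limit is known. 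It therefore suffices to compute the volume of \(\{\max_C|x^C|\le\eta\}\).

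Substitute \(|x_s|=\rho\eta^{z_s}\) with \(z_s\ge0\), so that \(dx_s\asymp\eta^{z_s}\log(1/\eta)\,dz_s\). The set becomes, up to additive constants, the polyhedron \(\{z\ge0:\sum_{s\in C}z_s\ge1\ \forall C\}\), and the integrand is \(\eta^{\sum z_s}(\log 1/\eta)^m\). A Laplace-type estimate then shows that the volume equals \(\eta^{\lambda}\) times polylogarithmic factors, where \(\lambda\) is the minimum of the linear functional \(\sum z_s\) over the covering polyhedron. The polylogarithmic factors vanish in \(\log\mathcal V/\log\eta\).

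This Laplace step is the main obstacle. The integral has to be bounded carefully on both sides, since the constants perturb the polyhedron by \(O(1/\log(1/\eta))\). For the upper bound I would use a finite dyadic decomposition in \(z\); for the lower bound I would use a small cube around an LP optimizer.

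\emph{Inequalities.} The bound \(\lambda\le b\) holds because the indicator of a minimum hitting set is feasible for the LP, and a minimum hitting set has size \(b\): complements of maximal faces are exactly the minimal hitting sets of \(\mathcal C_{\rm min}\). For \(b/r\le\lambda\), I would use the hitting set constructed in the proof of Theorem~\ref{thm:holder-barrier}, but with LP weights. Take an optimal \(z\) and set \(T=\{s:z_s\ge1/r\}\). Each \(C\) has \(|C|\le r\) and \(\sum_{s\in C}z_s\ge1\), so some \(s\in C\) has \(z_s\ge1/r\). Thus \(T\) hits every minimal nonface, and \(b\le|T|\le r\sum z_s=r\lambda\).
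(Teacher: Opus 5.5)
Your three-part plan follows the paper's proof closely. It uses the same threshold-set description of the \(\ell_\infty\) tube, the same logarithmic substitution turning \(\{\|M_{\W,F}\|\le\eta\}\) into a perturbed fractional-cover polyhedron, and the same hitting set \(T=\{s:z_s\ge1/r\}\) for \(b\le r\lambda\). Your two-sided comparison for the cycle vector is also fine.

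There is, however, a genuine gap in the reduction for \(h_f\). Proposition~\ref{prop:general-holder} and the proof of Theorem~\ref{thm:holder-barrier} supply only the lower comparison \(h_f\ge c_2\|M_{\W,F}\|_2^2\). The H\"older argument never needs an upper one, and none is proved there. The lower comparison gives \(\mathcal V_{h_f}(\eta)\le\mathcal V_M(\sqrt{\eta/c_2})\), hence only \(\liminf_{\eta\downarrow0}\log\mathcal V_{h_f}(\eta)/\log\eta\ge\lambda/2\). The matching bound requires \(\mathcal V_{h_f}(\eta)\ge\mathcal V_M(\sqrt{\eta/C_2})\), that is, \(h_f\le C_2\|M_{\W,F}\|_\infty^2\) on the whole box, and this must be proved.

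The idea is as follows. A closed walk with nonzero restricted monomial uses only edges of \(\supp(\W)\cup F\) and contains a directed cycle. Since \(\W\) is a DAG, its candidate edges contain some minimal completion \(C\), each traversed at least once with a squared weight, so the monomial is divisible by \((x^C)^2\). Because \(h_f\) is an infinite series, you also need a uniform bound on the divided series. Assign each walk to one such \(C\). For \(\|x\|_\infty\le\rho\le1\), every quotient has nonnegative exponents, so it is at most its value at the candidate point \(\rho\mathbf 1\) divided by \(\rho^{2|C|}\). Summing gives \(h_f\le\rho^{-2r}\,h_f(\rho\mathbf 1)\,\|M_{\W,F}\|_\infty^2\), provided the nonnegative series converges on the closed box. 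The paper arranges this convergence with a slightly larger polydisc.

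By contrast, the Laplace step you flag as the main obstacle is short. Work on the perturbed polyhedron \(\{z\ge0:\sum_{s\in C}z_s\ge1-|C|\log(1/\rho)/L\}\), with \(L=\log(1/\eta)\). Dividing \(z\) by \(1-r\log(1/\rho)/L\) lands in the covering polyhedron, so \(\sum_sz_s\ge\lambda-O(1/L)\) there. The integral of \(e^{-L\sum_sz_s}\) is then dominated by a Gamma tail of order \(e^{-\lambda L}\,\mathrm{poly}(L)\), and no dyadic decomposition is needed. The paper obtains the same inequality from dual multipliers \(y_C\), in the variables \(u=Lz\), which are i.i.d.\ unit exponentials.

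For the lower bound, use the upward orthant \(\{z\ge z^\star\}\). It lies in the upward-closed polyhedron and integrates to exactly \(L^{-m}e^{-\lambda L}\). A cube centred at \(z^\star\) can leave the feasible set, because \(z^\star\) sits on tight constraints and may have zero coordinates.
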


The logarithmic substitution \(|x_s|=e^{-u_s}\) turns a generator sublevel set
into the fractional-cover inequalities
\(\sum_{s\in C}u_s\ge\log(1/\eta)\).  LP primal and dual solutions give
matching exponential bounds; Section~\ref{app:completion-volume} also proves
the required residual comparisons.  The formula for \(\lambda\) is the
classical monomial-ideal threshold \citep{howald2001multiplier}; its
completion-graph interpretation and separation from \(q,r,b\) are specific
to this setting.

Thus a smooth nonnegative exact scalar is never Lipschitz metrically
subregular at such a support boundary, and a vector residual is not Lipschitz
metrically subregular when \(r\ge2\).  It also makes finite smooth penalties
non-exact against first-order score descent
(Corollary~\ref{cor:finite-penalty}).

Smooth scalarization adds a second loss of sensitivity.  On paths where a
vector residual is \(q\)-sharp and a nonnegative scalar residual has exact
order \(2q\), the common squared-penalty constructions obey
\begin{equation}
\underbrace{H}_{q}
\quad\longrightarrow\quad
\underbrace{\tfrac12\norm{H}^2}_{2q}
\quad\text{and}\quad
\underbrace{h}_{2q}
\quad\longrightarrow\quad
\underbrace{\tfrac12 h^2}_{4q}.
\label{eq:hierarchy}
\end{equation}
The subscripts are exact orders under sharpness and lower bounds otherwise.
Increasing an augmented-Lagrangian penalty changes coefficients, not these
missing derivatives; a residual-sized multiplier also leaves a cold start at
the squared order.  Section~\ref{app:numerical-consequences} gives the full
sensitivity--smoothness frontier, ALM statements, and finite-precision
calculations.

\subsection{Statistical consequences}
\label{sec:inference}

The derivative orders in the main paper also determine the null scale of a
residual evaluated at a noisy estimator.  For fixed \(d\), let \(F\) contain
all coordinates absent from a fixed DAG \(W\), and suppose
\begin{equation}
\sqrt n(\widehat W_n-W)\ \Rightarrow\ Z
\label{eq:root-n-estimator}
\end{equation}
for a nondegenerate Gaussian matrix \(Z\).  This assumption is deliberately
modular: it can arise from randomized interventions, temporal information, or
another model that identifies the directed parameter.

\begin{theorem}[Topology-indexed delta law]
\label{thm:statistical-scaling}
Let \(q=q_W(F)<\infty\) and consider a locally convergent positive-walk
constraint
\begin{equation}
h_f(A)=\sum_{k\ge1}c_k\tr((A\circ A)^k),\qquad c_k>0.
\end{equation}
Then
\begin{equation}
n^q h_f(\widehat W_n)\Rightarrow
P_{W,q}(Z_F),\qquad
P_{W,q}(z)=
\sum_{\substack{C\in\mathcal C_{\rm min}(W,F)\\|C|=q}}
a_C\prod_{e\in C}z_e^2,
\label{eq:scalar-null-law}
\end{equation}
where every \(a_C\) is positive.  Under
\(W_n=W+U/\sqrt n\), the limit is \(P_{W,q}(Z_F+U_F)\).
\end{theorem}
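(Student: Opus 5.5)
Since \(F\) contains every coordinate absent from \(W\), we have \(\widehat W_n=W+\widehat\Delta_n\), where the perturbation \(\widehat\Delta_n\) has a component on \(\supp(W)\) and a component \(\widehat x_n\) on \(F\). The plan has three steps. First, show that \(h_f\) restricted to a neighborhood of \(W\) is a function \(g(y,x)\), with \(y\) the on-support coordinates and \(x\in\R^{|F|}\) the absent ones, whose lowest-order part in \(x\) is the form \(P_{W,q}\). Second, confirm that the on-support perturbation does not change the leading order. Third, apply the continuous mapping theorem to \(\sqrt n\,\widehat x_n\Rightarrow Z_F\). The diagonal is excluded because the domain is \(\mathcal Z\).

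\emph{Step 1 (expansion).} Write \(h_f(A)=\sum_k c_k\tr((A\circ A)^k)\) as a positively weighted sum of closed walks. Each closed walk of length \(k\) contributes \(\prod_{\text{steps}}A_{ij}^2\). Every closed walk traverses a directed cycle, and every edge it uses lies in \(\supp(W)\cup\supp_F(x)\). Each such walk monomial is therefore a product of squares \(A_{ij}^2\), with at least one factor \(x_e^2\) for each edge of \(F\) used. Since \(\supp(W)\) is acyclic, the set of \(F\)-edges used by any closed walk is a nonface of \(\Delta_{W,F}\). It therefore contains some \(C\in\mathcal C_{\min}\) and has size at least \(q\).

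This shows \(h_f(W+\Delta)=\sum_{S}x^{2\beta_S}\,\Phi_S(y)+\dots\), where the sum runs over nonfaces and every term has \(x\)-degree at least \(2q\). The terms of \(x\)-degree exactly \(2q\) come only from walks that use each \(e\in C\) exactly once, for some \(C\) with \(|C|=q\). A closed walk using exactly the \(q\) edges of such a \(C\) once each, together with edges of \(W\), is a closed walk in \(\supp(W)\cup C\). Evaluating at \(y=0\), meaning at the base weights of \(W\), these terms give
\[
a_C=\sum_k c_k\,\#\{\text{weighted walks}\}\ \ge\ \sum_k c_k\cdot(\text{weight of a simple cycle through }C)>0.
\]
Minimality of \(C\) guarantees a simple cycle in \(\supp(W)\cup C\) using all of \(C\). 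Its weight is a product of \(W_{ij}^2>0\) on its support edges, times \(\prod_{e\in C}x_e^2\). All remaining coefficients are nonnegative, so nothing cancels.

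\emph{Step 2 (the main obstacle: uniformity in \(y\) and the remainder).} Set \(x=\widehat x_n=O_p(n^{-1/2})\) and \(y-W=O_p(n^{-1/2})\). I would write
\[
h_f(W+\Delta)=\sum_{|C|=q}a_C(y)\prod_{e\in C}x_e^2+R(y,x),
\]
where \(a_C(y)\) is continuous in \(y\) (a convergent positive series, by local convergence) and \(|R|\le K\|x\|^{2q+1}\) uniformly for \(y\) near \(W\). To get that bound, regroup the power series by \(x\)-degree. Because local convergence gives absolute convergence on a slightly larger polydisc, a Cauchy estimate bounds the tail of \(x\)-degree above \(2q\). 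Here one must check that support perturbations which are not sign-definite do not matter; they do not, since only squares \(A_{ij}^2\) enter.

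\emph{Step 3 (limit).} Multiply by \(n^q\). This gives \(n^q h_f=\sum_C a_C(\widehat y_n)\prod_{e\in C}(\sqrt n\,\widehat x_{n,e})^2+O_p(n^{-1/2})\). Since \(a_C(\widehat y_n)\to a_C(W)\) in probability and \((\sqrt n\,\widehat x_n)\Rightarrow Z_F\), Slutsky's lemma and the continuous mapping theorem give \(P_{W,q}(Z_F)\).

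For the local alternative \(W_n=W+U/\sqrt n\), we have \(\sqrt n(\widehat W_n-W)\Rightarrow Z+U\), understood as the same estimator-centered limit shifted by \(U\) under contiguity. The same argument then yields \(P_{W,q}(Z_F+U_F)\).
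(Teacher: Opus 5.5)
Your proposal is correct and follows the paper's route. You expand $h_f$ into positive closed-walk monomials and use the nonface argument to force candidate degree at least $2q$. You then identify the degree-$2q$ walks with size-$q$ minimal completions, get positive coefficients from a simple cycle through $C$, and finish with substitution, the continuous mapping theorem, and Slutsky for the active-edge fluctuations. Your Step~2 only makes explicit the uniform-in-$y$ remainder bound that the paper compresses into one Slutsky remark. One small slip: the positivity bound should read $a_C\ge L\,c_L\prod W_{ij}^2$, with the product over the base edges of a simple cycle of length $L$ through $C$, not a sum over all $k$. This does not affect the conclusion.
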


Theorem~\ref{thm:statistical-scaling} is a higher-order delta-method calculation
at a singular restriction, closely related to singular Wald asymptotics and inference for irregular
polynomial constraints \citep{drton2016wald,sturma2024testing}.  The completion
ideal supplies the part that is specific to acyclicity: the degree, the
monomials that survive, and their positive coefficients.  Fluctuations of the
fixed active edges enter one order later.
Section~\ref{app:statistical-desingularization} gives the proof.

\begin{corollary}[No universal scalar tolerance]
\label{cor:no-universal-tolerance}
For a rule that accepts when \(h_f(\widehat W_n)\le c n^{-a}\),
\begin{equation}
\Pr(\mathrm{accept})\longrightarrow
\begin{cases}
0,&a>q,\\
\Pr\{P_{W,q}(Z_F)\le c\},&a=q,\\
1,&a<q.
\end{cases}
\label{eq:tolerance-trichotomy}
\end{equation}
Thus one exponent \(a\) cannot be nondegenerately calibrated at two DAG
strata with different completion numbers.  Moreover, if one completion
complex has minimal generators of sizes \(q<r\), the limit in
Eq.~\eqref{eq:scalar-null-law} retains only size-\(q\) generators.  A local
shift on a disjoint size-\(r\) generator is invisible to the shortest-order
scalar limit.
\end{corollary}

\section{Proofs of Completion-Order Results}
\label{app:proofs}

All derivatives below are taken in the Euclidean space
$\mathcal Z=\{W\in\R^{d\times d}:\operatorname{diag}(W)=0\}$.  For a
multi-index $\alpha$, $\operatorname{supp}(\alpha)=\{s:\alpha_s>0\}$.

\subsection{Coordinate-subspace vanishing}

We first record the elementary mechanism behind the vector result.

\begin{lemma}[Local coordinate-subspace jet]
\label{lem:coordinate-jet}
Let $V\subseteq\R^m$ be an open neighborhood of zero and let
$G:V\to\R^r$ be $C^{k}$.  Suppose that, for every coordinate set $S$ with
$|S|\le k$, $G$ is zero on a neighborhood of zero in
$V\cap\R^S$.
Then
\begin{equation}
D^\alpha G(0)=0\qquad\text{for every }|\alpha|\le k.
\end{equation}
\end{lemma}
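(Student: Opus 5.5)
The plan is to reduce each partial derivative $D^\alpha G(0)$ with $|\alpha|\le k$ to a derivative of the restriction of $G$ to the coordinate subspace $\R^S$, where $S=\supp(\alpha)$. Since $|S|\le|\alpha|\le k$, the hypothesis applies to $S$: there is a neighborhood $N_S$ of zero in $V\cap\R^S$ on which $G\equiv0$. Define the restriction $G_S(y)=G(\iota_S y)$ for $y\in\R^{S}$ near zero, where $\iota_S$ is the coordinate inclusion. Because $\iota_S$ is linear and $G$ is $C^k$, the chain rule gives that $G_S$ is $C^k$ and that, for any multi-index $\beta$ on $S$, $D^\beta G_S(0)=D^{\iota_S\beta}G(0)$, where $\iota_S\beta$ is $\beta$ extended by zeros. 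This holds because partial derivatives in the coordinates indexed by $S$ only ever differentiate along directions lying in $\R^S$.

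Next I take $\beta=\alpha|_S$, so that $\iota_S\beta=\alpha$ exactly, since $\alpha$ vanishes off $S$. Because $G_S$ vanishes identically on the open set $N_S\ni0$ of $\R^S$, every partial derivative of $G_S$ of order at most $k$ vanishes on $N_S$, and in particular at $0$. Hence $D^\alpha G(0)=D^\beta G_S(0)=0$. The case $\alpha=0$ is just $G(0)=0$, which is covered by $S=\emptyset$. Componentwise treatment handles the vector-valued case, so no sign assumption on $G$ is needed.

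The only point requiring care is the identification $D^\alpha G(0)=D^{\alpha|_S}G_S(0)$. One must use that $C^k$ regularity makes iterated partials order-independent and continuous. With that, each iterated partial of $G$ in directions $e_s$, $s\in S$, evaluated at points of $\R^S$, coincides with the corresponding partial of $G_S$. This can be proved by induction on $|\alpha|$, using the one-variable limit definition along $e_s$, which stays inside $\R^S$ and, near $0$, inside $N_S$. I expect this bookkeeping to be the main (mild) obstacle; the rest is immediate. In applications, Theorems~\ref{thm:vector} and~\ref{thm:scalar} supply the hypothesis from exactness, since coordinate faces with fewer than $q$ candidates are acyclic and openness of $\Omega$ gives the neighborhood.
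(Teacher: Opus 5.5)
Your proposal is correct and is essentially the paper's proof: set $S=\supp(\alpha)$, note $|S|\le|\alpha|\le k$, identify $D^\alpha G(0)$ with the corresponding derivative of the restriction $G_S$, and conclude componentwise from $G_S\equiv0$ near the origin. Your induction along coordinate lines that stay inside $\R^S$ fills in the step the paper states in one line; that step does not actually require symmetry of mixed partials.
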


\begin{proof}
Fix $\alpha$ with $|\alpha|\le k$ and set
$S=\operatorname{supp}(\alpha)$.  Since $|S|\le|\alpha|\le k$, the
local restriction $G_S$ obtained by setting all coordinates outside $S$ to
zero is identically zero near the origin.  The derivative $D^\alpha G(0)$
is the corresponding derivative of $G_S$ and is therefore zero, component
by component.
\end{proof}

\paragraph{Proof of Theorem~\ref{thm:vector}.}
For every $S\subseteq F$ with $|S|<q_W(F)$, the support
$\operatorname{supp}(W)\cup S$ is acyclic.  Exactness therefore makes
$H_F$ zero on a neighborhood of the origin in the coordinate subspace
indexed by $S$.
Lemma~\ref{lem:coordinate-jet}, with $k=q-1$, proves
Eq.~\eqref{eq:vector-jet}.  If $H$ is $C^q$, the multivariate Taylor
formula with integral remainder gives, uniformly for $u$ in a compact set,
\begin{equation}
\|H_F(tu)\|_2\le C|t|^q\|u\|_2^q.
\end{equation}
Apply the same formula to $D H_F$.  Its derivatives through order $q-2$
are derivatives of $H_F$ through order $q-1$ and vanish at zero, giving
\begin{equation}
\|D H_F(tu)\|_{\rm op}\le C'|t|^{q-1}.
\end{equation}
This is the Jacobian of the restriction to $F$.  A coordinate outside $F$
may close an existing path with fewer edges, so the same order need not hold
for the full Jacobian. \qed

The theorem gives a lower bound on the first possible nonzero order, not an
equality.  Smooth nonanalytic exact maps may be flat to all orders, and
signed components can cancel on a particular path.

\subsection{Weighted paths}
\label{app:weighted}

We prove both weighted statements by a clone-variable lift, using the local
coordinate-subspace argument directly rather than treating clones as graph
edges.  Let $a_s\in\mathbb N$, introduce variables
$z_{s,1},\ldots,z_{s,a_s}$, and define
\begin{equation}
\Gamma(z)=W+\sum_{s=1}^m c_s
\left(\prod_{j=1}^{a_s}z_{s,j}\right)E_s.
\label{eq:clone-map}
\end{equation}
Original edge $e_s$ is active exactly when all of its clones are nonzero.
If $\operatorname{supp}(W)$ together with the active original edges is
cyclic, the support of $z$ contains at least
\begin{equation}
\min_{S:\operatorname{supp}(W)\cup S\text{ cyclic}}
\sum_{e_s\in S}a_s=\tau_W(F;a)
\end{equation}
clone coordinates.  Consequently, every clone-coordinate subspace of
dimension below $\tau$ maps to DAG support.  Conversely, activating all
clones of a minimizing set $S$ uses exactly $\tau$ clone coordinates and
maps to cyclic support.  Thus $\tau$ is exactly the coordinate-subspace
completion number of the lifted composite.

Let $V_\Gamma=\{z:\Gamma(z)\in\Omega\}$, an open neighborhood of zero, and
set $G=H\circ\Gamma$.  If $H$ is $C^{\tau-1}$, then so is $G$, and $G$ is
zero near the origin on every clone-coordinate subspace of dimension below
$\tau$.  Lemma~\ref{lem:coordinate-jet} makes all derivatives of $G$ below
order $\tau$ vanish.  If $H$ is $C^\tau$, Taylor's formula also gives
$G(z)=O(\|z\|^\tau)$.  On the diagonal clone path $z_{s,j}=t$,
Eq.~\eqref{eq:clone-map} is precisely $W_a(t)$, proving
Corollary~\ref{cor:weighted-vector}.

For the scalar statement set $g=h\circ\Gamma$.  Assume first that $h$ is
$C^{2\tau-1}$.  The function $g$ is locally nonnegative and is zero on every
clone-coordinate subspace of dimension below $\tau$.  If its first nonzero
homogeneous Taylor term had degree $r<2\tau$, the proof of
Theorem~\ref{thm:scalar} would apply verbatim: every monomial would use at
least $\tau$ clones, while a Newton vertex of a nonnegative polynomial has
even exponents, forcing $r\ge2\tau$.  Hence all derivatives below
$2\tau$ vanish.  Under $C^{2\tau}$ regularity, Taylor's formula yields
\begin{equation}
h(W_a(t))=O(|t|^{2\tau}).
\label{eq:weighted-scalar-order}
\end{equation}
The same one-sided asymptotic bounds for positive rational exponents follow
after multiplying all exponents by a common denominator and
reparameterizing $t\downarrow0$; derivative statements at zero require the
resulting path to possess the corresponding regularity.

\subsection{A sharp vector representation}

We verify exactness and count the components of
Eq.~\eqref{eq:cycle-vector}.

\begin{proposition}[Simple-cycle vector]
\label{prop:cycle-vector}
$H_{\rm cyc}(W)=0$ if and only if $\operatorname{supp}(W)$ is acyclic.
For a minimal $q$-edge completion, its restricted residual and Jacobian
orders are exactly $q$ and $q-1$, respectively.  Along a weighted path the
residual order is exactly $\tau$, provided the coefficients on a cheapest
completion are nonzero.
\end{proposition}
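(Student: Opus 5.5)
I would split the argument into three parts: exactness, the unweighted orders along a minimal completion, and the weighted order.

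\emph{Exactness.} If $\operatorname{supp}(W)$ contains a simple directed cycle $C$, then the component indexed by $C$ is $\prod_{(i,j)\in C}W_{ij}\ne0$. Conversely, every directed cycle in a support contains a simple one, so a cyclic support gives a nonzero component. If the support is acyclic, every simple cycle $C$ has some edge with $W_{ij}=0$, and every component vanishes. Hence $H_{\rm cyc}(W)=0$ exactly on DAG support, and Theorem~\ref{thm:vector} applies. It gives the lower bounds: residual order at least $q$ and restricted Jacobian order at least $q-1$. It remains to exhibit matching nonzero terms.

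\emph{Upper bounds on the order.} Let $S\subseteq F$ be a minimal completion with $|S|=q$, and let $\U$ activate the edges of $S$ with nonzero coefficients $u_s$. Then $\supp(W)\cup S$ contains a simple cycle $C$. By minimality of $S$, $C$ must use every edge of $S$; otherwise a proper subset would already complete a cycle. The component for $C$ along $W+t\U$ is then
\[
\Bigl(\prod_{e\in C\setminus S}W_e\Bigr)\Bigl(\prod_{s\in S}u_s\Bigr)t^q,
\]
which is a nonzero multiple of $t^q$ because the base edges of $C$ are active. So $\|H_{\rm cyc}(W+t\U)\|_2\ge c|t|^q$, and the residual order is exactly $q$. (The path may also contain other cycle components of order at least $q$; the sum of squares can only increase the norm, so I only need a lower bound on one component.)

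For the Jacobian, I would differentiate this same component with respect to $x_{s_0}$ for one $s_0\in S$. The derivative equals the product over the remaining edges, which is of order exactly $t^{q-1}$ along the path. Hence $\|DH_F(tu)\|_{\rm op}\ge c'|t|^{q-1}$, and combined with Theorem~\ref{thm:vector} the Jacobian order is exactly $q-1$.

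\emph{Weighted path.} Choose a cheapest completion $S$ for $\tau_W(F;a)$, with nonzero coefficients on its edges. The same minimality argument applies with weights: a simple cycle in $\supp(W)\cup S$ must use all of $S$, since dropping an edge would give a completion of strictly smaller weight (all $a_s>0$). That cycle's component is therefore a nonzero constant times $t^{\sum_{s\in S}a_s}=t^\tau$. Corollary~\ref{cor:weighted-vector} gives the matching upper bound $O(|t|^\tau)$, so the residual order along $W_a(t)$ is exactly $\tau$.

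The main point of care is this minimality step, which guarantees the chosen simple cycle contains every edge of $S$ and therefore has order exactly $q$ (or $\tau$) rather than lower. It reduces to observing that any simple cycle inside $\supp(W)\cup S$ is itself a completion using a subset of $S$.
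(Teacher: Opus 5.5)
Your proposal is correct and follows the paper's proof essentially step for step. Exactness comes from the fact that every cyclic support contains a simple cycle. Minimality forces the chosen simple cycle to use every completion edge, so its component is a nonzero base-edge coefficient times $t^q$ (or $t^\tau$); differentiating it in one completion coordinate gives the Jacobian order, and Theorem~\ref{thm:vector} (respectively Corollary~\ref{cor:weighted-vector}) supplies the matching upper bounds. You additionally spell out details the paper leaves implicit, such as bounding the norm and operator norm below by a single component or entry.
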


\begin{proof}
Every finite directed graph is cyclic if and only if it contains a simple
directed cycle.  The component indexed by such a cycle is nonzero exactly
when all its edges are present, proving exactness.  A cycle created by a
minimal completion must use all $q$ completion edges; otherwise a proper
subset would already be cyclic.  Its component is a nonzero fixed-edge
coefficient times the product of those $q$ candidate coordinates.  This
has equal-scale order $q$, and differentiating in a completion coordinate
has order $q-1$.  No component has lower order by Theorem~\ref{thm:vector}.
The same argument chooses a cycle whose completion cost is $\tau$ on a
weighted path.
\end{proof}

There are $\binom{d}{k}$ choices of nodes for a length-$k$ cycle.  Fixing
the smallest selected node as the first node leaves $(k-1)!$ directed
orders, each representing one cyclic-rotation class.  Summing over
$k=2,\ldots,d$ proves Eq.~\eqref{eq:cycle-count}.

\subsection{Uniform sharpness and residual dimension}
\label{app:dimension}

We prove Theorem~\ref{thm:dimension}.  Let
\begin{equation}
P_q(U)=\frac1{q!}D^qH(0)[U,\ldots,U]
\end{equation}
be the homogeneous degree-$q$ Taylor jet of $H$ at the empty graph.

Fix $3\le q\le d$ and set $n=d-q+2$.  Partition $n$ endpoint nodes into
disjoint sets $S,T$ of sizes $a=\lfloor n/2\rfloor$ and
$b=\lceil n/2\rceil$.  The remaining $q-2$ nodes form singleton layers
$v_1,\ldots,v_{q-2}$.  Let $U(z)$ contain all forward funnel edges
\begin{equation}
s\to v_1,\quad v_1\to\cdots\to v_{q-2},\quad v_{q-2}\to t
\end{equation}
with weight one for $s\in S,t\in T$, and let the reverse edge $t\to s$
have weight $z_{st}$.  The forward graph is acyclic.  Every nonzero reverse
channel completes the simple $q$-cycle
$s\to v_1\to\cdots\to v_{q-2}\to t\to s$; hence $tU(z)$ is cyclic for
every $t\ne0$ and $z\ne0$.

Any cyclic support in this funnel must contain one reverse edge and all
$q-1$ edges of its associated forward path.  A degree-$q$ Taylor monomial
with any other support is therefore acyclic and its coefficient vanishes by
exactness on that coordinate subspace.  Total degree $q$ also forces exponent
one on each cycle edge.  It follows that
\begin{equation}
P_q(U(z))=\sum_{s\in S,t\in T}v_{st}z_{st}=Lz,
\qquad L:\R^{ab}\to\R^r.
\end{equation}
Directionwise $q$-sharpness requires $Lz\ne0$ for every $z\ne0$, so $L$ is
injective and
\begin{equation}
r\ge ab=\left\lfloor\frac{(d-q+2)^2}{4}\right\rfloor.
\end{equation}
Conversely, if $r<ab$, rank--nullity gives $z\ne0$ with $P_q(U(z))=0$.
Theorem~\ref{thm:vector} removes every lower jet, so
$H(tU(z))=o(|t|^q)$ even though exactness requires it to be nonzero for all
sufficiently small $t\ne0$.

It remains to prove the upper bound.  Let $C_q(W)\in\R^{ab}$ collect the
cycle products associated with the selected funnel channels, and define
\begin{equation}
H_q^+(W)=\left(C_q(W),h_{\exp}(W)\right)\in\R^{ab+1}.
\label{eq:sharp-upper-construction}
\end{equation}
The EXP component is zero exactly on DAGs, so $H_q^+$ is globally exact.
On the funnel family, $C_q(tU(z))=t^qz$ up to fixed nonzero forward-edge
coefficients.  Hence its degree-$q$ jet is nonzero for every $z\ne0$,
proving $s_q(U_q)\le ab+1$.

For $q=2$, label nodes by a total order.  Let $U(z)_{ij}=1$ for every
$i<j$, and let $U(z)_{ji}=z_{ij}$.  Any two directed edges form a cycle if
and only if they are the two opposite orientations of one unordered pair.
Thus exactness eliminates every quadratic Taylor monomial except
$W_{ij}W_{ji}$.  Consequently
\begin{equation}
P_2(U(z))=\sum_{i<j}v_{ij}z_{ij}=Lz.
\end{equation}
Directionwise second-order sharpness makes
$L:\R^{\binom d2}\to\R^r$ injective, proving
$r\ge\binom d2$.  Conversely, if $r<\binom d2$, rank--nullity supplies a
nonzero $z$ with $P_2(U(z))=0$.  The support of $tU(z)$ contains a two-cycle
for every $t\ne0$, so exactness still requires $H(tU(z))\ne0$; the first two
Taylor jets vanish and $H(tU(z))=o(t^2)$.
For the upper bound, collect the $\binom d2$ products
$W_{ij}W_{ji}$ and append $h_{\exp}(W)$ as in
Eq.~\eqref{eq:sharp-upper-construction}.  This map is globally exact and its
quadratic jet on the pair-channel family is the coefficient vector $z$.

\subsection{Nonnegative scalar representations}
\label{app:scalar}

We give a proof of the Newton-polytope fact used in the main text and then
prove the scalar barrier in full.

\begin{proof}[Proof of Lemma~\ref{lem:newton}]
Let $P(x)=\sum_\beta c_\beta x^\beta$ be a nonzero globally nonnegative
polynomial and let $\alpha$ be a vertex of its Newton polytope.  A vector
$w\in\R^m$ uniquely exposes $\alpha$; changing its sign if necessary, take
$\langle w,\alpha\rangle<\langle w,\beta\rangle$ for every other exponent.
For a sign vector $s\in\{-1,1\}^m$, evaluate $P$ on the monomial curve
$x_i=s_it^{w_i}$.  Although coordinates with $w_i<0$ diverge as
$t\downarrow0$, the polynomial remains nonnegative.  Unique exposure gives
\begin{equation}
t^{-\langle w,\alpha\rangle}P(s_1t^{w_1},\ldots,s_mt^{w_m})
\longrightarrow c_\alpha s^\alpha.
\end{equation}
The limit is nonnegative for every $s$.  If some $\alpha_i$ were odd,
flipping $s_i$ would flip the nonzero limit.  Thus $\alpha$ is
coordinatewise even, and taking $s=\mathbf 1$ gives $c_\alpha>0$.
\end{proof}

\begin{proof}[Proof of Theorem~\ref{thm:scalar}]
Suppose, for contradiction, that the Taylor jet of $h_F$ has a first
nonzero homogeneous term $P_r$ with $r<2q$.  Local nonnegativity of $h_F$
is sufficient to make $P_r$ globally nonnegative: for fixed $x$ and
$t\downarrow0$,
\begin{equation}
t^{-r}h_F(tx)\longrightarrow P_r(x)\ge0.
\end{equation}
Every coordinate subspace supported on fewer than $q$ candidates is
acyclic, so the restriction of $h_F$, and hence of $P_r$, to that subspace
is zero.  Therefore every monomial of $P_r$ uses at least $q$ distinct
coordinates.  Choose any vertex exponent $\alpha$ of the Newton polytope
of $P_r$.  Lemma~\ref{lem:newton} makes every nonzero $\alpha_s$ at least
two.  Hence
\begin{equation}
r=|\alpha|\ge2|\operatorname{supp}(\alpha)|\ge2q,
\end{equation}
a contradiction.  Thus all derivatives through order $2q-1$ vanish.
Under $C^{2q}$ regularity, Taylor remainder bounds give
$h_F(tu)=O(|t|^{2q})$.  Applying the same argument to $D h_F$ gives the
restricted gradient order $O(|t|^{2q-1})$.
\end{proof}

Nonnegativity is essential for the factor of two.  For $d=2$, the signed
scalar $W_{12}W_{21}$ is exact and has order two at the empty graph, whereas
every smooth nonnegative scalar exact representation has order at least
four there.

\subsection{Minimal-completion factorization}

Let $F=\{e_1,\ldots,e_q\}$ itself be a minimal completion and write
$\Phi(x)=h(W+\sum_sx_sE_s)$ on a sufficiently small box contained in
$V_F$.  If $x_s=0$, only a proper subset of $F$ is active and exactness
gives $\Phi(x)=0$.  For fixed $x_{-s}$, the
nonnegative differentiable function $u\mapsto\Phi(x_{-s},u)$ has a minimum
at zero, so
\begin{equation}
\Phi(x_{-s},0)=0,
\qquad \partial_s\Phi(x_{-s},0)=0.
\label{eq:double-vanishing}
\end{equation}

The integral form of the second-order Hadamard lemma is
\begin{equation}
f(u,z)=u^2\int_0^1(1-r)\partial_{uu}f(ru,z)\,dr
\label{eq:hadamard-second}
\end{equation}
whenever $f(0,z)=\partial_uf(0,z)=0$.  We apply it inductively.  After
$k-1$ steps, write
\begin{equation}
\Phi(x)=\left(\prod_{s<k}x_s^2\right)\Phi_{k-1}(x),
\end{equation}
where $\Phi_{k-1}$ is nonnegative and $C^{2q-2(k-1)}$.  On the dense set
where the preceding coordinates are nonzero, $\Phi_{k-1}(x)=0$ whenever
$x_k=0$; continuity extends this identity to the divided hyperplanes.
Because $\Phi_{k-1}$ is nonnegative and differentiable, its $x_k$ derivative
also vanishes there.  Equation~\eqref{eq:hadamard-second} extracts another
factor $x_k^2$ and leaves a nonnegative $C^{2q-2k}$ quotient.  The
$C^{2q}$ assumption permits all $q$ steps and yields
\begin{equation}
\Phi(x)=\left(\prod_{s=1}^qx_s^2\right)\Psi(x)
\end{equation}
with continuous $\Psi$.  Off the coordinate hyperplanes the squared
product is positive and exactness gives $\Phi>0$, so $\Psi>0$ there;
continuity gives $\Psi\ge0$ everywhere locally.

\begin{proposition}[Sharpness for nonnegative walk sums]
\label{prop:walk-sharp}
Let $A=W\circ W$ and
$h_f(W)=\sum_{k\ge1}c_k\tr(A^k)$, where $c_k\ge0$ and the coefficient for
every simple-cycle length is positive.  Suppose the series converges in a
neighborhood of a minimal $q$-edge completion to a $C^{2q}$ function.  Then
\begin{equation}
h_f(W+tU)=C|t|^{2q}+o(|t|^{2q}),\qquad
\norm{D(h_f)_F(tu)}_2=\Theta(|t|^{2q-1})
\end{equation}
for some $C>0$.
\end{proposition}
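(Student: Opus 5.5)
Throughout, $F=\{e_1,\ldots,e_q\}$ is the minimal completion and $A(x)=(W+\sum_s x_sE_s)\circ(W+\sum_s x_sE_s)$. The plan is to expand $h_f$ combinatorially as a sum over closed walks and isolate the lowest-order monomial in the candidate coordinates. Since $A$ has nonnegative entries, $\tr(A^k)$ is a sum over closed walks of length $k$ in $\supp(W)\cup F$. Each walk contributes $\prod_{\text{steps}}A_{ij}$, with fixed-edge factors $W_{ij}^2>0$ and candidate factors $x_s^2$.

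First, every closed walk in $\supp(W)\cup F$ must use each edge of $F$ at least once. Indeed, a closed walk contains a directed cycle in its support. Because $\supp(W)$ is acyclic, that support is cyclic only when it contains all of $F$, by minimality of $F$. So each walk monomial has the form $\prod_s x_s^{2m_s}$ with every $m_s\ge1$. The walks with all $m_s=1$ contribute exactly $\big(\prod_{s}x_s^2\big)\cdot K$. Here
\[
K=\sum_{k}c_k\sum_{\substack{\text{closed walks of length }k\\ \text{using each }e_s\text{ once}}}\prod_{\text{fixed steps}}W_{ij}^2 .
\]
At least one such walk exists: a simple cycle $C\supseteq F$ in $\supp(W)\cup F$. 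It contributes $c_{|C|}\,|C|\prod_{(i,j)\in C\setminus F}W_{ij}^2>0$, where the factor $|C|$ counts rotations. All other terms are nonnegative, so $K>0$. Finiteness of $K$ follows from convergence of the series: the coefficient of $\prod_s x_s^2$ in the power series of $h_f$ is a sub-sum of nonnegative terms of a convergent series. I would justify this by absolute convergence of $\sum c_k\tr(A^k)$ for nonnegative $A$ in a neighborhood, e.g.\ evaluated at $x_s=\rho$.

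Second, take $U$ with $U_{e_s}=u_s\ne0$ on $F$, so that $x=tu$. The surviving monomials all have total degree at least $2q$, and the degree-$2q$ part is exactly $K\prod_s u_s^2\,t^{2q}$. All other walk monomials have some $m_s\ge2$ and degree at least $2q+2$. The remainder is dominated termwise by the nonnegative series evaluated at $|x_s|\le\rho$, giving $O(|t|^{2q+2})$. This yields $h_f(W+tU)=Ct^{2q}+o(t^{2q})$ with $C=K\prod u_s^2>0$. If $U$ has extra coordinates outside $F$, I read the statement as concerning the restriction to $F$, consistent with Corollary~\ref{cor:positive-walk-order}.

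Third, for the gradient, the upper bound $O(|t|^{2q-1})$ is Theorem~\ref{thm:scalar}. For the lower bound, differentiate in $x_1$. Using Euler's identity for the leading homogeneous part, the degree-$(2q-1)$ component of $\partial_{x_1}(h_f)_F$ is $2K\,x_1\prod_{s\ge2}x_s^2$. The rest of $\partial_{x_1}(h_f)_F$ is $O(|t|^{2q+1})$ by the same majorant argument applied to the differentiated series, which converges on a slightly smaller polydisc. Along $x=tu$ this gives $|\partial_{x_1}(h_f)_F|\ge 2K|u_1|\prod_{s\ge2}u_s^2\,|t|^{2q-1}(1+o(1))$, hence $\Theta(|t|^{2q-1})$.

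The main obstacle is the analytic bookkeeping: turning the walk series into a convergent multivariate power series in $x$ near $0$ with a uniform remainder bound. The $C^{2q}$ hypothesis alone does not give this, so I would rely on the nonnegativity of all coefficients together with convergence at a point with all $|x_s|$ equal to some $\rho>0$, which holds because the series converges in a neighborhood. This lets Abel/majorant estimates control the tail and the derivative series.
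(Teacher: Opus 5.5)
Your argument is correct, but it reaches the asymptotics by a different route from the paper.

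The paper uses the walk expansion only for a lower bound. One simple cycle through all of $F$ contributes a positive multiple of $\prod_s x_s^2$, and every other term is nonnegative, so $h_f(W+tU)\ge c|t|^{2q}$. It then combines Theorem~\ref{thm:scalar} with the $C^{2q}$ hypothesis to write $h_f(W+tU)=P_{2q}(u)t^{2q}+o(|t|^{2q})$, and the lower bound forces $P_{2q}(u)>0$. The gradient lower bound comes from Euler's identity $\langle\nabla P_{2q}(u),u\rangle=2qP_{2q}(u)\ne0$, and $P_{2q}$ is never identified.

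You instead show that every closed walk traverses every edge of $F$. The restricted function is then a nonnegative-coefficient power series in $x_1^2,\ldots,x_q^2$ divisible by $\prod_s x_s^2$. This buys you several things:
\begin{itemize}
\item The initial form is pinned down exactly as $K\prod_s x_s^2$. This is the same computation the paper performs later in proving Theorem~\ref{thm:statistical-scaling}.
\item You get the explicit constant $C=K\prod_s u_s^2$ and the sharper remainder $O(|t|^{2q+2})$.
\item The gradient bound follows by differentiating in a single coordinate, so Euler's identity is not actually needed.
\item Your majorant argument makes the $C^{2q}$ hypothesis redundant, because the restriction is real-analytic on the polydisc.
\item The same majorant also gives the gradient upper bound without Theorem~\ref{thm:scalar}.
\end{itemize}

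A small simplification: finiteness of $K$ needs no convergence argument. A closed walk using each candidate edge once consists of $q$ candidate edges joined by paths in the DAG $\supp(W)$, so there are only finitely many such walks.

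What the paper's route buys is brevity and portability. It uses only nonnegativity, exactness, one dominating cycle monomial, and $C^{2q}$ regularity. It therefore transfers unchanged to the entrywise squashes $s(x)=\gamma x^2+o(x^2)$ mentioned after the proposition, where your bookkeeping in powers of $x_s^2$ would have to be redone.
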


\begin{proof}
Choose a simple cycle in the completed support.  Minimality forces it to use
all $q$ candidate edges.  Its cyclic rotations contribute a positive
constant times the product of the $q$ squared candidate coordinates to the
appropriate trace power.  All other walk-sum terms are nonnegative, and
the same expansion is zero exactly on DAG support.  Thus
$h_f(W+tU)\ge c|t|^{2q}$ for some $c>0$ and all sufficiently small nonzero
$t$.  Theorem~\ref{thm:scalar} and $C^{2q}$ regularity give
$h_f(W+tU)=C|t|^{2q}+o(|t|^{2q})$; the lower bound forces $C>0$.
Let $P_{2q}$ be the first homogeneous Taylor term, so $P_{2q}(u)=C>0$.
The restricted gradient has leading term $t^{2q-1}\nabla P_{2q}(u)$.
Euler's identity gives
$\langle\nabla P_{2q}(u),u\rangle=2qP_{2q}(u)=2qC\ne0$, so the gradient
norm is bounded below by a positive multiple of $|t|^{2q-1}$.  Its Taylor
expansion gives the matching upper bound.
\end{proof}

This proposition covers EXP and the usual positive trace polynomial.  The
same proof covers an entrywise squash satisfying $s(0)=0$, $s(x)>0$ for
$x\ne0$, and $s(x)=\gamma x^2+o(x^2)$ with $\gamma>0$.  The log-determinant
series has the same positive walk expansion inside its convergence domain.

\subsection{General completion complexes and exact error bounds}

\begin{proposition}[Generator domination and optimal exponents]
\label{prop:general-holder}
Let $\mathcal C_{\rm min}=\mathcal C_{\rm min}(W,F)$ be nonempty,
$N=|\mathcal C_{\rm min}|$, and
$r=\max_{C\in\mathcal C_{\rm min}}|C|$.  On a sufficiently small candidate
box,
\begin{equation}
\operatorname{dist}(x,\mathcal D_F)
\le \sqrt N\,\|M_{W,F}(x)\|_2^{1/r}.
\label{eq:generator-error-bound}
\end{equation}
Moreover, there are constants $c_1,c_2>0$ such that
\begin{align}
\|(H_{\rm cyc})_F(x)\|_2
&\ge c_1\|M_{W,F}(x)\|_2, \label{eq:cycle-dominates-generator}\\
h_f\left(W+\sum_sx_sE_s\right)
&\ge c_2\|M_{W,F}(x)\|_2^2 \label{eq:walk-dominates-generator}
\end{align}
for every positive walk scalar in Proposition~\ref{prop:walk-sharp}.
Consequently, the two families have optimal error-bound exponents $1/r$
and $1/(2r)$, respectively.
\end{proposition}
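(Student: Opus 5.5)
The plan has three parts: the hitting-set bound \eqref{eq:generator-error-bound}, the two domination inequalities, and then the optimality of the exponents $1/r$ and $1/(2r)$.

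\textbf{Hitting-set bound.} Work on a box small enough that every coordinate satisfies $|x_s|\le 1$ and every entry stays inside $\Omega$. Fix $x$ and, for each $C\in\mathcal C_{\rm min}$, choose an index $s_C\in C$ minimizing $|x_s|$ over $C$. Let $Z=\{s_C: C\in\mathcal C_{\rm min}\}$ and let $x'$ be $x$ with the coordinates in $Z$ set to zero.

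The support of $x'$ contains no minimal nonface, because every generator loses at least one coordinate. Hence $\supp(x')\in\Delta_{W,F}$ and $x'\in\mathcal D_F$, and
\[
\operatorname{dist}(x,\mathcal D_F)\le\|x-x'\|_2\le\Bigl(\sum_{C}x_{s_C}^2\Bigr)^{1/2}.
\]
For each $C$, the minimum of $|x_s|$ over $C$ is at most the geometric mean, so $|x_{s_C}|\le|x^C|^{1/|C|}$. Since $|x^C|\le1$ and $|C|\le r$, this gives $|x^C|^{1/|C|}\le|x^C|^{1/r}\le\|M_{W,F}(x)\|_2^{1/r}$. Summing the $N$ squared terms yields the factor $\sqrt N$.

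\textbf{Domination inequalities.} For \eqref{eq:cycle-dominates-generator}, attach to each minimal nonface $C$ a simple cycle $\sigma_C$ in $\supp(W)\cup C$. Minimality forces $\sigma_C$ to use every edge of $C$ exactly once. The component of $(H_{\rm cyc})_F$ indexed by $\sigma_C$ is then $w_C\,x^C$, where $w_C\neq0$ is the product of the fixed base weights on $\sigma_C$.

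Distinct $C$ give distinct cycles, since $C$ is recovered as $\sigma_C\cap F$. So $\|(H_{\rm cyc})_F(x)\|_2^2\ge\sum_C w_C^2(x^C)^2$, and we may take $c_1=\min_C|w_C|$.

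For \eqref{eq:walk-dominates-generator}, every trace term is nonnegative because $A=W\circ W\ge0$ entrywise. Keeping only the closed walks that traverse $\sigma_C$ once, counted with their rotations, gives
\[
h_f\ge\sum_C c_{|\sigma_C|}\,|\sigma_C|\,w_C^2\prod_{s\in C}x_s^2,
\]
with distinct cycles contributing distinct walks. This is at least $c_2\|M_{W,F}(x)\|_2^2$.

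\textbf{Exponent conclusion.} Combining the two domination inequalities with \eqref{eq:generator-error-bound} gives
\[
\operatorname{dist}\le\sqrt N\,c_1^{-1/r}\|H_{\rm cyc}\|^{1/r}
\quad\text{and}\quad
\operatorname{dist}\le\sqrt N\,c_2^{-1/(2r)}h_f^{1/(2r)}.
\]
So the exponents $1/r$ and $1/(2r)$ are attained.

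For optimality, take $C^*$ with $|C^*|=r$ and the path $x=t\mathbf 1_{C^*}$. Every proper subface of $C^*$ is acyclic, so $\mathcal D_F$ near this path consists of points with at least one coordinate of $C^*$ equal to zero. This gives $\operatorname{dist}\ge t$ up to a constant.

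Along the same path the residuals are small. By Theorem~\ref{thm:jet-ideal}, the Taylor polynomial restricted to the face $C^*$ lies in $I_{W,F}$. On that face the only generator present is $x^{C^*}$: any other minimal nonface contained in $C^*$ would contradict minimality. Hence all Taylor terms below degree $r$ (vector case) or $2r$ (scalar case, by Theorem~\ref{thm:scalar} applied with completion number $r$ on this face) vanish. The residual is therefore $O(t^r)$, respectively $O(t^{2r})$.

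An error bound with exponent $\theta$ then forces $t\lesssim t^{r\theta}$ (or $t^{2r\theta}$), which as $t\to0$ requires $\theta\le1/r$ (or $\theta\le1/(2r)$).

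\textbf{Main obstacle.} The delicate step is the distance lower bound $\operatorname{dist}(t\mathbf 1_{C^*},\mathcal D_F)\gtrsim t$, since $\mathcal D_F$ also contains points with other coordinates active. I would handle this through the $\ell_\infty$ description of $\mathcal D_F$: every point of $\mathcal D_F$ has some coordinate of $C^*$ equal to zero, since otherwise its support contains the nonface $C^*$. Therefore the $\ell_\infty$ distance from $t\mathbf 1_{C^*}$ to $\mathcal D_F$ is at least $t$, and the Euclidean distance is at least this.
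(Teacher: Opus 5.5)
Your proof is correct and takes essentially the same route as the paper. You zero the smallest coordinate of each minimal completion and use min $\le$ geometric mean; you let one selected simple cycle per minimal completion dominate its generator in both $H_{\rm cyc}$ and $h_f$; and you test optimality on $t\mathbf 1_{C^*}$ for a largest minimal completion, using that every point of $\mathcal D_F$ has a zero coordinate in $C^*$. The differences are minor: you state explicitly that distinct minimal completions give distinct cycle components, and you bound the residual along $t\mathbf 1_{C^*}$ by applying Theorems~\ref{thm:vector} and~\ref{thm:scalar} on the face $C^*$ (whose completion number is $r$), rather than by computing $\|M_{W,F}(t\mathbf 1_{C^*})\|_2=|t|^r$ and invoking Proposition~\ref{prop:walk-sharp}; your version gives the necessity direction for any sufficiently smooth exact residual.
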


\begin{proof}
For each $C\in\mathcal C_{\rm min}$, choose
$i(C)\in\arg\min_{s\in C}|x_s|$ and set every selected coordinate to zero.
This deletion set meets every minimal nonface, so the remaining candidate
support lies in $\Delta_{W,F}$.  If $\|x\|_\infty\le1$, the distance to this
particular acyclic point satisfies
\begin{align}
\operatorname{dist}(x,\mathcal D_F)^2
&\le \sum_{C\in\mathcal C_{\rm min}}\min_{s\in C}|x_s|^2\\
&\le \sum_{C\in\mathcal C_{\rm min}}|(x^C)|^{2/|C|}
\le N\|M_{W,F}(x)\|_2^{2/r}.
\end{align}
The middle inequality is the minimum--geometric-mean inequality.  The last
uses $|C|\le r$ and $|x^C|\le1$, proving
Eq.~\eqref{eq:generator-error-bound}.

For every minimal completion $C$, choose a simple directed cycle in
$\operatorname{supp}(W)\cup C$.  Minimality forces this cycle to use every
candidate in $C$; its other edges have fixed nonzero weights in $W$.  The
corresponding component of $H_{\rm cyc}$ is therefore $b_Cx^C$ with
$b_C\ne0$.  Taking the minimum of finitely many $|b_C|$ proves
Eq.~\eqref{eq:cycle-dominates-generator}.  The same selected cycles occur
with positive coefficients and squared edge weights in $h_f$, and all other
walk terms are nonnegative.  Their finite minimum gives
Eq.~\eqref{eq:walk-dominates-generator}.

Finally choose $C_\star\in\mathcal C_{\rm min}$ with $|C_\star|=r$ and set
$x=t\mathbf1_{C_\star}$.  Every other minimal generator contains a coordinate
outside $C_\star$, while acyclicity requires at least one coordinate of
$C_\star$ to be zero.  Hence
\begin{equation}
\operatorname{dist}(x,\mathcal D_F)=|t|,
\qquad \|M_{W,F}(x)\|_2=|t|^r.
\end{equation}
Proposition~\ref{prop:walk-sharp}, restricted to $C_\star$, gives scalar
order $2r$.  Thus neither exponent can be increased.
\end{proof}

\subsection{Coordinate tubes and residual sublevel volume}
\label{app:completion-volume}

The completion ideal also controls two volume notions that are not determined
by $q$ or $r$.  Let $m=|F|$ and define
\begin{align}
b_W(F)
&=m-\max_{S\in\Delta_{W,F}}|S| \nonumber\\
&=\min\{|T|:T\cap C\ne\varnothing
          \text{ for every }C\in\mathcal C_{\rm min}\},
\label{eq:completion-height}\\
\lambda_W(F)
&=\min_{\substack{z\in\R_+^m\\
                  \sum_{s\in C}z_s\ge1,\ C\in\mathcal C_{\rm min}}}
                  \sum_{s=1}^m z_s .
\label{eq:fractional-completion-cover}
\end{align}
The first quantity is the height of $I_{W,F}$ and the minimum number of
candidate edges whose removal from the full candidate graph makes it
acyclic.  The second is the fractional cover number of the
minimal-completion hypergraph.  In the standard algebraic terminology,
Eq.~\eqref{eq:fractional-completion-cover} is the Newton-polyhedron formula
for the log-canonical threshold of this monomial ideal
\citep{howald2001multiplier}.

\begin{proposition}[Exact coordinate-tube law]
\label{prop:tube-law}
Choose $\rho>0$ such that $B_\rho=[-\rho,\rho]^m\subset V_F$.  For
$0\le\varepsilon\le\rho$,
\begin{align}
&\frac{\operatorname{vol}\{x\in B_\rho:
  \operatorname{dist}_\infty(x,\mathcal D_F)\le\varepsilon\}}
 {(2\rho)^m} \nonumber\\
&\qquad =
\sum_{S\in\Delta_{W,F}}
\left(1-\frac{\varepsilon}{\rho}\right)^{|S|}
\left(\frac{\varepsilon}{\rho}\right)^{m-|S|}.
\label{eq:exact-tube-law}
\end{align}
In particular, its small-$\varepsilon$ exponent is $b_W(F)$, and the
leading coefficient after normalization by
$(\varepsilon/\rho)^{b_W(F)}$ is the number of maximum-cardinality faces of
$\Delta_{W,F}$.
\end{proposition}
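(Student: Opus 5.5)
The plan is to reduce the tube to a union of product sets, one for each acyclic face, and then evaluate the union by a coordinatewise partition.

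\emph{Step 1: describe the DAG set in the box.} On $B_\rho\subset V_F$ the support of $\W+\sum_s x_s\E_s$ consists of $\supp(\W)$ together with the active candidates $\{s:x_s\neq0\}$. Hence
\[
\mathcal D_F\cap B_\rho=\bigcup_{S\in\Delta_{\W,F}}\{x\in B_\rho:x_s=0\text{ for }s\notin S\}.
\]

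\emph{Step 2: compute the $\ell_\infty$ distance to one face.} For a fixed $S\in\Delta_{\W,F}$, the distance from $x\in B_\rho$ to the coordinate subspace $\R^S\cap B_\rho$ is $\max_{s\notin S}|x_s|$. The point achieving it is obtained by zeroing the coordinates outside $S$, and that point stays in the box.

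One point needs care here: the distance to $\mathcal D_F$ is taken over all of $\mathcal D_F$, not only $\mathcal D_F\cap B_\rho$. This causes no trouble, because zeroing coordinates never increases any $|x_s|$. Any point of $\mathcal D_F$ outside the box is therefore no closer than its zero-projection analogue. Concretely, for any $y\in\mathcal D_F$ with active set $S$, we have $\|x-y\|_\infty\ge\max_{s\notin S}|x_s|$.

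It follows that
\[
\operatorname{dist}_\infty(x,\mathcal D_F)\le\varepsilon
\iff \exists S\in\Delta_{\W,F}\text{ with } \{s:|x_s|>\varepsilon\}\subseteq S
\iff \{s:|x_s|>\varepsilon\}\in\Delta_{\W,F}.
\]
The last equivalence uses that $\Delta_{\W,F}$ is closed under subsets.

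\emph{Step 3: count the volume by a partition.} Partition $B_\rho$ (up to a null set) according to the exact set $T=\{s:|x_s|>\varepsilon\}$. The cells are products. Each coordinate in $T$ contributes length $2(\rho-\varepsilon)$, and each coordinate outside $T$ contributes length $2\varepsilon$. Summing over $T\in\Delta_{\W,F}$ and dividing by $(2\rho)^m$ gives exactly Eq.~\eqref{eq:exact-tube-law}, with $S=T$.

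\emph{Step 4: read off the asymptotics.} As $\varepsilon\downarrow0$, the term for $S$ has order $(\varepsilon/\rho)^{m-|S|}$. The smallest power is therefore $m-\max_{S\in\Delta_{\W,F}}|S|=b_\W(F)$, and every factor $(1-\varepsilon/\rho)^{|S|}$ tends to $1$. After normalizing by $(\varepsilon/\rho)^{b}$, the leading coefficient is the number of faces of maximum cardinality.

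For the identity in Eq.~\eqref{eq:completion-height}, note that complements of faces are exactly the hitting sets of $\mathcal C_{\rm min}$. A set is a face iff it contains no minimal nonface, i.e.\ iff its complement meets every minimal nonface.

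The main obstacle is the Step 2 claim that points of $\mathcal D_F$ outside the box cannot be closer. I handle it with the coordinatewise bound $\|x-y\|_\infty\ge\max_{s\notin\supp(y)}|x_s|$, which holds because the active set of $y$ lies in $\Delta_{\W,F}$ whenever $y\in\mathcal D_F$.
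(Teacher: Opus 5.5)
Your proposal is correct and follows the paper's proof essentially step for step. Both arguments identify the tube with $\{x\in B_\rho:\{s:|x_s|>\varepsilon\}\in\Delta_{W,F}\}$ using downward closure and the zero-projection, and then sum the product-cell volumes $(1-\varepsilon/\rho)^{|S|}(\varepsilon/\rho)^{m-|S|}$ over faces. Your explicit bound $\|x-y\|_\infty\ge\max_{s\notin\supp(y)}|x_s|$ is the same coordinatewise observation the paper uses. It also shows the distance is a minimum over finitely many faces, so the boundary case $\operatorname{dist}_\infty(x,\mathcal D_F)=\varepsilon$ needs no separate approximation argument.
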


\begin{proof}
For $x\in B_\rho$, set
$S_\varepsilon(x)=\{s:|x_s|>\varepsilon\}$.  If some $y\in\mathcal D_F$
satisfies $\|x-y\|_\infty\le\varepsilon$, then every coordinate in
$S_\varepsilon(x)$ is nonzero in $y$.  Hence
$S_\varepsilon(x)\subseteq\supp(y)\in\Delta_{W,F}$ and, by downward
closure, $S_\varepsilon(x)\in\Delta_{W,F}$.  Conversely, if
$S_\varepsilon(x)$ is a face, retaining those coordinates and setting all
others to zero produces an element of $\mathcal D_F$ within distance
$\varepsilon$.

For a uniform point in $B_\rho$, the probability that
$S_\varepsilon(x)=S$ is
$(1-\varepsilon/\rho)^{|S|}(\varepsilon/\rho)^{m-|S|}$.
The events are disjoint, so summing over the faces proves
Eq.~\eqref{eq:exact-tube-law}.  Its smallest power of $\varepsilon$ is
$m-\max_{S\in\Delta_{W,F}}|S|=b_W(F)$; all coefficients at that power are
positive.
\end{proof}

\begin{theorem}[Residual tolerance-volume exponent]
\label{thm:completion-volume}
Let $M=M_{W,F}$ be the minimal-generator vector and
\[
V_M(\eta)=\operatorname{vol}
\{x\in B_\rho:\|M(x)\|_\infty\le\eta\}.
\]
Then
\begin{equation}
\lim_{\eta\downarrow0}\frac{\log V_M(\eta)}{\log\eta}
=\lambda_W(F).
\label{eq:generator-volume-exponent}
\end{equation}
Moreover, on a sufficiently small candidate box there are positive constants
$c_i,C_i$ such that
\begin{align}
c_1\|M(x)\|_\infty
&\le\|(H_{\rm cyc})_F(x)\|_2
\le C_1\|M(x)\|_\infty, \label{eq:cycle-two-sided}\\
c_2\|M(x)\|_\infty^2
&\le h_f\left(W+\sum_sx_sE_s\right)
\le C_2\|M(x)\|_\infty^2             \label{eq:walk-two-sided}
\end{align}
for every locally uniformly convergent positive-walk scalar in
Proposition~\ref{prop:walk-sharp}.  Consequently, the sublevel-volume
exponents of the cycle vector and positive-walk scalar are respectively
$\lambda_W(F)$ and $\lambda_W(F)/2$.
\end{theorem}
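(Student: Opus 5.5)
The plan has three parts: the generator volume exponent, the two-sided residual comparisons, and the transfer of the exponent.

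\textbf{Generator exponent.} I would use the logarithmic substitution indicated in the main text. By symmetry, restrict to the positive orthant and take $\rho\le1$. Set $x_s=\rho e^{-u_s}$ with $u_s\ge0$, so that $dx=\rho^m e^{-\sum_s u_s}\,du$. Writing $L=\log(1/\eta)$, the sublevel set $\{\|M(x)\|_\infty\le\eta\}$ becomes the polyhedron
\[
P_L=\Bigl\{u\ge0:\textstyle\sum_{s\in C}u_s\ge L-|C|\log\rho\ \text{ for every }C\in\mathcal C_{\rm min}\Bigr\}.
\]
Hence
\[
V_M(\eta)=2^m\rho^m\int_{P_L}e^{-\mathbf 1^\top u}\,du .
\]
The constant shifts $|C|\log\rho$ change $V_M$ only by bounded factors, so they do not affect the exponent.

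For the upper bound, take an optimal dual solution $y\ge0$ of the fractional cover LP. It satisfies $\sum_C y_C\mathbf 1_C\le\mathbf 1$ and $\sum_C y_C=\lambda$. Then $\mathbf 1^\top u\ge\sum_C y_C\sum_{s\in C}u_s\ge\lambda L-O(1)$ on $P_L$. Split $e^{-\mathbf 1^\top u}=e^{-\delta\mathbf 1^\top u}e^{-(1-\delta)\mathbf 1^\top u}$. This gives
\[
\int_{P_L}e^{-\mathbf 1^\top u}\,du\le e^{-(1-\delta)\lambda L+O(1)}\int_{\R_+^m}e^{-\delta\mathbf 1^\top u}\,du=O\bigl(\eta^{(1-\delta)\lambda}\bigr)
\]
for every $\delta>0$. (A polynomial-in-$L$ bound is also available, but the $\delta$-loss suffices for the limit.)

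For the lower bound, take an optimal primal solution $z$ and consider the unit cube around $Lz+c\mathbf 1$, where $c$ is large enough to absorb the $\log\rho$ shifts. This cube lies in $P_L$, and on it $\mathbf 1^\top u\le\lambda L+O(1)$. Hence $V_M(\eta)\gtrsim\eta^{\lambda}$. The two bounds give $\log V_M/\log\eta\to\lambda$.

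\textbf{Two-sided comparisons.} The lower bounds in Eqs.~\eqref{eq:cycle-two-sided} and \eqref{eq:walk-two-sided} are Proposition~\ref{prop:general-holder}, combined with equivalence of $\ell_2$ and $\ell_\infty$ norms on $\R^N$.

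For the upper bounds, I would show that every nonzero term is divisible by some generator monomial. Each component of $H_{\rm cyc}$ restricted to $F$ is a fixed-weight coefficient times $\prod_{e\in C'}x_e$, where $C'\subseteq F$ is the candidate set of some cycle. Since $\supp(W)\cup C'$ is cyclic, $C'$ is a nonface and contains a minimal nonface $C$. On a box with $|x_s|\le\rho\le1$, this gives $|x^{C'}|\le|x^C|\le\|M\|_\infty$.

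For $h_f$, every closed walk with positive weight contains a simple cycle, so its candidate edge set again contains some $C\in\mathcal C_{\rm min}$. Each walk monomial carries squared weights, so it is bounded by $x^{2C}$ times bounded factors, i.e.\ by $\|M\|_\infty^2$ times a walk weight. Summing over walks, the bound is $\|M\|_\infty^2$ times a majorant series in which the candidate coordinates are replaced by $\rho$. This majorant is finite because the series converges locally uniformly, and positivity of the coefficients lets us enlarge each factor to its majorant.

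\textbf{Transfer and main obstacle.} The comparisons give inclusions of sublevel sets. From Eq.~\eqref{eq:cycle-two-sided},
\[
\{\|M\|_\infty\le\eta/C_1\}\subseteq\{\|H_{\rm cyc}\|\le\eta\}\subseteq\{\|M\|_\infty\le\eta/c_1\}.
\]
Constant rescalings of $\eta$ do not change the log-ratio limit, so the exponent for $H_{\rm cyc}$ is $\lambda$. For $h_f$, the threshold $\eta$ corresponds to $\|M\|_\infty\asymp\eta^{1/2}$, which gives exponent $\lambda/2$.

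I expect the main obstacle to be the upper domination for $h_f$. The first point is making the walk-sum majorant rigorous: infinitely many terms must be bounded uniformly while a factor $\|M\|_\infty^2$ is extracted from each. The second point is choosing a box small enough that all candidate coordinates are at most $1$, so that dropping extra factors only decreases each term.
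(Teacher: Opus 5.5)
Your proposal is correct and follows essentially the paper's own proof. It uses the logarithmic substitution with an LP primal point for the lower volume bound and an LP dual point for the upper bound, termwise domination of cycle and closed-walk monomials by a minimal-generator monomial, and a constant-rescaled sublevel-set sandwich. The differences are cosmetic: you use a $\delta$-split of the exponential weight where the paper uses the exact $\mathrm{Gamma}(m,1)$ tail (which also yields the sharper polylogarithmic sandwich), your walk-sum majorant is evaluated at the box corner rather than on a slightly larger polydisc, and the threshold in $P_L$ should read $L+|C|\log\rho$, a harmless sign slip.
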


\begin{proof}
Fixed coordinate rescaling changes only constants, so first take $\rho=1$.
By symmetry it suffices to work on $[0,1]^m$.  Put
$x_s=e^{-u_s}$ and $L=\log(1/\eta)$.  The coordinates $u_s$ have independent
unit-exponential density, and $\|M(x)\|_\infty\le\eta$ is equivalent to
\begin{equation}
u\ge0,\qquad \sum_{s\in C}u_s\ge L
\quad\text{for every }C\in\mathcal C_{\rm min}.
\label{eq:log-cover-event}
\end{equation}
Let $z^\star$ solve Eq.~\eqref{eq:fractional-completion-cover}.
The orthant event $u_s\ge Lz^\star_s$ for all $s$ is contained in
Eq.~\eqref{eq:log-cover-event} and has probability
$e^{-L\sum_sz^\star_s}=e^{-\lambda L}$.

For the reverse bound, linear-programming duality gives $y_C\ge0$ satisfying
\begin{equation}
\sum_Cy_C=\lambda,\qquad
\sum_{C\ni s}y_C\le1\quad\text{for every }s.
\end{equation}
On the event in Eq.~\eqref{eq:log-cover-event},
\[
\sum_su_s
\ge\sum_su_s\sum_{C\ni s}y_C
=\sum_Cy_C\sum_{s\in C}u_s
\ge\lambda L.
\]
(The first inequality uses $u_s\ge0$.)  Since $\sum_su_s$ has a
Gamma$(m,1)$ distribution,
\begin{equation}
e^{-\lambda L}
\le \Pr\{\|M(x)\|_\infty\le e^{-L}\}
\le e^{-\lambda L}\sum_{j=0}^{m-1}\frac{(\lambda L)^j}{j!}.
\label{eq:volume-sandwich}
\end{equation}
Orthant factors and the fixed radius $\rho$ do not affect the logarithmic
limit.  Dividing logarithms by $\log\eta=-L$ proves
Eq.~\eqref{eq:generator-volume-exponent}.

The lower inequalities in Eqs.~\eqref{eq:cycle-two-sided} and
\eqref{eq:walk-two-sided} were proved in
Proposition~\ref{prop:general-holder}.  For the cycle-vector upper bound,
every nonzero restricted simple-cycle product contains a minimal completion
$C$ among its candidate edges.  On a box with $\|x\|_\infty\le1$, its
absolute value is at most a fixed base-edge coefficient times $|x^C|$.
There are finitely many simple cycles, which gives $C_1$.

For the scalar upper bound, every nonzero closed-walk monomial likewise
contains a minimal completion and is bounded by a fixed coefficient times
$|x^C|^2$.  More explicitly, choose a slightly larger candidate polydisc
whose closure remains in the convergence domain and assign each walk to one
minimal completion dividing its monomial.  Absolute convergence on the larger
polydisc implies absolute convergence of the finitely many termwise-divided
series on the smaller box.  Their suprema sum to a finite $C_2$.  The
two-sided comparisons sandwich each residual sublevel set between generator
sublevel sets with constant-rescaled tolerances.
Equation~\eqref{eq:generator-volume-exponent} then yields $\lambda$ for the
vector and $\lambda/2$ for the scalar.
\end{proof}

\begin{remark}[A neighborhood invariant, not one origin ray]
An optimal fractional cover may have zero coordinates.  The associated
valuation then approaches a higher-dimensional acyclic support stratum inside
$B_\rho$ rather than the all-zero candidate point.  This is intended:
$\lambda$ describes the volume of a neighborhood of the full local coordinate
arrangement $\mathcal D_F\cap B_\rho$, whereas $q$ and $r$ are witnessed by
paths into the distinguished base point.
\end{remark}

\begin{corollary}[Four scales of the completion ideal]
\label{cor:four-scales}
The invariants satisfy
\begin{equation}
\frac{b_W(F)}{r_W(F)}\le\lambda_W(F)\le b_W(F).
\label{eq:cover-gap}
\end{equation}
Here $q$ is the first possible jet degree, $r$ controls the worst H\"older
error-bound exponent, $b$ is the coordinate-tube codimension, and $\lambda$
is the residual tolerance-volume exponent.
\end{corollary}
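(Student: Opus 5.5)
We need to prove \(b/r\le\lambda\le b\). Both inequalities come from the linear program in Eq.~\eqref{eq:fractional-completion-cover}, read as the LP relaxation of the hitting-set problem for the minimal-completion hypergraph \(\mathcal C_{\rm min}\). The first step is to use the second expression in Eq.~\eqref{eq:completion-height}: \(b_W(F)\) is the minimum size of a set \(T\subseteq[m]\) meeting every minimal completion. This identity holds because \(S\in\Delta_{W,F}\) exactly when its complement \(T=[m]\setminus S\) meets every minimal nonface. If \(\mathcal C_{\rm min}\) is empty, then \(b=0=\lambda\) and there is nothing to prove, so assume it is nonempty. Then \(r\) is finite and at least one.

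\emph{Upper bound.} Take a minimum hitting set \(T\) and let \(z=\mathbf 1_T\). Every \(C\in\mathcal C_{\rm min}\) meets \(T\), so \(\sum_{s\in C}z_s\ge1\). Hence \(z\) is feasible for the LP, and \(\lambda\le\sum_s z_s=|T|=b\).

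\emph{Lower bound.} I would use the standard rounding argument for hitting sets in hypergraphs of rank at most \(r\). Let \(z^\star\) be an optimal fractional cover; it exists because the LP is feasible and bounded below by zero. Define \(T=\{s: z^\star_s\ge 1/r\}\). For each \(C\in\mathcal C_{\rm min}\) we have \(|C|\le r\) and \(\sum_{s\in C}z^\star_s\ge1\), so some coordinate of \(C\) satisfies \(z^\star_s\ge1/|C|\ge1/r\). Thus \(T\) meets \(C\), and \(T\) is a hitting set. Therefore
\[
b\le|T|\le\sum_{s\in T} r z^\star_s\le r\sum_s z^\star_s=r\lambda,
\]
which gives \(b/r\le\lambda\).

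The remaining sentences of the corollary only interpret the four invariants. Each interpretation comes from an earlier result:
\begin{itemize}
\item \(q\) is the first possible jet degree, by Theorem~\ref{thm:vector}.
\item \(r\) governs the optimal error-bound exponent, by Theorem~\ref{thm:holder-barrier} and Proposition~\ref{prop:general-holder}.
\item \(b\) is the tube codimension, by Proposition~\ref{prop:tube-law}.
\item \(\lambda\) is the sublevel-volume exponent, by Theorem~\ref{thm:completion-volume}.
\end{itemize}
The only point that needs care is the identification of \(b\) with the minimum hitting set. That identification rests on downward closure of \(\Delta_{W,F}\) and on the fact that its nonfaces are exactly the sets containing a minimal nonface. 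The rest is routine.
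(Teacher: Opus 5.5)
Your proof is correct and follows the paper's argument: the indicator of a minimum completion cover gives $\lambda\le b$, and thresholding a fractional cover at $1/r$ yields a transversal of size at most $r\sum_s z_s$, hence $b\le r\lambda$. The only cosmetic difference is that the paper rounds an arbitrary feasible $z$ and then minimizes, so it never needs the LP optimum to be attained; you do rely on attainment, and you justify it correctly.
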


\begin{proof}
The indicator of an integral completion cover is feasible in
Eq.~\eqref{eq:fractional-completion-cover}, so $\lambda\le b$.  Conversely,
given any feasible $z$, the set
$T=\{s:z_s\ge1/r\}$ meets every minimal completion: otherwise some
$C$ of size at most $r$ would satisfy $\sum_{s\in C}z_s<1$.
Thus $b\le|T|\le r\sum_sz_s$.  Minimize over $z$.
\end{proof}

\subsection{Statistical desingularization}
\label{app:statistical-desingularization}

\paragraph{Proof of Theorem~\ref{thm:statistical-scaling}.}
Write $A=W\circ W$.  Every term of
$\tr(A^k)$ is a positive closed-walk monomial.  After restricting to the
absent coordinates $F$, its candidate degree is twice the number of candidate
edges traversed with multiplicity.  Theorem~\ref{thm:jet-ideal} rules out
candidate degree below $2q$.  At degree $2q$, the candidate support of a
closed walk contains a cycle completion of size at least $q$, so it has
exactly $q$ distinct candidate edges and traverses each once.  Removing any
one of them breaks all cycles in that support; otherwise a smaller completion
would occur.  Its candidate factor is therefore
$\prod_{e\in C}x_e^2$ for a size-$q$ minimal completion $C$.

Conversely, every minimal completion $C$ contains a simple directed cycle in
$\supp(W)\cup C$.  The cycle uses every edge of $C$, since otherwise a proper
subset would complete a cycle.  Its fixed-edge product is nonzero, and the
coefficient $c_k$ at its length is positive.  Summing all degree-$2q$ closed
walks therefore gives
\begin{equation}
h_{f,F}(x)=
\sum_{\substack{C\in\mathcal C_{\rm min}(W,F)\\|C|=q}}
a_C\prod_{e\in C}x_e^2+o(\|x\|^{2q}),
\qquad a_C>0.
\label{eq:appendix-statistical-initial-form}
\end{equation}
Let $Z_n=\sqrt n(\widehat W_n-W)$.  Substitution into
Eq.~\eqref{eq:appendix-statistical-initial-form} yields
\begin{equation}
n^qh_f(\widehat W_n)=P_{W,q}((Z_n)_F)+o_p(1).
\end{equation}
The continuous mapping theorem gives Eq.~\eqref{eq:scalar-null-law}.
The fluctuations on fixed active edges perturb the coefficients $a_C$ by
$o_p(1)$ and hence disappear by Slutsky's theorem.  Under the local sequence,
$\sqrt n(\widehat W_n-W)=Z_n+U+o_p(1)$, proving the shifted claim.

\paragraph{Proof of Corollary~\ref{cor:no-universal-tolerance}.}
The acceptance event is
\begin{equation}
n^qh_f(\widehat W_n)\le c n^{q-a}.
\end{equation}
The right side tends to zero, $c$, or infinity according as $a>q$, $a=q$,
or $a<q$.  The limiting polynomial is positive almost surely because its
coefficients are positive and a nondegenerate Gaussian has no zero
coordinate.  This proves Eq.~\eqref{eq:tolerance-trichotomy}.  The
degree-$2q$ initial form contains only size-$q$ generators.  A local shift
supported on a disjoint size-$r$ generator with $r>q$ leaves this initial form
unchanged, which proves the final statement.  Disjointness is sufficient, not
necessary; without it, a shifted coordinate may also belong to a shortest
generator.

\section{Symmetry and the Score--Topology Crossover}
\label{app:universality-proofs}

\subsection{Coordinate realization and homology}

\paragraph{Proof of Theorem~\ref{thm:coordinate-universality}.}
Let \(L=\operatorname{sd}K\).  Its vertices are the nonempty faces of \(K\),
and its simplices are chains under inclusion.  In particular, \(L\) is flag:
a set of vertices is a face exactly when every pair is comparable.  Let \(H\)
be the graph whose vertices are those of \(L\), with an edge between two
incomparable faces.  Then
\begin{equation}
\operatorname{Ind}(H)=L.
\label{eq:appendix-independence-subdivision}
\end{equation}

For each \(i\in V(H)\), create graph nodes \(a_i,b_i\) and candidate edge
\(e_i:a_i\to b_i\).  For every \(\{i,j\}\in E(H)\), add fixed edges
\(b_i\to a_j\) and \(b_j\to a_i\).  Every fixed edge goes from the
\(b\)-layer to the \(a\)-layer, so the fixed graph \(W_0\) is acyclic.

If \(e_i,e_j\) are active for \(\{i,j\}\in E(H)\), then
\begin{equation}
a_i\to b_i\to a_j\to b_j\to a_i
\label{eq:appendix-four-cycle}
\end{equation}
is a directed cycle.  Conversely, every directed cycle must alternate between
a candidate edge \(a_i\to b_i\) and a fixed edge \(b_i\to a_j\).  Any such
fixed edge certifies \(\{i,j\}\in E(H)\), while the cycle certifies that both
candidates are active.  Thus an active candidate set is acyclic if and only
if it is independent in \(H\).  Equation~\eqref{eq:appendix-independence-subdivision}
proves
\(\Delta_{W_0,F}\cong\operatorname{sd}K\).  Radial projection identifies the
positive normalized coordinate faces with the usual geometric realization,
and barycentric subdivision preserves that realization.

\paragraph{Proof of Corollary~\ref{cor:exponential-homology}.}
Let \(H_t=\bigsqcup_{b=1}^tK_5\) and apply the preceding construction.
There are \(5t\) candidates and \(10t\) graph nodes.  Independence complexes
turn disjoint graph unions into simplicial joins:
\begin{equation}
\operatorname{Ind}(H_t)=\operatorname{Ind}(K_5)^{*t}.
\end{equation}
The complex \(\operatorname{Ind}(K_5)\) is a set of five points, so its only
nonzero reduced homology has rank four in degree zero.  The reduced join
formula
\begin{equation}
\widetilde H_r(A*B)
\cong\bigoplus_{i+j=r-1}
\widetilde H_i(A)\otimes\widetilde H_j(B)
\end{equation}
shows inductively that the \(t\)-fold join has reduced homology of rank
\(4^t\) in degree \(t-1\) and zero in all other degrees.  Equivalently it is
homotopy equivalent to a wedge of \(4^t\) copies of \(S^{t-1}\).

\subsection{Far-field symmetry and invariant flows}

\paragraph{Proof of Theorem~\ref{thm:symmetry-blindness}.}
Use \(t\) disjoint coordinate gadgets for \(K_m\).  Write the candidate
restriction as
\begin{equation}
g(x)=h\!\left(W_0+\sum_{b=1}^t\sum_{i=1}^m
x_{bi}E_{bi}\right).
\label{eq:appendix-candidate-restriction}
\end{equation}
For each block choose a pair \(S_b=\{i_b,j_b\}\) and define
\begin{equation}
\mathcal M_S=
\left\{x:\ x_{b i_b}=x_{b j_b}=a_b>0,\ 
x_{b\ell}=0\ \text{for }\ell\notin S_b,\ b\in[t]\right\}.
\label{eq:appendix-blind-manifold}
\end{equation}
Each block contains the four-cycle in
Eq.~\eqref{eq:appendix-four-cycle}, hence every point of
\(\mathcal M_S\) is cyclic.  The theorem's domain assumption ensures that
the restriction and its derivatives are defined on a neighborhood of every
such manifold.

Because \(h(W)=\Phi(W\circ W)\), \(g\) is even in every candidate
coordinate.  Therefore
\begin{equation}
\partial_{b\ell}g(x)=0
\qquad(\ell\notin S_b,\ x\in\mathcal M_S).
\label{eq:appendix-inactive-zero}
\end{equation}
The node permutation exchanging simultaneously
\(a_{i_b}\leftrightarrow a_{j_b}\) and
\(b_{i_b}\leftrightarrow b_{j_b}\) is an automorphism of the \(K_m\)
gadget.  It fixes every point of \(\mathcal M_S\).  Relabeling invariance of
\(h\) therefore gives
\begin{equation}
\partial_{b i_b}g(x)=\partial_{b j_b}g(x).
\label{eq:appendix-active-tie}
\end{equation}
Equations~\eqref{eq:appendix-inactive-zero}--\eqref{eq:appendix-active-tie}
say exactly that the block gradient is proportional to \(x_b\).
There are \(\binom m2\) choices in each block, and distinct choices have
different algebraic supports, proving Eq.~\eqref{eq:blind-manifold-count}.

Now suppose \(\nabla g\) and the scalar function \(\omega\) are locally
Lipschitz.  The vector field \(-\omega\nabla g\) satisfies the same
inactive-zero and active-tie identities, hence is tangent to
\(\mathcal M_S\).  On that manifold the common
amplitudes solve a locally Lipschitz system
\begin{equation}
\dot a_b=F_b(a),\qquad F_b(0,a_{-b})=0.
\label{eq:appendix-amplitude-flow}
\end{equation}
On any compact time interval there is \(L<\infty\) with
\(|F_b(a)|\le L|a_b|\).  Gronwall's inequality yields
\begin{equation}
a_b(t)\ge a_b(0)e^{-Lt}>0
\end{equation}
whenever \(a_b(0)>0\), on every compact subinterval of the solution's maximal
interval of existence.  No active coefficient reaches zero at a finite time
in that interval, and every block retains its four-cycle while the solution
exists.

For \(m=5\), \(d=10t\) and the number of manifolds is \(10^t\).
At \(a_b=1/2\), all active coefficients are fixed away from zero.  For
\(A=W\circ W\), the restriction of \(A^2\) to the two active \(a\)-nodes has
eigenvalues \(\pm a_b^2\).  Hence
\(\rho(A)^2=\rho(A^2)=a_b^2\), so \(\rho(A)=a_b=1/2<1\), strictly inside
DAGMA's \(s=1\) domain.

\paragraph{Proof of Proposition~\ref{prop:symmetry-time}.}
Put \(p=\prod_{i=1}^Lz_i^2\).  On the positive orthant,
\begin{equation}
\partial_{z_i}h=2p\phi'(p)/z_i,
\qquad
\dot z_i=-2\Psi'(\phi(p))p\phi'(p)/z_i.
\label{eq:full-cycle-flow}
\end{equation}
It follows immediately that
\begin{equation}
\frac{d}{dt}z_i^2
=-4\Psi'(\phi(p))p\phi'(p)
\label{eq:equal-square-speed}
\end{equation}
is independent of \(i\).  Hence every \(z_i^2-z_j^2\) is conserved.
In particular, \(\Delta=x^2-y^2=4a\varepsilon\).
Conservation of squared-norm differences is a familiar balancing mechanism
in homogeneous gradient flows \citep{du2018algorithmic}; the calculation
below uses the exact cycle product to obtain the sharp DAG-selection time.

Write \(u=y^2\) and
\[
d_{j,\varepsilon}=b_j^2-(a-\varepsilon)^2,\qquad
p_\varepsilon(u)=u(u+\Delta)
                  \prod_{j=1}^{L-2}(u+d_{j,\varepsilon}).
\]
Equation~\eqref{eq:equal-square-speed} becomes
\begin{equation}
\dot u=-4\Psi'(\phi(p_\varepsilon(u)))
           p_\varepsilon(u)\phi'(p_\varepsilon(u)).
\label{eq:full-cycle-u-flow}
\end{equation}
The event \(y/x=r\) occurs at
\(u_r=r^2\Delta/(1-r^2)\).  With \(u=\Delta v\), separation of variables
gives
\begin{equation}
\begin{split}
T_r(\varepsilon)
=\frac{\Delta}{4}
\int_{v_r}^{(a-\varepsilon)^2/\Delta}
\frac{dv}{
\Psi'(\phi(p_\varepsilon(\Delta v)))
p_\varepsilon(\Delta v)
\phi'(p_\varepsilon(\Delta v))},\\
v_r=\frac{r^2}{1-r^2}.
\end{split}
\label{eq:full-cycle-selection-integral}
\end{equation}
For fixed \(v\),
\[
p_\varepsilon(\Delta v)
=\Delta^2v(v+1)
  \prod_{j=1}^{L-2}(\Delta v+d_{j,\varepsilon}),
\qquad
d_{j,\varepsilon}\longrightarrow b_j^2-a^2.
\]
Let \(\alpha=\phi'(0)\) and
\(C=\prod_j(b_j^2-a^2)\).  The assumptions on \(\Psi'\) and \(\phi'\) imply
\[
\Psi'(\phi(p))p\phi'(p)
=c\alpha^{\nu+1}p^{\nu+1}(1+o(1)).
\]
After multiplying Eq.~\eqref{eq:full-cycle-selection-integral} by
\(\Delta^{2\nu+1}\), its integrand therefore converges pointwise to
\[
\frac{1}{
4c(\alpha C)^{\nu+1}[v(v+1)]^{\nu+1}}.
\]

For completeness, this limit does not interchange an uncontrolled moving
endpoint.  Choose a small fixed \(u_0>0\).  On
\(u\in[0,u_0]\), the asymptotic relations above hold with two-sided uniform
bounds, and the normalized integrand is dominated by a constant multiple of
\([v(v+1)]^{-(\nu+1)}\), which is integrable on
\([v_r,\infty)\).  On \(u\in[u_0,(a-\varepsilon)^2]\), all factors in
Eq.~\eqref{eq:full-cycle-u-flow} are bounded away from zero, so that part of
the unnormalized hitting time is \(O(1)\) and vanishes after multiplication
by \(\Delta^{2\nu+1}\).  Dominated convergence now yields
\[
\lim_{\varepsilon\downarrow0}
\Delta^{2\nu+1}T_r(\varepsilon)
=\frac{I_\nu(r)}{4c[\alpha C]^{\nu+1}}.
\]
Substitution of \(\Delta=4a\varepsilon\) proves
Eq.~\eqref{eq:symmetry-time}.

For the score statement, the full flow satisfies
\begin{equation}
\frac{d}{dt}\log\frac{y}{x}
=-\left(
\frac{\partial_y[S+\Psi(h)]}{y}
-\frac{\partial_x[S+\Psi(h)]}{x}
\right)
=-m_{\rm sc}(z).
\label{eq:log-ratio-margin}
\end{equation}
If \(m_{\rm sc}(z)\ge\gamma_{\rm flow}\) until the stopping event, integration gives
Eq.~\eqref{eq:score-time}.  This argument does not assume that the preferred
edge is initially smaller.

It remains to prove the crossover claim for
\(S_\gamma(z)=\gamma y^2/2\).  Put
\[
A(p)=\Psi'(\phi(p))p\phi'(p).
\]
The squared-weight equations become
\begin{equation}
\frac{d}{dt}x^2=-4A(p),\qquad
\frac{d}{dt}y^2=-4A(p)-2\gamma y^2,
\label{eq:score-crossover-square-flow}
\end{equation}
while every unpenalized cycle edge has the same squared speed as \(x\).
Consequently,
\begin{equation}
\frac{d}{dt}\log\frac{y}{x}
=-\gamma-2A(p)\left(\frac1{y^2}-\frac1{x^2}\right)
\le-\gamma.
\label{eq:score-crossover-log-flow}
\end{equation}
This proves
\[
T_r(\varepsilon,\gamma)
\le\gamma^{-1}\log\frac{y_0}{rx_0},
\]
and therefore Eq.~\eqref{eq:score-limited-regime} whenever
\(\gamma T_0\to\infty\).

For the other regime, let \(\Delta=x_0^2-y_0^2=4a\varepsilon\),
write \(q=x^2\), \(v=y^2\), and note that every other squared cycle weight
equals \(q+d_{j,\varepsilon}\) for a constant
\(d_{j,\varepsilon}\to b_j^2-a^2>0\).  On the bottleneck scale
\[
q=\Delta Q,\qquad v=\Delta V,qquad
\tau=\Delta^{2\nu+1}t,
\]
Eq.~\eqref{eq:score-crossover-square-flow} has, on compact subsets of the
positive \((Q,V)\) quadrant, the uniform limit
\begin{align}
\frac{dQ}{d\tau}
&=-4c[\phi'(0)C]^{\nu+1}(QV)^{\nu+1},
\label{eq:score-crossover-limit-q}\\
\frac{dV}{d\tau}
&=-4c[\phi'(0)C]^{\nu+1}(QV)^{\nu+1}
  -2\eta V,
\label{eq:score-crossover-limit-v}
\end{align}
where \(\eta=\gamma/\Delta^{2\nu+1}\).  The pure-flow calculation above
shows that \(T_0\Delta^{2\nu+1}\) converges to a finite positive constant.
Thus \(\gamma T_0\to0\) is equivalent, up to a convergent positive factor,
to \(\eta\to0\).

The entrance from \(Q,V\to\infty\) is where the cases \(\nu>0\) and
\(\nu=0\) differ.  Fix \(M\), and let \(\tau_M\) be the first scaled time at
which \(Q=M\).  When \(\nu>0\), integration of the exact power-potential
identity bounds the score-induced change of \(Q-V\) before \(\tau_M\) by
\(O(\eta M^{-2\nu})\), uniformly in the initial height.  Before the target
ratio is reached, the entrance time is bounded by a constant multiple of
\(M^{-(2\nu+1)}\).  Hence the entrance state converges to \((M,M-1)\) for
every fixed \(M\), and the time omitted above that section vanishes uniformly
as \(M\to\infty\).

At \(\nu=0\), write the exact scaled equations before the section as
\[
Q'=-4K_\varepsilon QV,\qquad
V'=-4K_\varepsilon QV-2\eta V,
\]
where the scalar-speed assumptions give \(K_\varepsilon\ge k>0\) for all
sufficiently small \(\varepsilon\).  With \(D=Q-V\),
\[
D'=2\eta V,
\qquad
-(\log Q)'=4K_\varepsilon V\ge4kV.
\]
Since \(D(0)=1\), integration to \(\tau_M\) gives the entrance estimate
\begin{equation}
0\le D(\tau_M)-1
\le \frac{\eta}{2k}\log\frac{Q(0)}{M}.
\label{eq:critical-log-entrance}
\end{equation}
Here \(Q(0)=x_0^2/\Delta=\Theta(\Delta^{-1})\).  The condition
\(\gamma T_0\log(1/\varepsilon)\to0\) is therefore equivalent at this scale
to \(\eta\log(1/\Delta)\to0\), and
Eq.~\eqref{eq:critical-log-entrance} again sends the entrance state to
\((M,M-1)\).  Moreover, while \(V/Q\ge r^2\),
\(-Q'\ge4kr^2Q^2\), so \(\tau_M\le(4kr^2M)^{-1}\).  This entrance envelope
vanishes as \(M\to\infty\).

On the compact corridor below \(Q=M\), uniform convergence of the vector
fields, continuous dependence of solutions, and transversality of the target
ratio give convergence to the \(\eta=0\) hitting time.  Taking first
\(\varepsilon\downarrow0\) and then \(M\to\infty\) proves
Eq.~\eqref{eq:topology-limited-regime}.  For \(\nu>0\), combining this result
with Eq.~\eqref{eq:symmetry-time} gives the power scale in
Eq.~\eqref{eq:score-topology-crossover}.

The additional logarithm at \(\nu=0\) cannot be removed.  In the constant
critical system \(K_\varepsilon\equiv1\), direct differentiation gives the
first integral
\[
V-Q-\frac{\eta}{2}\log Q=\text{constant}.
\]
For example, take \(\Delta_n=\exp(-(n+1)^2)\) and
\(\eta_n=(n+1)^{-1}\).  Then \(\eta_n\to0\), but
\(\eta_n\log(1/\Delta_n)\to\infty\).  The first integral forces the target
section to be reached at \(Q_{\rm hit,n}\to\infty\); since
\(T_n\le(4r^2Q_{\rm hit,n})^{-1}\), the normalized hitting time tends to
zero rather than to the positive feasibility-only limit.  This proves the
claimed endpoint failure under the weaker condition \(\gamma T_0\to0\).

Finally, on an isolated \(L\)-cycle, the only closed walks have lengths
\(\ell L\), and
\(\operatorname{tr}((W\circ W)^{\ell L})=Lp^\ell\).  Hence
\[
\phi_{\rm EXP}(p)
=L\sum_{\ell\ge1}\frac{p^\ell}{(\ell L)!},
\qquad
\phi_{\rm EXP}'(0)=\frac{1}{(L-1)!}.
\]
The same cycle block satisfies
\(\det(I-W\circ W)=1-p\), so
\(\phi_{\rm DAGMA}(p)=-\log(1-p)\) and
\(\phi_{\rm DAGMA}'(0)=1\).  The linear scalarization has
\(\Psi'(s)=1\), the cold quadratic penalty has \(\Psi'(s)=s\), and a seeded
ALM term \(\Psi(s)=\mu s+\rho s^2/2\) with \(\mu>0\) has
\(\Psi'(s)\sim\mu\). \qed

\paragraph{Proof of Corollary~\ref{cor:angular-criticality}.}
Use one \(K_m\) gadget.  For every \(S\subseteq[m]\) of size \(k\ge2\), set
\begin{equation}
x_i=\frac{1}{\sqrt{k}}\mathbf 1\{i\in S\}.
\end{equation}
All active coordinates are related by automorphisms and all inactive
derivatives vanish by evenness.  This is the elementary finite-group case of
the symmetric-criticality principle \citep{palais1979symmetric}.  Thus
\(\nabla g(x)=\lambda x\), the Lagrange equation on the unit sphere.
Every such support is cyclic, and there are
\(\sum_{k=2}^m\binom mk=2^m-m-1\) of them.

Giving each active coordinate an arbitrary sign preserves the Lagrange
equation because \(g\) is coordinatewise even.  This gives
\begin{equation}
\sum_{k=2}^m\binom mk2^k=3^m-2m-1
\end{equation}
signed critical points.  For DAGMA, with \(A=W\circ W\), the active block of
\(A^2\) is \(k^{-1}(J_k-I_k)\).  Thus
\(\rho(A)=\sqrt{(k-1)/k}<1\).

\subsection{Score--topology flow audit}
\label{app:score-topology-flow-audit}

Figure~\ref{fig:full-score-flow-audit} reports the full-matrix checks that
complement the normalized crossover experiment in the main text.  The
population score has a positive pathwise margin in every audited setting and
selects the false back edge even when that edge starts larger.  Sampling can
reverse this margin, which is why the finite-data experiment does not attain
perfect selection.  This distinction motivates the confidence set in
Section~\ref{sec:completion-margin}: the controlled score isolates the
dynamical scale, while the holdout certificate asks whether an empirical
score resolves it.

\begin{figure}[H]
    \centering
    \includegraphics[width=0.88\textwidth]{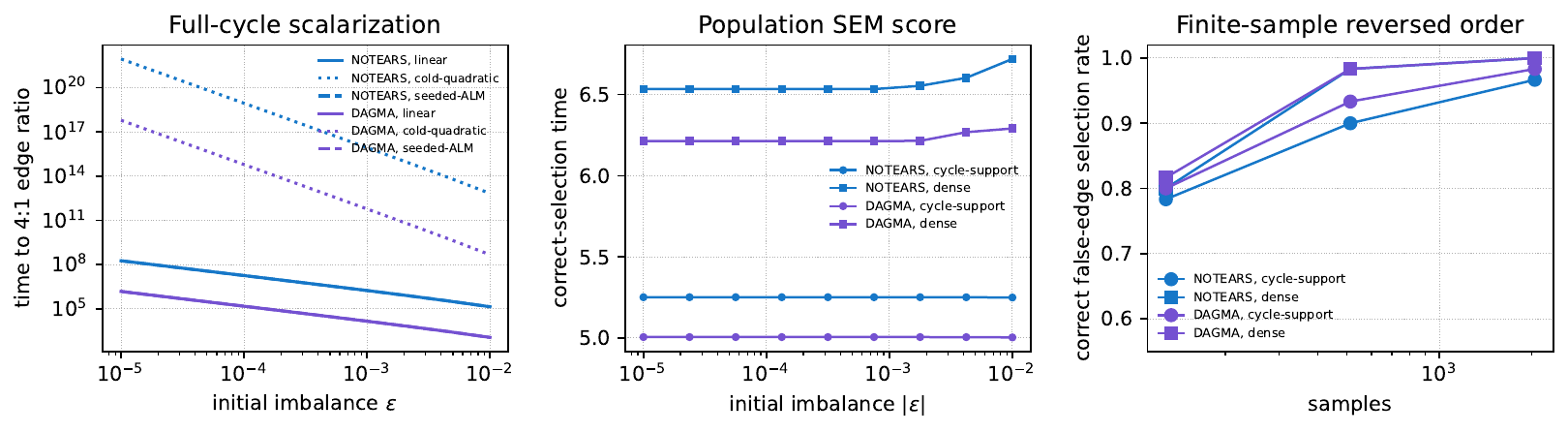}
    \caption{Full-matrix checks of Proposition~\ref{prop:symmetry-time}.
    Feasibility-only linear and seeded-ALM flows scale as
    \(\varepsilon^{-1}\), while the cold quadratic scales as
    \(\varepsilon^{-3}\) (left).  A population linear-SEM score removes the
    divergence for reversed initial rankings and dense flows (center).  The
    same event is not guaranteed from an empirical covariance, although its
    frequency increases with sample size (right).}
    \label{fig:full-score-flow-audit}
\end{figure}

\subsection{Hard-threshold support stability}
\label{app:threshold-proof}

\begin{proposition}[Stability of hard-threshold support]
\label{prop:threshold-externalization}
For $\theta>0$, define
\begin{equation}
\mathcal S_\theta(W)=\{(i,j):i\ne j,\ |W_{ij}|>\theta\},\qquad
m_\theta(W)=\min_{i\ne j}\bigl||W_{ij}|-\theta\bigr|.
\label{eq:threshold-margin}
\end{equation}
The support $\mathcal S_\theta(W)$ is unchanged for every off-diagonal
perturbation $\|\Delta\|_\infty<m_\theta(W)$; conversely, for every
$\eta>0$ some $\Delta$ with
$\|\Delta\|_\infty\le m_\theta(W)+\eta$ changes it.  Moreover, a positive
diagonal rescaling preserves algebraic support but need not preserve
$\mathcal S_\theta(W)$.
\end{proposition}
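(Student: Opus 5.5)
The proof splits into three parts, taken in order: invariance under small perturbations, sharpness of that bound, and the diagonal-rescaling counterexample. All three are coordinatewise, and the statement concerns only off-diagonal entries.

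\emph{Stability.} Fix an off-diagonal perturbation $\Delta$ with $\|\Delta\|_\infty<m_\theta(W)$ and an index $(i,j)$ with $i\ne j$. By the reverse triangle inequality, $\bigl||W_{ij}+\Delta_{ij}|-|W_{ij}|\bigr|\le|\Delta_{ij}|<m_\theta(W)\le\bigl||W_{ij}|-\theta\bigr|$. The distance from $|W_{ij}|$ to $\theta$ is at least $m_\theta(W)$, so $|W_{ij}+\Delta_{ij}|$ lies strictly on the same side of $\theta$ as $|W_{ij}|$. This also excludes equality with $\theta$, which would remove the entry from the support. Hence $\mathcal S_\theta(W+\Delta)=\mathcal S_\theta(W)$. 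If $m_\theta(W)=0$, the hypothesis is vacuous and there is nothing to prove.

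\emph{Sharpness.} Choose $(i,j)$ attaining the minimum in Eq.~\eqref{eq:threshold-margin}, with $\delta=\bigl||W_{ij}|-\theta\bigr|=m_\theta(W)$, and perturb only that entry.
\begin{itemize}
\item If $|W_{ij}|>\theta$, set $\Delta_{ij}=-\operatorname{sign}(W_{ij})\,\delta$ (any sign if $W_{ij}=0$, which cannot occur here since $\theta>0$). Then $|W_{ij}+\Delta_{ij}|=\theta$, which is not $>\theta$, so the entry leaves the support using only $\|\Delta\|_\infty=m_\theta$.
\item If $|W_{ij}|\le\theta$, move the entry outward by $\delta+\eta$ (using sign $+$ when $W_{ij}=0$). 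Then $|W_{ij}+\Delta_{ij}|=\theta+\eta>\theta$, so the entry enters the support with $\|\Delta\|_\infty=m_\theta+\eta$.
\end{itemize}
In both cases the norm is at most $m_\theta(W)+\eta$, as claimed. The only point needing care is this asymmetric boundary: the strict inequality defining $\mathcal S_\theta$ means exact equality already changes the support from above, while from below the slack $\eta$ is needed.

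\emph{Diagonal rescaling.} Take $D=\operatorname{diag}(\lambda)$ with $\lambda_k>0$, so that $(DWD^{-1})_{ij}=\lambda_iW_{ij}/\lambda_j$. Every entry is multiplied by a positive factor, so zeros and nonzeros are preserved and the algebraic support is unchanged. For the counterexample, take $d=2$ and $W_{12}=\theta/2$, and choose $\lambda_1/\lambda_2=4$. The rescaled entry is $2\theta>\theta$, so it enters $\mathcal S_\theta$ although the algebraic support is the same. The same example works for column rescalings $WD$, and for row rescalings if that is the intended convention. The main obstacle is therefore only bookkeeping of the boundary case $|W_{ij}|=\theta$; no earlier result of the paper is needed.
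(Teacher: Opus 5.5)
Your proof is correct and follows the paper's argument: the reverse triangle inequality gives stability, a single-coordinate perturbation at an entry attaining $m_\theta(W)$ gives sharpness, and a positive diagonal similarity scaling pushed across $\theta$ gives the rescaling claim. The one difference is harmless and slightly sharper: for an entry above the threshold you stop exactly at $|W_{ij}|=\theta$ using $\|\Delta\|_\infty=m_\theta(W)$, whereas the paper overshoots by a small $\eta'$ chosen so the entry does not cross zero.
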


\begin{proposition}[Scale invariance of profile-score event paths]
\label{prop:profile-invariance}
For centered data $X$, define
\begin{equation}
\Delta_{i\to j}(P)=\tfrac12\log
\frac{{\rm RSS}_j(P\setminus\{i\})}{{\rm RSS}_j(P)}.
\label{eq:profile-deletion}
\end{equation}
Under $X'=XD$ with positive diagonal $D$, every positive-RSS deletion score
is unchanged.  Conditional on the same starting algebraic support, repeatedly
deleting the minimum-score edge returned by a complete cycle oracle, with
fixed tie breaking, produces the same event path in both units.
\end{proposition}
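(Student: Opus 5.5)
The statement to prove is Proposition~\ref{prop:profile-invariance}. The argument splits into a per-regression invariance step and an induction over the deletion path.

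\emph{Per-regression invariance.} Fix a child $j$ and a parent set $P$. Under $X'=XD$ the columns rescale as $X'_k=d_kX_k$, where $d_k>0$ is the $k$th diagonal entry of $D$. The least-squares problem of regressing $X'_j$ on $\{X'_k:k\in P\}$ is the original problem after reparameterizing the coefficients by $\beta'_k=d_j\beta_k/d_k$ and multiplying the response by $d_j$. Because $\{d_kX_k:k\in P\}$ spans the same column space as $\{X_k:k\in P\}$, the fitted projection is the original one scaled by $d_j$. The residual is therefore $d_j$ times the original residual, so ${\rm RSS}'_j(P)=d_j^2\,{\rm RSS}_j(P)$ for every $P$, including $P\setminus\{i\}$ and $P=\varnothing$. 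Centering commutes with column scaling, so centered data stay centered. When both RSS values are positive, the ratio in Eq.~\eqref{eq:profile-deletion} has the common factor $d_j^2$ cancel, and $\Delta'_{i\to j}(P)=\Delta_{i\to j}(P)$. If an RSS value is zero it stays zero, because $d_j>0$. Hence the positive-RSS condition itself is unit-invariant, and the two regimes cannot mix.

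\emph{Induction on deletion steps.} Let $A_k$ be the algebraic support after $k$ deletions. The inductive hypothesis is that the supports coincide in both units, so $A'_k=A_k$. Three facts then hold at step $k$.
\begin{itemize}
\item The cycle oracle depends only on $A_k$, so it returns the same cycle and hence the same candidate set of edges in both units.
\item Each candidate edge $i\to j$ has score computed from $P=\operatorname{pa}_{A_k}(j)$, which is the same parent set in both units. By the first step its score is identical.
\item Since all scores agree, the set of minimizers is the same, and the fixed tie-breaking rule picks the same edge.
\end{itemize}
Therefore $A'_{k+1}=A_k\setminus\{e\}=A_{k+1}$. The termination condition (the oracle reports acyclicity) also depends only on support, so both paths stop at the same step, and the event paths coincide.

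The main thing to pin down is that the scores are functions of the data and the current support only, not of current weight magnitudes. If the procedure used fitted weights, these would rescale as $\beta'=d_j\beta/d_k$, which is exactly what breaks thresholding in Proposition~\ref{prop:threshold-externalization}. The proof should state explicitly that the profile score refits $P$ from scratch at each step. It should also handle the rank-deficient case, where the fitted coefficients are not unique but the projection, and hence the RSS, is still unique.
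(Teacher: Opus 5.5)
Your proof is correct and follows the paper's route: $X_PD_P$ has the same column space as $X_P$, so ${\rm RSS}'_j(P)=d_j^2\,{\rm RSS}_j(P)$ and the log-ratio is unchanged. An induction over deletions, using a support-determined cycle oracle and deterministic tie breaking, then gives identical event paths. Your extra remarks on zero-RSS invariance, centering, rank deficiency, and termination are sound refinements that the paper leaves implicit.
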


We prove Proposition~\ref{prop:threshold-externalization}.  For every
off-diagonal coordinate, the reverse triangle inequality gives
\begin{equation}
\bigl||W_{ij}+\Delta_{ij}|-|W_{ij}|\bigr|
\le |\Delta_{ij}|<m_\theta(W)
\le\bigl||W_{ij}|-\theta\bigr|.
\label{eq:threshold-reverse-triangle}
\end{equation}
Thus $|W_{ij}+\Delta_{ij}|-\theta$ has the same sign as
$|W_{ij}|-\theta$ for every coordinate, and the strict threshold support is
unchanged.

Conversely, choose a coordinate attaining the finite minimum
$m=m_\theta(W)$.  If $|W_{ij}|>\theta$, perturb opposite to its sign by
$m+\eta'$; if $|W_{ij}|\le\theta$, perturb along its sign (choosing either
sign at zero) by $m+\eta'$, where $0<\eta'\le\eta$ is small enough to avoid
crossing zero in the first case.  The chosen coordinate crosses the strict
threshold and the perturbation norm is at most $m+\eta$.  Hence $m$ is the
supremum of open $\ell_\infty$ balls on which support is constant.  Taking
one coefficient equal to $\theta$ shows $\inf_Wm_\theta(W)=0$.

For the scaling claim, let $D=\operatorname{diag}(d_1,\ldots,d_d)$ with
$d_i>0$.  Then
\begin{equation}
(D^{-1}WD)_{ij}=\frac{d_j}{d_i}W_{ij}.
\label{eq:diagonal-rescale}
\end{equation}
Every multiplier is positive, so exact zero support, directed cycles, and
acyclicity are preserved.  For any selected nonzero coefficient, however,
$d_j/d_i$ can be any positive number.  Choosing it above or below
$\theta/|W_{ij}|$ moves that coefficient across the fixed threshold.  This
proves non-invariance.  If standardization is prescribed and repeated after
rescaling, this particular source of variation disappears within that fixed
pipeline.  The proposition makes the narrower statement that a numerical
cutoff depends on the chosen coordinate convention, and that its robustness
within any such convention is governed by $m_\theta(W)$.

We next prove Proposition~\ref{prop:profile-invariance}.  Let $P$ be a parent
set for node $j$, and let $\Pi_P$ denote orthogonal projection onto the column
span of $X_P$.  Under $X'=XD$, the response becomes $X'_j=d_jX_j$, while the
design becomes $X'_P=X_PD_P$.  Because $D_P$ is invertible,
$\operatorname{col}(X'_P)=\operatorname{col}(X_P)$ and hence the two designs
have the same projection operator.  Therefore
\begin{equation}
 {\rm RSS}'_j(P)
 =\|(I-\Pi_P)d_jX_j\|_2^2
 =d_j^2{\rm RSS}_j(P).
\label{eq:profile-rss-rescale}
\end{equation}
The same factor $d_j^2$ multiplies the numerator and denominator of
Eq.~\eqref{eq:profile-deletion}, so every positive-RSS deletion score is
unchanged.  Conditional on one fixed algebraic support, diagonal rescaling
also preserves every directed cycle.  Thus the cycle oracle returns the same
candidate edges.  With deterministic tie breaking, equality of all candidate
scores implies equality of the first deletion.  Induction gives the same
support and candidate set after every later deletion, hence the same event
path and final certified DAG.  This proves the proposition.  The conditioning
is essential: the claim does not assert that a statistical frontend produces
the same support under rescaling.

\subsection{Relation to prior work and alternative assumptions}
\label{app:closest-work}

Table~\ref{tab:closest-work} compares the assumptions and conclusions of the
closest results.  The algebraic, topological, and inferential tools listed in
the middle column are classical.  The right column records what follows after
they are tied to cycle-completion geometry or to the ordinary-coordinate
symmetry construction.

\begin{table}[p]
\centering
\scriptsize
\renewcommand{\arraystretch}{0.98}
\caption{Closest prior results and the boundary of the present claims.}
\label{tab:closest-work}
\begin{tabular}{p{0.19\textwidth}p{0.28\textwidth}p{0.40\textwidth}}
\toprule
Work & Established there & Distinction here \\
\midrule
\citet{hultman2004directed}
& The simplicial complex of acyclic edge sets and its homotopy type.
& We condition on a fixed base DAG and restrict to candidate coordinates.
These local sections realize arbitrary finite homotopy types, despite the
restricted homotopy type of the full complex. \\
\addlinespace
\citet{ehrenborg2006topology}
& Every finite complex becomes an independence complex after barycentric
subdivision.
& This fact is an ingredient in an ordinary-adjacency reduction from graph
independence complexes to local DAG sections; the reduction yields the
homology and gradient consequences used here. \\
\addlinespace
\citet{howald2001multiplier}
& Multiplier ideals and log-canonical thresholds of monomial ideals from
their Newton polyhedra.
& The classical fractional-cover formula, together with an elementary
sublevel-volume sandwich, identifies the continuous DAG tolerance exponent. \\
\addlinespace
\citet{drton2016wald,sturma2024testing}
& Nonstandard inference for singular polynomial restrictions and bootstrap
tests for many irregular constraints.
& Theorem~\ref{thm:statistical-scaling} supplies the problem-specific part:
the exact singular degree and initial polynomial determined by DAG completion
topology. \\
\addlinespace
\citet{wei2020fears}
& Gradient degeneracy for positive polynomial constraints; absolute-value KKT conditions and local search.
& We cover arbitrary smooth exact maps on candidate subspaces, identify the topology-dependent $q$ versus $2q$ orders, and bound vector output dimension.  Their local search is prior art for the hybrid category. \\
\addlinespace
\citet{zhang2025analytic}
& Positive-coefficient analytic functions that yield exact DAG constraints,
closure rules for constructing new constraints, and efficient evaluation.
& These families furnish examples to which the representation-level lower
orders and edge-ranking obstructions apply on the candidate restrictions
considered here. \\
\addlinespace
\citet{bello2022dagma}
& An exact M-matrix log-determinant constraint, invexity, a
reachability-dependent Hessian, and central-path optimization.
& We preserve its Euclidean invexity result.  We prove exact $2q$ local order
and exponentially many nonzero cyclic strata with zero tangential
edge-ranking signal; these are not Euclidean stationary points. \\
\addlinespace
\citet{du2018algorithmic}
& Squared-norm differences are conserved by gradient flow in homogeneous
multi-layer models.
& On an exact DAG-cycle restriction, this invariant reduces the flow to a
one-dimensional product dynamics and yields the
scalarization-dependent \(\varepsilon^{-(2\nu+1)}\) DAG-selection law and the
score-margin escape condition. \\
\addlinespace
\citet{shridharan2025beta}
& $\beta$th-order coordinate derivatives and local search for the fixed family $h(|W|^\beta)$.
& The unavoidable order is instead the joint completion statistic $q_W(F)$ for any smooth exact map; it can grow beyond fixed $\beta$.  We also separate vector from nonnegative scalar maps and prove a dimension lower bound. \\
\addlinespace
\citet{ng2024sober}
& Empirical and optimization consequences of sparsity, nonconvexity, and thresholding.
& We prove that no positive continuous tolerance certifies support and quantify the local detection order before a threshold is applied. \\
\addlinespace
\citet{yu2021nocurl,massidda2024cosmo,gillot2022large}
& Direct acyclic parameterization/orientation and continuous--feedback-arc-set hybrid optimization.
& These methods lie outside a fixed smooth exact scalar.  The invariant-flow
result explains why a nonsmooth, discrete, or score-asymmetric mechanism can
supply support-selection information absent from the feasibility gradient. \\
\addlinespace
\citet{rey2026nonnegative}
& A smooth constraint without the Hadamard square and a benign population landscape under nonnegative edge weights.
& Sparse DAGs lie on the boundary of the nonnegative orthant, so the open signed-neighborhood assumption used here does not apply. \\
\bottomrule
\end{tabular}
\end{table}

The bounds apply to local Taylor information in an exact representation near a
DAG support boundary.  Algorithms that use higher derivatives, change the
parameterization, modify support discretely, or stop before exact support fall
outside the corresponding assumptions.

\subsection{Scalarization and augmented Lagrangians}

For $p\ge1$, define $P_p(W)=\norm{H(W)}_2^p$.

\begin{proposition}[Sensitivity--smoothness frontier]
\label{prop:frontier}
If $H(W+tU)=t^qv+o(t^q)$ with $v\ne0$, then
$P_p(W+tU)=\norm{v}_2^p|t|^{pq}+o(|t|^{pq})$.  As a function of the
residual, $z\mapsto\norm z_2^p$ is nonsmooth at zero for $p=1$, is $C^1$
but has non-Lipschitz gradient for $1<p<2$, and has locally Lipschitz
gradient for $p\ge2$.
\end{proposition}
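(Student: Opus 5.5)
The statement has two independent parts, and I would prove them separately.

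\emph{The asymptotic expansion.} Let \(\psi(z)=\norm{z}_2^p\) and write
\[
H(W+tU)=t^q\bigl(v+r(t)\bigr),\qquad r(t)\to0 \text{ as } t\to0.
\]
By positive homogeneity of the norm,
\[
P_p(W+tU)=|t|^{pq}\,\norm{v+r(t)}_2^p .
\]
Since \(v\ne0\) and \(\psi\) is continuous at \(v\), we have \(\norm{v+r(t)}_2^p\to\norm{v}_2^p\). This is exactly
\[
P_p(W+tU)=\norm{v}_2^p|t|^{pq}+o(|t|^{pq}).
\]
No smoothness of \(\psi\) at the origin is needed here, because the argument \(v+r(t)\) stays away from zero. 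The same reasoning covers \(t<0\) through \(|t|^q\) and the sign of \(t^q\).

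\emph{Regularity of \(\psi\) at zero.} Away from zero, \(\psi\) is smooth, with gradient \(\nabla\psi(z)=p\norm{z}^{p-2}z\).

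\textbf{Case \(p=1\).} Restricting \(\psi\) to a line gives \(s\mapsto|s|\norm{e}\), which is not differentiable at \(s=0\). So \(\psi\) is nonsmooth at zero.

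\textbf{Case \(1<p<2\).} First, \(\psi(z)=o(\norm z)\), so \(\psi\) is differentiable at zero with gradient \(0\). Second, \(\norm{\nabla\psi(z)}=p\norm z^{p-1}\to0\), so \(\nabla\psi\) is continuous and \(\psi\) is \(C^1\). To rule out a Lipschitz gradient, I compare \(z=se\) with the origin:
\[
\frac{\norm{\nabla\psi(se)-\nabla\psi(0)}}{s}=p\,s^{p-2}\to\infty .
\]

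\textbf{Case \(p\ge2\).} I need a local Lipschitz bound on \(\nabla\psi(z)=p\norm z^{p-2}z\).
\begin{itemize}
\item For \(p=2\), the map is linear and the claim is immediate.
\item For \(p>2\), the Hessian away from zero is
\[
p\norm z^{p-2}I+p(p-2)\norm z^{p-4}zz^\top ,
\]
whose operator norm is at most \(p(p-1)\norm z^{p-2}\). This is bounded on balls, so the mean-value inequality gives local Lipschitz continuity along segments that avoid the origin.
\item Segments through the origin are handled by continuity of \(\nabla\psi\): perturb the segment slightly so it misses zero, then pass to the limit. Alternatively, note directly that \(\norm{\nabla\psi(z)-\nabla\psi(0)}=p\norm z^{p-1}\le C\norm z\) on bounded sets.
\end{itemize}

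\emph{Main obstacle.} The only delicate point is the Lipschitz claim for \(p\ge2\) near the origin, where the Hessian formula is undefined. The boundedness of the Hessian on punctured balls, together with continuity of \(\nabla\psi\), settles it.
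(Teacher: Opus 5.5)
Your proposal is correct and follows the paper's argument: positive homogeneity plus continuity of the norm give the expansion, and the explicit gradient $p\|z\|_2^{p-2}z$, whose derivative scales like $\|z\|^{p-2}$, gives the three regularity regimes. For the segment-through-the-origin case when $p>2$, rely on your direct bound $p\|z\|^{p-1}\le C\|z\|$ combined with the triangle inequality, since perturbing the segment cannot avoid the origin in a one-dimensional residual space.
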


\begin{corollary}[Augmented-Lagrangian hierarchy]
\label{cor:alm}
For the vector ALM
$Q_{\lambda,\rho}=\lambda^\top H+\rho\norm{H}^2/2$, the zero-multiplier
initialization vanishes to order at least $2q$ along an equal-scale
completion; on a $q$-sharp path this order is exact, while a generic nonzero
multiplier exposes order $q$.  For a nonnegative scalar ALM
$Q_{\mu,\rho}=\mu h+\rho h^2/2$, the corresponding lower bounds are $4q$
for $\mu=0$ and $2q$ for $\mu\ne0$, with equality when $h$ has exact order
$2q$.  In these sharp cases, derivative orders are one lower.
\end{corollary}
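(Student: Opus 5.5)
The plan is to reduce every claim to Proposition~\ref{prop:frontier} applied to the two summands, and then compare their orders. For the vector ALM, write
\[
Q_{\lambda,\rho}(W+tU)=\lambda^\top H(W+tU)+\tfrac{\rho}{2}\norm{H(W+tU)}_2^2 .
\]
Theorem~\ref{thm:vector} already gives $H(W+tU)=O(|t|^q)$ along an equal-scale completion.

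\emph{Zero multiplier.} With $\lambda=0$ only the squared term remains, so $Q=O(|t|^{2q})$. If the path is $q$-sharp, i.e.\ $H(W+tU)=t^qv+o(t^q)$ with $v\ne0$, then Proposition~\ref{prop:frontier} with $p=2$ gives
\[
Q=\tfrac{\rho}{2}\norm v_2^2|t|^{2q}+o(|t|^{2q}),
\]
which is exact order $2q$ whenever $\rho>0$.

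\emph{Nonzero multiplier.} On a sharp path the linear term equals $t^q\lambda^\top v+o(t^q)$. The squared term is $O(t^{2q})$, which is $o(t^q)$, so it is dominated. The order is therefore exactly $q$ whenever $\lambda^\top v\ne0$. This condition is what ``generic'' means here: it excludes only the hyperplane $v^\perp$.

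\emph{Scalar ALM.} The same bookkeeping applies with Theorem~\ref{thm:scalar} in place of Theorem~\ref{thm:vector}, which gives $h=O(|t|^{2q})$. Then $\tfrac{\rho}{2}h^2=O(|t|^{4q})$, the bound for $\mu=0$. For $\mu\ne0$, $\mu h=O(|t|^{2q})$ dominates the squared term. When $h$ has exact order $2q$, so $h=C|t|^{2q}+o(|t|^{2q})$ with $C>0$ (Proposition~\ref{prop:walk-sharp} for walk constraints), the orders are exactly $4q$ (coefficient $\rho C^2/2$) and $2q$ respectively. No cancellation between the two terms is possible in the $\mu\ne0$ case because their orders differ.

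\emph{Derivative orders.} Consider $\frac{d}{dt}Q(W+tU)$, which is the restricted gradient paired with $U$. Each composite above is $C^1$ in $t$ with a leading monomial $c|t|^k$, $c\ne0$, so I would differentiate the Taylor expansion. To justify this, apply Theorems~\ref{thm:vector} and~\ref{thm:scalar} to the derivative as well: they bound $DH_F(tu)$ and $Dh_F(tu)$ by $O(|t|^{q-1})$ and $O(|t|^{2q-1})$. Combined with the product rule, e.g.\ $D\norm H^2=2H^\top DH$, this gives upper bounds of order $k-1$.

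For the matching lower bound I would use the Euler-identity argument of Proposition~\ref{prop:walk-sharp}. The leading homogeneous form $P_k$ of the composite satisfies $\langle\nabla P_k(u),u\rangle=kP_k(u)\ne0$, so the gradient norm is at least a positive multiple of $|t|^{k-1}$.

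The main obstacle is the regularity needed to make ``exact order'' meaningful for derivatives. It is cleanest to state that $H$ is assumed $C^q$ (respectively $h$ is $C^{2q}$), so that the leading forms exist and the Euler identity applies. With that assumption recorded, the rest is routine order comparison.
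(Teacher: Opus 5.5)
Your proposal is correct and follows the paper's argument. Both compare the orders of the two summands, using Theorems~\ref{thm:vector} and~\ref{thm:scalar} for the lower bounds and sharpness for exactness, with $\lambda^\top v\ne0$ as the meaning of ``generic''; the distinct orders then rule out cancellation. Your Euler-identity treatment of the derivative orders is a more careful version of the paper's one-line remark that differentiation lowers each sharp order by one. One detail to make explicit: for $\lambda=0$ the composite $\tfrac{\rho}{2}\norm{H}^2$ is only $C^q$, so its leading form and gradient expansion must come from the product rule $\nabla\norm{H_F}^2=2DH_F^\top H_F$, as you indicate, rather than from its own Taylor theorem.
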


\begin{corollary}[Order under residual-scaled multipliers]
\label{cor:endogenous}
Let $R(t)=O(t^p)$ and $DR(t)=O(t^{p-1})$.  If
$\lambda(t)=O(t^p)$ and $\rho=O(1)$, then the primal gradient of
$\lambda^\top R+\rho\norm{R}^2/2$, treating the multiplier as fixed within
the primal step, is $O(t^{2p-1})$.  Any fixed number of updates
$\lambda\leftarrow\lambda+\rho R$ from zero, with bounded penalties and
residual-comparable iterates, preserves this lower bound.  Recovering order
$p-1$ requires a multiplier that does not vanish with the local residual and
is not orthogonal to its leading coefficient.
\end{corollary}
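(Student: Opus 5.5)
The plan is to treat the primal gradient as the sum of two pieces and bound each by the stated orders. Fix the multiplier $\lambda$ within the primal step. Differentiating $\lambda^\top R+\rho\norm{R}^2/2$ in the primal variable gives
\begin{equation*}
\nabla_W\bigl[\lambda^\top R+\tfrac{\rho}{2}\norm{R}^2\bigr]
=DR(t)^\top\lambda(t)+\rho\,DR(t)^\top R(t).
\end{equation*}
The first term is bounded by $\norm{DR(t)}_{\rm op}\norm{\lambda(t)}=O(t^{p-1})O(t^p)$. The second is bounded by $\rho\norm{DR(t)}_{\rm op}\norm{R(t)}=O(1)O(t^{p-1})O(t^p)$. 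Both are $O(t^{2p-1})$, which proves the first claim.

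For the update claim, I argue by induction on the number $k$ of updates. The hypothesis is that, after $k$ updates from $\lambda_0=0$, the multiplier satisfies $\lambda_k=O(t^p)$. I interpret "residual-comparable iterates" to mean that each residual evaluated along the way is $O(t^p)$, with uniform constants over the finitely many steps. Then
\begin{equation*}
\lambda_{k+1}=\lambda_k+\rho_kR_k=O(t^p)+O(1)O(t^p)=O(t^p).
\end{equation*}
Because the number of updates is fixed, the accumulated constants stay finite. The first paragraph then applies at every stage and gives the $O(t^{2p-1})$ lower bound on vanishing order.

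For the converse-type statement, write $R(t)=t^pv+o(t^p)$ and $DR(t)=t^{p-1}J+o(t^{p-1})$. Suppose $\lambda(t)\to\lambda_\ast\ne0$. Then the multiplier term has leading part $t^{p-1}J^\top\lambda_\ast$, while the penalty term remains $O(t^{2p-1})$. So order $p-1$ is exposed exactly when $J^\top\lambda_\ast\ne0$. To connect this with the stated non-orthogonality condition, I differentiate along the path direction $U$. Euler-type homogeneity gives $JU=p\,v$ at leading order, so $\langle J^\top\lambda_\ast,U\rangle=p\langle\lambda_\ast,v\rangle$. If $\langle\lambda_\ast,v\rangle\ne0$, the directional derivative along the path has order exactly $p-1$. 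If instead $\lambda(t)\to0$, we may write $\lambda(t)=O(t^{a})$ with $a>0$, and the multiplier term is already $o(t^{p-1})$. Likewise, if $\lambda_\ast\perp v$, the directional component loses its $t^{p-1}$ term. Hence order $p-1$ requires a non-vanishing multiplier that is not orthogonal to $v$.

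The main obstacle is this last step. It needs the path derivative identity $\frac{d}{dt}R(t)=DR(t)U$ together with the expansion of $R$, which requires $C^{p}$ regularity. I would state it as a directional statement along the path rather than a claim about the full gradient norm.
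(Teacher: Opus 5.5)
Your proposal is correct and follows the paper's argument: the same split $DR(t)^\top\lambda+\rho\,DR(t)^\top R(t)$, the same termwise $O(t^{2p-1})$ bound, and the same finite induction on updates with residual-comparable iterates ($t_s=\Theta(t)$). For the last claim, the paper asserts it in one line with the gradient taken in the path variable $t$, so the leading term $p\,t^{p-1}\langle v,\lambda_\ast\rangle$ appears directly. Your full-$W$-gradient version needs the Euler identity $JU=pv$ and rightly notes that non-orthogonality is necessary only for the path-directional component. The detour through $\lambda(t)=O(t^a)$ is unnecessary (and not justified in general), since $\lambda(t)\to0$ already makes the multiplier term $o(t^{p-1})$.
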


\paragraph{Residual-power frontier.}
If $H(t)=t^qv+o(t^q)$, continuity of the norm and homogeneity give
\begin{equation}
\|H(t)\|_2^p=|t|^{pq}\|v+o(1)\|_2^p
=\|v\|_2^p|t|^{pq}+o(|t|^{pq}).
\end{equation}
For $z\ne0$,
$\nabla\|z\|_2^p=p\|z\|_2^{p-2}z$.  At zero, the norm is nonsmooth for
$p=1$; for $1<p<2$ this gradient extends continuously but its derivative
scales as $\|z\|^{p-2}$ and is unbounded; for $p\ge2$ it is locally
Lipschitz.  These are residual-space statements.  Composition with a
particular high-order $H$ can add pathwise regularity, but it cannot provide
a uniform Lipschitz guarantee over arbitrary residual directions.

Assume a sharp path $H(t)=t^qv+o(t^q)$ with $v\ne0$.  Then
\begin{equation}
\frac12\|H(t)\|_2^2=\frac12\|v\|_2^2t^{2q}+o(t^{2q}).
\end{equation}
More generally, Theorem~\ref{thm:vector} always yields the lower-order
bound $O(t^{2q})$.  For
$Q_{\lambda,\rho}=\lambda^\top H+\rho\|H\|_2^2/2$, zero initialization
$\lambda=0$ therefore removes the order-$q$ term.  A multiplier with
$\lambda^\top v\ne0$ restores it; this is the meaning of ``generic'' in
Corollary~\ref{cor:alm}.

For a sharp nonnegative scalar $h(t)=at^{2q}+o(t^{2q})$, $a>0$,
\begin{equation}
Q_{\mu,\rho}(t)=\mu a t^{2q}+\frac\rho2a^2t^{4q}
+\mu\,o(t^{2q})+o(t^{4q}).
\end{equation}
Thus $\mu=0$ has order $4q$, while every nonzero $\mu$ has order $2q$.
Differentiation lowers each sharp order by one.  If the representation has
higher-than-minimal order, these statements remain lower bounds on the first
possible nonzero term.

\paragraph{Residual-generated multipliers.}
Let $R(t)=O(t^p)$ and $DR(t)=O(t^{p-1})$.  During a primal update the
multiplier is fixed, so
\begin{equation}
\nabla_tQ_{\lambda,\rho}
=DR(t)^\top\lambda+\rho DR(t)^\top R(t).
\end{equation}
If $\lambda=O(t^p)$ and $\rho=O(1)$, both terms are
$O(t^{2p-1})$.  Starting from zero, a fixed number of updates
$\lambda_{s+1}=\lambda_s+\rho_sR(t_s)$ with $t_s=\Theta(t)$ and bounded
$\rho_s$ leaves $\lambda_s=O(t^p)$ by induction.  An order-$p-1$ term can
appear only if the multiplier has a nonvanishing leading scale and nonzero
inner product with the leading residual coefficient.  Substituting $p=q$
for a vector representation and $p=2q$ for a nonnegative scalar proves
Corollary~\ref{cor:endogenous}.  A penalty schedule that diverges as
$t\to0$ can counteract the coefficient, but then does so by ill-conditioning,
not by restoring a missing Taylor term.

\subsection{Finite precision and restricted correction time}
\label{app:numerical-consequences}

If $|R(t)|\le C|t|^p$, then $|R(t)|<\varepsilon$ whenever
\begin{equation}
|t|<r_\varepsilon:=\left(\frac{\varepsilon}{C}\right)^{1/p}.
\label{eq:visibility-radius}
\end{equation}
This interval follows directly from the stated bound.  Its numerical value
depends on the formulation-specific constant $C$ and on the comparison rule.
For fixed $C$ and $0<\varepsilon<C$, the radius increases to one as
$p\to\infty$.

For the gradient-flow calculation, assume explicitly that
\begin{equation}
\phi'(t)=ap t^{p-1}+o(t^{p-1}),\qquad a>0, p\ge2, t\downarrow0.
\end{equation}
For sufficiently small $t$, $\phi'(t)$ is bounded above and below by
positive constants times $t^{p-1}$.  Separating variables in
$\dot t=-\phi'(t)$ gives
\begin{equation}
T(\varepsilon)\asymp\int_\varepsilon^{t_0}t^{-(p-1)}\,dt,
\end{equation}
and therefore
\begin{equation}
T(\varepsilon)=
\begin{cases}
\Theta(\log(t_0/\varepsilon)), & p=2,\\
\Theta(\varepsilon^{-(p-2)}), & p>2.
\end{cases}
\label{eq:gradient-flow}
\end{equation}
Thus the sharp vector least-squares and scalar zero-multiplier ALM paths have
one-dimensional correction exponents $2q-2$ and $4q-2$, respectively.

\begin{corollary}[Finite smooth penalties are not locally exact]
\label{cor:finite-penalty}
Let $F$ be a minimal cycle completion at a DAG $W$, let $U$ activate every
edge in $F$, and let $\mathcal L$ be $C^1$ with
$D\mathcal L(W)[U]<0$.  For any finite $\gamma\ge0$, $W$ is not a local
minimizer of
\begin{equation}
\mathcal L(\widetilde W)+\gamma P(\widetilde W)
\end{equation}
when $P=h$ is a smooth nonnegative exact scalar or
$P=\frac12\|H\|_2^2$ is a squared smooth exact vector residual.
\end{corollary}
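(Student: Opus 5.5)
The plan is to compare the penalized objective along the one-sided ray \(\widetilde W=W+tU\), \(t\downarrow0\), with its value at \(W\).  Write \(\varphi(t)=\mathcal L(W+tU)+\gamma P(W+tU)\).  Since \(W\) is a DAG, exactness gives \(P(W)=0\) in both cases: \(h(W)=0\) for the scalar, and \(H(W)=0\), hence \(\tfrac12\|H(W)\|_2^2=0\), for the vector.  So \(\varphi(0)=\mathcal L(W)\).

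Because \(\mathcal L\) is \(C^1\), we have \(\mathcal L(W+tU)=\mathcal L(W)+t\,D\mathcal L(W)[U]+o(t)\).  The first-order decrease is therefore linear in \(t\), with slope \(D\mathcal L(W)[U]<0\).

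For the penalty term, apply the completion-order barriers to the candidate restriction.  Let \(q=q_W(F)=|F|\), and note \(q\ge2\): the diagonal is zero, so a single candidate edge on a DAG closes a cycle only together with a fixed path, and completing a cycle needs at least one candidate edge.  In the scalar case, Theorem~\ref{thm:scalar} gives \(h(W+tU)=O(|t|^{2q})\) under \(C^{2q}\) regularity.  If we only want to assume \(h\) is \(C^1\), a weaker statement suffices.  By Theorem~\ref{thm:jet-ideal} and nonnegativity, \(h\) has a minimum at \(W\), so \(Dh(W)=0\).  Hence \(h(W+tU)=o(t)\), which is all we need.  In the vector case, \(H\) is smooth with \(H(W)=0\), so \(\|H(W+tU)\|_2=O(t)\) and \(\tfrac12\|H\|_2^2=O(t^2)=o(t)\).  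The sharper estimate \(O(t^{2q})\) from Theorem~\ref{thm:vector} is available but not needed.

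Combining the two expansions, for any fixed finite \(\gamma\ge0\),
\[
\varphi(t)-\varphi(0)=t\,D\mathcal L(W)[U]+o(t),
\]
which is strictly negative for all sufficiently small \(t>0\).  Since \(W+tU\to W\), every neighborhood of \(W\) contains points with strictly smaller objective, so \(W\) is not a local minimizer.  The domain issue is harmless: \(\Omega\) is open, so \(W+tU\in\Omega\) for small \(t\).

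The main subtlety is being precise about the regularity needed for the \(o(t)\) claim in the scalar case.  The argument only requires that the penalty's first derivative vanish at \(W\) along \(U\).  This follows from smoothness plus nonnegativity, since \(W\) is a minimizer of \(h\), or plus squaring, in the vector case.  The deeper barrier theorems explain why the mismatch is severe, of order \(2q\) or more rather than \(1\), but the corollary itself needs only order \(\ge2\).

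I would also note where the hypotheses enter.  Minimality of \(F\) and the fact that \(U\) activates every edge are not needed for the negativity argument itself.  Their role is to make the path leave \(\mathcal D\), so that the descent direction genuinely exits the feasible DAG set and exact feasibility is lost.  That is the substantive content of ``not locally exact.''
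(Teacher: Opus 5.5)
Your argument is correct, and it differs from the paper's in how the penalty is controlled. Both proofs work along the ray \(W+tU\) with \(\mathcal L(W+tU)=\mathcal L(W)+tD\mathcal L(W)[U]+o(t)\).

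\begin{itemize}
\item \textbf{Paper's route.} It bounds the penalty with the completion barriers, \(P(W+tU)=O(t^{2q})=o(t)\). This uses Theorem~\ref{thm:scalar} for \(h\), and Theorem~\ref{thm:vector} squared for \(\tfrac12\|H\|_2^2\).
\item \textbf{Your route.} You use only that the penalty's first derivative vanishes at a DAG. The exact nonnegative \(h\) attains its global minimum \(0\) at \(W\) on the open set \(\Omega\). This is Proposition~\ref{prop:universal-degeneracy}; Theorem~\ref{thm:jet-ideal} is not needed for it. A squared residual vanishing at \(W\) is \(O(t^2)\).
\end{itemize}

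What each route buys:
\begin{itemize}
\item Your version needs only differentiability at \(W\), not \(C^{2q}\) or \(C^q\) regularity.
\item It also proves non-minimality for every direction with \(D\mathcal L(W)[U]<0\). So, as you observe, minimality of \(F\) and full activation of \(U\) only serve to make the descent ray leave \(\mathcal D\).
\item The paper's route gives a quantitative statement that the finite penalty is flatter than the score by the topology-dependent power \(2q\). That is why the corollary is placed after the barrier results, but non-minimality does not require it.
\end{itemize}

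One aside in your proposal is false, though nothing depends on it. The claim \(q\ge2\) fails in general. If the base DAG already contains a path \(j\rightsquigarrow i\) and \(F=\{i\to j\}\), then \(q_W(F)=1\). You conflated cycle length with the number of candidate edges needed to close the cycle. Your conclusion is unaffected, because \(2q\ge2\) for every \(q\ge1\) and your main argument goes through \(Dh(W)=0\) anyway.
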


\begin{proof}
Along $W+tU$ with $t\downarrow0$,
\begin{equation}
\mathcal L(W+tU)=\mathcal L(W)+tD\mathcal L(W)[U]+o(t).
\end{equation}
The scalar and squared-vector barriers give $P(W+tU)=O(t^{2q})=o(t)$.
The negative linear score term therefore dominates every finite multiple of
the penalty for all sufficiently small $t>0$.
\end{proof}

\subsection{Universal first- and second-order consequences}

\begin{proposition}[Gradient and Hessian degeneracy]
\label{prop:universal-degeneracy}
Let $h$ be differentiable, locally nonnegative, and exact.  At every DAG
$W$, $\nabla h(W)=0$.  If $h$ is twice differentiable, define
\begin{equation}
\mathcal S_W=\operatorname{span}\{E_{ij}:i\ne j,\ j\not\rightsquigarrow i
\text{ in }\operatorname{supp}(W)\}.
\end{equation}
Then $\mathcal S_W\subseteq\ker\nabla^2h(W)$ and
\begin{equation}
\dim\ker\nabla^2h(W)\ge d(d-1)-r(W)\ge\frac{d(d-1)}2,
\end{equation}
where $r(W)$ is the number of reachable ordered pairs.  At $W=0$ the
Hessian is zero.
\end{proposition}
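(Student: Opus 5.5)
The first claim follows from nonnegativity and exactness alone. Every DAG $W$ is a zero of $h$, and $h\ge0$ near $W$, so $W$ is a local minimizer of the differentiable function $h$. Hence $\nabla h(W)=0$. No Taylor argument is needed for this step.

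For the Hessian, I will reduce to the coordinate-subspace mechanism of Lemma~\ref{lem:coordinate-jet} and Theorem~\ref{thm:jet-ideal}.

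\textbf{Diagonal entries.} Fix an index $(i,j)$ with $j\not\rightsquigarrow i$ in $\operatorname{supp}(W)$. Adding the edge $i\to j$ creates no cycle, since a cycle through $i\to j$ would need a path $j\rightsquigarrow i$. So the line $W+tE_{ij}$ stays in $\mathcal D$ for small $t$, $h$ vanishes identically on it, and $\partial_{ij}^2h(W)=0$.

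\textbf{Mixed entries.} Now take any other coordinate $e=(k,l)$, not necessarily in $\mathcal S_W$. I need $\partial_{ij}\partial_e h(W)=0$, and I will show the whole $e$-line has zero gradient.
\begin{itemize}
\item If $W+sE_e$ is a DAG for all small $s$, then $\nabla h(W+sE_e)=0$ by the first claim applied at those DAGs. Differentiating in $s$ gives $\nabla^2h(W)E_e=0$, so the whole column vanishes.
\item If $e$ closes a cycle, i.e.\ $l\rightsquigarrow k$ already holds, then $\nabla h(W+sE_e)$ need not vanish. Here I instead consider the two-dimensional face spanned by $E_{ij}$ and $E_e$.
\end{itemize}

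\textbf{Main obstacle: the mixed term in the cyclic case.} The cleanest route avoids the cyclic case entirely by using symmetry of the Hessian: $\partial_e\partial_{ij}h(W)$ equals the $s$-derivative of $\partial_{ij}h(W+sE_e)$. So I will study $g(s,t)=h(W+sE_e+tE_{ij})$ and aim to show $\partial_t g(s,0)=0$ for all small $s$.
\begin{itemize}
\item If $W+sE_e$ is a DAG, the first claim gives $\partial_tg(s,0)=0$ directly.
\item Otherwise $W+sE_e$ is cyclic with $h>0$, and the first claim does not apply. I expect this to be the genuinely hard step. Instead I will use the second-order Taylor expansion of $g$ at the origin together with $g\ge0$ and $g(0,t)=0$ (the $t$-axis is acyclic, as shown above). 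Then $g(s,t)=\tfrac12(as^2+2bst)+o(s^2+t^2)$, where $b$ is the mixed coefficient.
\item Nonnegativity forces the quadratic form $as^2+2bst$ to be positive semidefinite. Its $t^2$ coefficient is zero, so $b=0$. This gives $\partial_e\partial_{ij}h(W)=0$ in all cases.
\end{itemize}
With this argument, every $E_{ij}\in\mathcal S_W$ lies in the kernel.

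\textbf{Counting.} Each ordered pair $i\ne j$ with $j\not\rightsquigarrow i$ contributes a basis vector, and there are $d(d-1)-r(W)$ such pairs. For each unordered pair at most one orientation is reachable in a DAG, since otherwise there would be a cycle. Hence $r(W)\le d(d-1)/2$, which gives the second inequality.

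\textbf{The case $W=0$.} With no edges nothing is reachable, so $\mathcal S_W$ is the whole space and the Hessian is zero.
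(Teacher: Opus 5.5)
Your proof is correct and is essentially the paper's argument: your $2\times2$ Taylor step is exactly the fact that the Hessian at the local minimum $W$ is positive semidefinite, so zero quadratic form on $E_{ij}$ forces $\nabla^2h(W)E_{ij}=0$. That step handles every coordinate $e$ uniformly, so the DAG/cyclic case split is unnecessary. The intermediate aim $\partial_t g(s,0)=0$ for all small $s$ is false in general (take $h=h_{\exp}(W)\,e^{W_{ij}}$ when $e$ closes a cycle), but your final argument never uses it.
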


\begin{proof}
A feasible DAG is a local minimizer of $h$, so Fermat's condition gives the
zero gradient.  If $j\not\rightsquigarrow i$, adding only edge $i\to j$
cannot create a cycle.  Hence
$t\mapsto h(W+tE_{ij})$ is identically zero and
$\langle E_{ij},\nabla^2h(W)E_{ij}\rangle=0$.  The Hessian at a local
minimum is positive semidefinite; for such an operator, zero quadratic form
implies membership in its kernel.  There are $d(d-1)-r(W)$ safe
coordinates, and a DAG has at most one reachable orientation for each
unordered pair, so $r(W)\le d(d-1)/2$.
\end{proof}

The proposition explains why LICQ and MFCQ fail for a scalar exact equality
at every feasible DAG.  It is weaker than the completion theorem when
$q>1$, but makes the topology dependence of the Hessian explicit.

\begin{proof}[Proof that tolerance is not a support certificate]
Let a DAG $W_0$ contain a directed path $j\rightsquigarrow i$ and add the
back edge $i\to j$ with weight $t$.  Every $t\ne0$ gives cyclic support,
whereas $W_t\to W_0$.  Exactness gives $h(W_t)>0$ and continuity gives
$h(W_t)\to h(W_0)=0$.  Therefore every positive tolerance accepts some
cyclic support.  Starting from an empty graph and shrinking all edges of a
cycle gives the same conclusion without requiring a nonempty base DAG.
\end{proof}

\section{Sharpness and Exact Constructions}
\label{app:bld}

This construction separates overflow control from low-order sensitivity.
Let $s(x)=x^2/(1+x^2)$, define $S(W)_{ij}=s(W_{ij})$ for $i\ne j$ and
$S(W)_{ii}=0$, and set
\begin{equation}
C(W)=\frac\kappa d S(W),\qquad
h_{\rm BLD}(W)=-\left(\frac d\kappa\right)^2
\log\det(I-C(W)),\quad 0<\kappa<1.
\end{equation}

\begin{proposition}[Bounded log-determinant representation]
For fixed $d$ and $0<\kappa<1$, $h_{\rm BLD}$ is $C^\infty$, nonnegative,
exact, and globally bounded.  Its gradient and Hessian are globally bounded,
so its gradient is globally Lipschitz.  Nevertheless, it obeys the same
$2q$ and $2\tau$ barriers as every nonnegative scalar representation.
\end{proposition}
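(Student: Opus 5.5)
The plan has three parts: regularity and boundedness, exactness and nonnegativity, and the barriers, which follow from Theorem~\ref{thm:scalar} and its weighted form once the first two parts are done.

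\textbf{Regularity and boundedness.} The squash $s(x)=x^2/(1+x^2)$ is $C^\infty$ and takes values in $[0,1)$. Its derivatives $s'(x)=2x/(1+x^2)^2$ and $s''$ are globally bounded. Hence $S(W)$ is a smooth map with bounded first and second derivatives. Each entry of $C(W)$ lies in $[0,\kappa/d)$, so every row sum of the nonnegative matrix $C$ is below $\kappa$, and therefore $\rho(C)\le\norm{C}_\infty\le\kappa<1$.

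This gives $I-C$ invertible, with $\norm{(I-C)^{-1}}_\infty\le 1/(1-\kappa)$ uniformly in $W$. Two series then follow:
\[
-\log\det(I-C)=\sum_{k\ge1}\tr(C^k)/k ,
\]
which is bounded by $d\sum_k\kappa^k/k=-d\log(1-\kappa)$, and this gives global boundedness of $h_{\rm BLD}$. The derivative formulas are
\[
D[-\log\det(I-C)][\dot C]=\tr((I-C)^{-1}\dot C)
\]
and the second derivative $\tr((I-C)^{-1}\dot C(I-C)^{-1}\dot C)+\tr((I-C)^{-1}\ddot C)$. By the chain rule with the bounded derivatives of $S$, these give uniformly bounded gradient and Hessian. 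A bounded Hessian then makes the gradient globally Lipschitz by the mean value theorem. Smoothness holds because $\log\det$ is analytic on the set where $I-C$ is invertible and $S$ is $C^\infty$.

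\textbf{Exactness and nonnegativity.} Every term $\tr(C^k)$ in the series is a sum of closed-walk products of nonnegative entries. So $h_{\rm BLD}\ge0$. Moreover $h_{\rm BLD}=0$ iff every closed walk in $\supp(C)$ has zero weight. Since $s(x)>0$ exactly when $x\neq0$, we have $\supp(C)=\supp(W)$ off the diagonal. Zero weight on all closed walks is therefore equivalent to $\supp(W)$ being acyclic. With $\Omega=\mathcal Z$, this is the definition in Eq.~\eqref{eq:scalar-exact}.

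\textbf{Barriers.} Exactness, nonnegativity and $C^\infty$ regularity are exactly the hypotheses of Theorem~\ref{thm:scalar} and of the weighted $2\tau$ statement following Corollary~\ref{cor:newton-square}. Those results then give the $2q$ and $2\tau$ barriers directly, with no further work.

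It is also worth noting that the barriers are attained. Since $s(x)=x^2+o(x^2)$, this is the squash case of Proposition~\ref{prop:walk-sharp}, with walk coefficients $c_k=(d/\kappa)^2(\kappa/d)^k/k>0$ at every length. This shows that the bounded construction loses no order relative to DAGMA.

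The main obstacle is the uniform bound on the Hessian. It needs the resolvent bound to hold globally rather than locally, and the row-sum estimate $\rho(C)\le\kappa$ supplies exactly that. Everything else is routine.
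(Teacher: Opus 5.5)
Your proposal is correct and follows the paper's proof essentially step for step. The common ingredients are the row-sum bound giving \(\rho(C)<\kappa<1\), the nonnegative trace series for \(-\log\det(I-C)\) (yielding exactness and the \(-d\log(1-\kappa)\) bound), Neumann-series resolvent bounds combined with the bounded derivatives of \(s\) for the gradient and Hessian, and Theorem~\ref{thm:scalar} with its weighted form for the barriers. Your extra attainment remark, via the squash case of Proposition~\ref{prop:walk-sharp}, is a correct addition that the paper's proof does not make.
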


\begin{proof}
Every entry of $S$ lies in $[0,1)$, so the maximum row and column sums of
the nonnegative matrix $C$ are below $\kappa$.  Thus $\rho(C)<\kappa<1$ and
\begin{equation}
-\log\det(I-C)=\sum_{r\ge1}\frac{\tr(C^r)}r.
\label{eq:bld-series}
\end{equation}
Every term is nonnegative.  A DAG makes $C$ nilpotent and every trace zero;
a directed cycle contributes a positive product to an appropriate trace.
This proves exactness.  Moreover
$\tr(C^r)\le d\kappa^r$, so the unscaled series is at most
$-d\log(1-\kappa)$.

Let $R=(I-C)^{-1}$.  The Neumann series gives
$\|R\|_1,\|R\|_\infty,\|R\|_2\le(1-\kappa)^{-1}$.  The first two
derivatives of $s$ are globally bounded.  For perturbations $U,V$,
\begin{align}
Dh[U]&=p\tr(R\,DC[U]),\\
D^2h[U,V]&=p\tr(R\,DC[V]R\,DC[U])
          +p\tr(R\,D^2C[U,V]),
\end{align}
where $p=(d/\kappa)^2$.  Standard Frobenius inequalities therefore give
global bounds for the gradient and Hessian.  The final claim follows from
Theorem~\ref{thm:scalar} and Eq.~\eqref{eq:weighted-scalar-order}; bounded
derivatives change coefficients but not the missing Taylor orders.
\end{proof}

\section{Numerical Verification of Geometry and Dynamics}
\label{app:diagnostics}

\subsection{Equal-scale hierarchy}

\begin{table}[t]
\centering
\small
\caption{Topology-dependent order hierarchy. The last column is the maximum absolute log--log slope error over $q=1,\ldots,5$.}
\label{tab:order-hierarchy}
\begin{tabular}{lccc}
\toprule
Formulation & Value order & Derivative order & Max. error \\
\midrule
Signed/vector residual $H$ & $q$ & $q-1$ & 0.000 \\
Vector least squares $\|H\|_2^2/2$ & $2q$ & $2q-1$ & 0.046 \\
Nonnegative scalar $h$ & $2q$ & $2q-1$ & 0.046 \\
Scalar ALM at $\mu=0$ & $4q$ & $4q-1$ & 0.046 \\
\bottomrule
\end{tabular}
\end{table}

For each $q\in\{1,\ldots,5\}$, the diagnostic uses
$d=\max(3,q+1)$ nodes.  It fixes the first $d-q$ edges of the path
$0\to1\to\cdots\to d-1$ at weight $0.7$.  The remaining path edges and
the closing edge $d-1\to0$ are the $q$ completion coordinates; the closing
edge has negative sign to ensure that signed weights are exercised.  Every
proper subset is acyclic.

The scale grid is $\operatorname{geomspace}(0.08,0.30,8)$.  We fit the
least-squares slope of $\log R(t)$ against $\log t$ on the four smallest
positive finite values.  The exact simple-cycle vector is enumerated for
$d\le6$.  Its Jacobian is computed by forward automatic differentiation
and then restricted to the $q$ completion coordinates before taking the
Frobenius norm.  EXP and all gradients use JAX float64 automatic
differentiation.  The 320 hierarchy rows contain eight scales for every
$(q,\text{method})$ pair.  A separate 160-row check applies the same base
DAGs to EXP, the trace polynomial, DAGMA log determinant, and BLD on
$\operatorname{geomspace}(0.10,0.30,8)$.

\begin{table}[h]
\centering
\small
\caption{Observed scalar orders for $q=1,\ldots,5$. Every family is predicted to have order $2q$.}
\label{tab:scalar-family}
\begin{tabular}{lcc}
\toprule
Constraint & Observed orders ($q=1,\ldots,5$) & Max. error \\
\midrule
EXP & 2.00, 4.00, 6.00, 8.00, 10.01 & 0.007 \\
Trace polynomial & 2.00, 4.00, 6.00, 8.00, 10.00 & 0.005 \\
DAGMA log-det & 2.00, 4.00, 6.00, 8.00, 10.00 & 0.004 \\
BLD (bounded) & 1.97, 3.93, 5.90, 7.87, 9.98 & 0.130 \\
\bottomrule
\end{tabular}
\end{table}

\begin{figure}[t]
    \centering
    \includegraphics[width=0.82\textwidth]{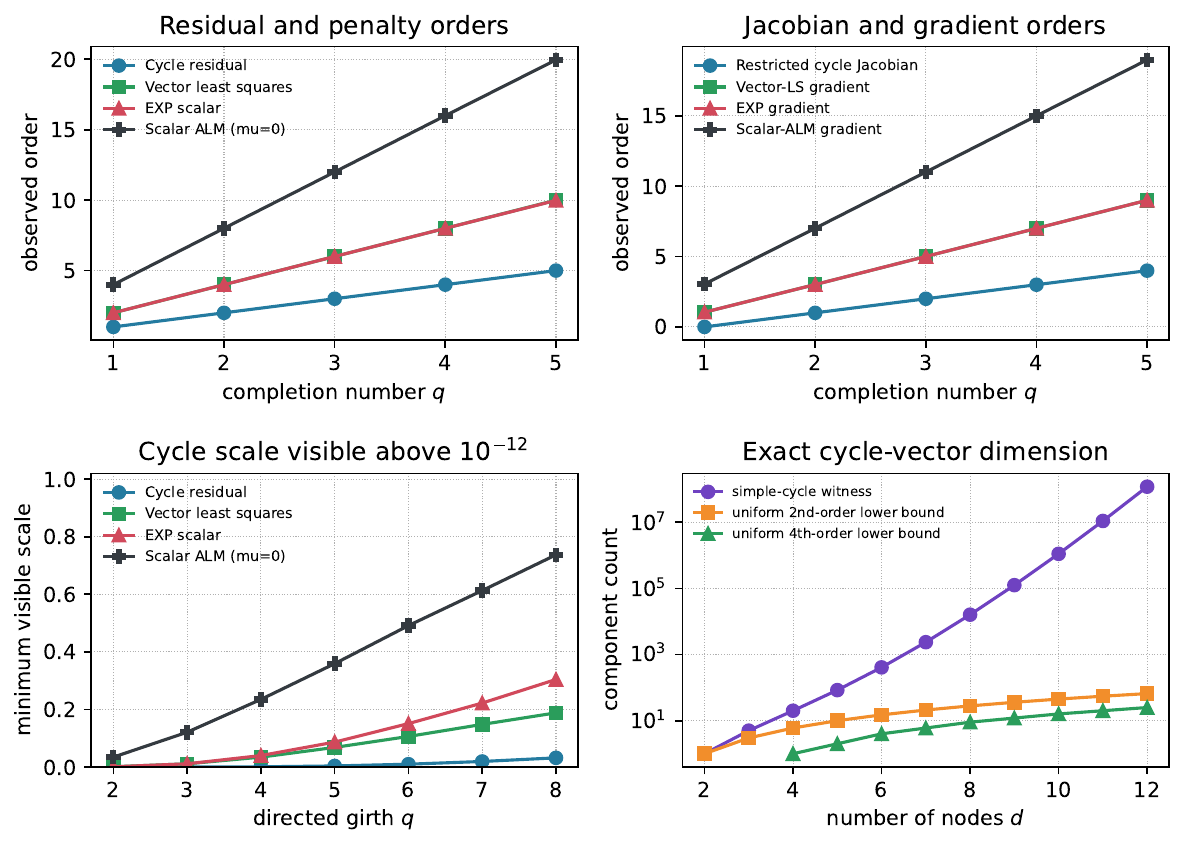}
    \caption{Empirical order hierarchy and its consequences.  Top:
    residual/penalty and restricted Jacobian/gradient orders.  Bottom left:
    smallest isolated-cycle scale visible above $10^{-12}$ in float64.
    Bottom right: exact cycle-vector size and uniform-sharpness lower bounds.}
    \label{fig:hierarchy}
\end{figure}

\subsection{Conditioned random-subspace census}
\label{app:random-census}

We test whether the computed completion number predicts local order away from
the controlled families.  For each $q\in\{1,2,3,4\}$ and 20 seeds, the
generator chooses a random directed cycle on $d=8$ nodes, assigns exactly $q$
of its edges to the candidate set, and places the remainder in the base DAG.
It then proposes additional random base edges, retaining one only if the base
remains acyclic and $q_\W(F)$ is unchanged.  It similarly adds two distractor
candidate edges only when $q_\W(F)$ remains unchanged.  Base weights are drawn
uniformly in magnitude from $[0.45,0.90]$, candidate coefficients from
$[0.70,1.30]$, and signs are independent.

As an independent check, the implementation exhaustively enumerates all
candidate subsets and recomputes $q_\W(F)$ before evaluation.  The resulting
instances contain 5--10 base edges, 3--6 candidate edges, and 1--5 active
simple cycles.  For eight scales in
$\operatorname{geomspace}(0.006,0.05,8)$, we evaluate the norm of all active
simple-cycle products and the positive trace series for EXP.  Slopes use the
four smallest scales, exactly as in the controlled diagnostics.  The 1,280 raw
rows yield 160 instance/method fits.  The predicted orders $q$ and $2q$ hold
with maximum absolute error $0.0033$ and median error $4.9\times10^{-15}$.

\begin{figure}[H]
    \centering
    \includegraphics[width=0.82\textwidth]{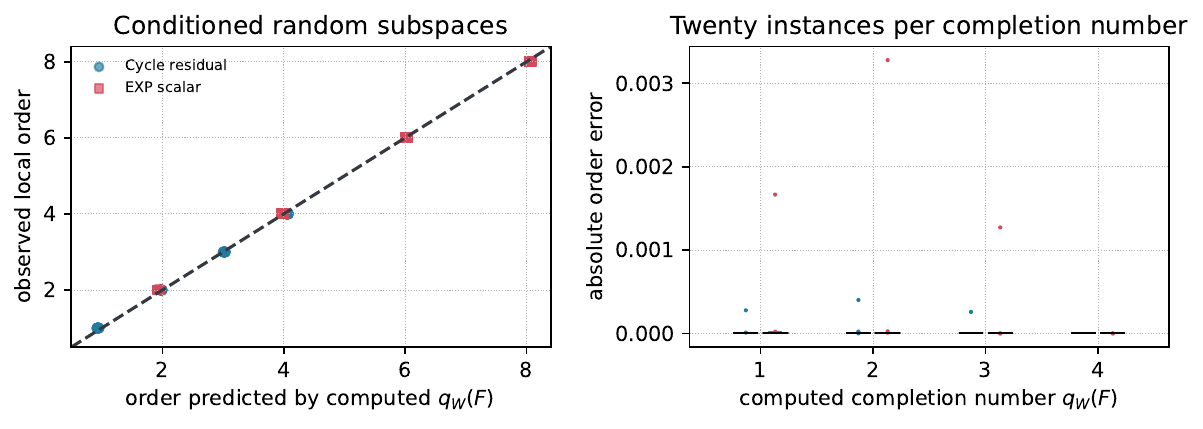}
    \caption{Conditioned random-subspace census.  Left: every observed order
    lies on the prediction from the independently recomputed $q_\W(F)$.  Right:
    absolute errors over 20 instances per completion number.}
    \label{fig:random-census}
\end{figure}

\subsection{Completion-volume audit}
\label{app:completion-volume-audit}

The script \texttt{paper\_iclr/explore\_completion\_invariants.py} performs a
separate finite falsification of Theorem~\ref{thm:volume-geometry}.  For every
random candidate complex it enumerates all supports, extracts the minimal
completions, solves both the integral and fractional cover problems, and
evaluates Eq.~\eqref{eq:exact-tube-law} on 30 values of $\varepsilon$.  It
also normalizes an optimal fractional-cover vector $z^\star$ so that
$\min_C\sum_{s\in C}z^\star_s=1$ and evaluates the logarithmic path
$|x_s|=t^{z^\star_s}$.  Along this path, the volume-density scale is
$t^{\sum_sz^\star_s}=t^\lambda$, while the generator and squared-generator
scales are $t$ and $t^2$.

\begin{table}[H]
\centering
\caption{Completion-volume falsification.  ``Ray'' lists the fitted vector
and scalar exponents.  The tournaments are genuine fractional
feedback-cover gaps, not arbitrary hypergraphs.}
\label{tab:completion-volume-audit}
\begin{tabular}{lrrrrrr}
\toprule
Suite & Cases & $q<r$ & $q$ & $r$ & $(b,\lambda)$ & Ray \\
\midrule
Random DAG subspaces & 597 & 440 & -- & -- & 23 distinct & exact \\
$d=7$ tournament & 1 & 1 & 3 & 7 & $(5,4.5)$ & $(4.5,2.25)$ \\
$d=8$ tournament & 1 & 1 & 3 & 8 & $(7,6.667)$ & $(6.667,3.333)$ \\
\bottomrule
\end{tabular}
\end{table}

For the random suite, the maximum error between the fitted tube slope and
$b$ is $1.7\times10^{-4}$; the valuation-ray errors for $\lambda$ and
$\lambda/2$ are below $10^{-8}$.  The $d=7$ tournament has 21 arcs and 64
simple cycles.  Its minimum feedback-arc cover has size five, whereas the LP
relaxation is $4.5$.  The $d=8$ example has 28 arcs and 288 cycles.
On the same rays, direct positive-series evaluations give orders
$(1.000000,1.000001,2.000001,2.000003)$ for the generator, cycle vector,
EXP, and DAGMA in the $d=7$ case; the maximum error is $1.5\times10^{-4}$
across both tournaments.

We additionally draw scrambled Sobol points in $[0,1]^m$ and directly count
generator sublevel events.  At practical thresholds this gives slopes
$3.219$ and $4.594$ for the two tournaments, below the asymptotic values
$4.5$ and $6.667$.  This is expected rather than a contradictory estimate:
Eq.~\eqref{eq:volume-sandwich} permits polynomial factors in
$\log(1/\eta)$, and the high-dimensional events become too rare before those
factors are negligible.  We report this failed finite-scale extrapolation to
distinguish the theorem's logarithmic limit from a claim about moderate
tolerances.  Neither Sobol result is used to select a parameter or establish
the theorem.

\subsection{Weighted overlapping cycles}

The weighted graph has candidate cycles
\begin{align}
C_A&:0\to1\to2\to3\to0,\\
C_B&:0\to1\to4\to3\to0.
\end{align}
They share edges $0\to1$ and $3\to0$.  The four exponent patterns give
cycle costs A/B of $4/4$, $5/8$, $6/8$, and $8/5$.  The final pattern
switches the cheapest cycle.  Eight scales follow
$\operatorname{geomspace}(0.12,0.30,8)$.  A generic weighted
shortest-directed-cycle routine computes $\tau$ from the graph rather than
reading either displayed cycle cost.  There are 128 raw weighted rows.

\begin{table}[t]
\centering
\small
\caption{Weighted overlapping cycles. Costs A/B are the two cycle costs and $\tau$ is their minimum. The final row switches the cheapest cycle.}
\label{tab:weighted-hierarchy}
\begin{tabular}{rrrrrr}
\toprule
A/B & $\tau$ & $\|H\|$ & $\|H\|^2/2$ & EXP & EXP$^2/2$ \\
\midrule
4/4 & 4 & 4.00 & 8.00 & 8.00 & 16.00 \\
5/8 & 5 & 5.00 & 10.00 & 10.00 & 20.00 \\
6/8 & 6 & 6.00 & 12.00 & 12.00 & 24.00 \\
8/5 & 5 & 5.00 & 10.00 & 10.00 & 20.00 \\
\bottomrule
\end{tabular}
\end{table}

\subsection{Subcritical compression diagnostic}
\label{app:compression}

For $d=10$ and $q=2,\ldots,6$, let $C_q(W)\in\R^{M_q(10)}$ collect the
selected cycle products from Theorem~\ref{thm:dimension}.  For a matrix
$A\in\R^{r\times M_q(10)}$ with orthonormal rows, we evaluate the globally
exact residual
\begin{equation}
H_A(W)=\left(A C_q(W),\ h_{\exp}(W)\right).
\label{eq:compressed-residual}
\end{equation}
Because the EXP component is globally exact, $H_A(W)=0$ if and only if $W$
is a DAG, regardless of the projection.  Along the channel family,
$C_q(tU(z))=t^qz$ and $h_{\exp}(tU(z))=\Theta(t^{2q})$.  A generic unit $z$
therefore gives $\norm{H_A(tU(z))}_2=\Theta(t^q)$, whereas any nonzero
$z\in\ker A$ gives $\Theta(t^{2q})$.  Rank--nullity supplies such a direction
whenever $r<M_q(10)$.

We draw 20 random orthogonal maps at each of ranks
$\lfloor M_q/4\rfloor$, $\lfloor M_q/2\rfloor$, and $M_q-2$.  After adding
the fallback, every total residual dimension remains strictly below $M_q$.
An SVD gives the null direction and an independent normalized Gaussian vector
gives the generic direction.  Eight local scales follow
$\operatorname{geomspace}(0.02,0.12,8)$, producing 4,800 raw rows.  EXP is
evaluated by its positive trace series rather than subtracting $d$, and the
validator checks that it is positive on every cyclic mixture.  The diagnostic
is a global-exactness stress test, not a proposal to scale the dense EXP
fallback to large graphs.  The largest absolute fitted-order error is $0.0031$.

\begin{table}[ht]
\centering
\small
\caption{Subcritical random compression on the $d=10$ channel families of Theorem~\ref{thm:dimension}. Observed orders are medians over three ranks and 20 seeds.}
\label{tab:compression}
\begin{tabular}{rrrr}
\toprule
$q$ & Channels $M_q(10)$ & Generic & Null direction \\
\midrule
2 & 45 & 2.000 & 4.002 \\
3 & 20 & 3.000 & 6.000 \\
4 & 16 & 4.000 & 8.000 \\
5 & 12 & 5.000 & 10.000 \\
6 & 9 & 6.000 & 12.000 \\
\bottomrule
\end{tabular}
\end{table}

\begin{figure}[H]
    \centering
    \includegraphics[width=0.82\textwidth]{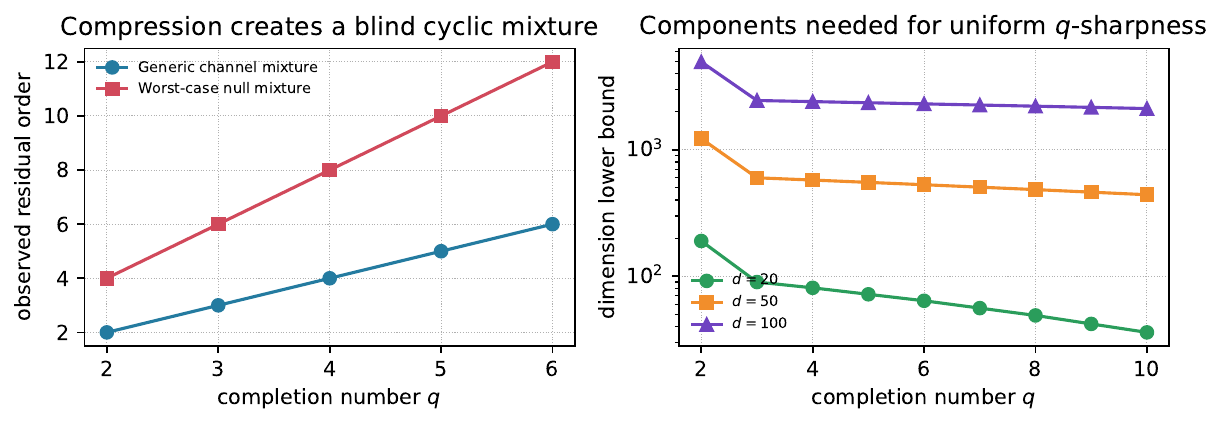}
    \caption{Subcritical compression creates a worst-case cyclic mixture with
    doubled residual order (left). The funnel lower bound is quadratic in $d$
    for each fixed $q$ (right).}
    \label{fig:compression}
\end{figure}

\subsection{Visibility and cycle-vector size}

For each isolated cycle length $q=2,\ldots,8$, the diagnostic evaluates 500
scales from $10^{-4}$ to $0.99$ and records the first residual at least
$10^{-8}$, $10^{-12}$, and $10^{-16}$.  We report the grid-based value so that
EXP cancellation and float64 evaluation are included.  Cycle-vector dimensions use the
closed count in Eq.~\eqref{eq:cycle-count} for $d=2,\ldots,12$.

\begin{table}[ht]
\centering
\small
\caption{Smallest isolated-cycle scale whose residual exceeds $10^{-12}$ in float64. Lower is better.}
\label{tab:visibility}
\begin{tabular}{rcccc}
\toprule
$q$ & $\|H\|$ & $\|H\|^2/2$ & EXP & EXP$^2/2$ \\
\midrule
2 & 0.0001 & 0.0012 & 0.0010 & 0.0345 \\
4 & 0.0010 & 0.0345 & 0.0400 & 0.2350 \\
6 & 0.0100 & 0.1064 & 0.1510 & 0.4913 \\
8 & 0.0321 & 0.1884 & 0.3042 & 0.7371 \\
\bottomrule
\end{tabular}
\end{table}

\subsection{Restricted optimization dynamics}
\label{app:optimizer-dynamics}

The family has one fixed edge
$c\sim\operatorname{Uniform}[0.55,0.90]$ and $q$ completion edges of scale
$t$, giving $R(t)=ct^q$.  With $L=q+1$ and $P=c^2t^{2q}$, NOTEARS equals
\begin{equation}
h_{\exp}(t)=\sum_{m=1}^{\infty}\frac{L P^m}{(mL)!},
\qquad
\tr(A^k)=0\ \text{unless }L\mid k.
\label{eq:isolated-exp-series}
\end{equation}
Twelve terms match the JAX matrix exponential through $q=6$ within
$2\times10^{-14}$.  Every method uses 1,000 projected-gradient steps with
Armijo parameter $10^{-4}$ and maximum step $0.25$.  ALM uses 40 updates,
25 primal steps per update, $\rho_0=1$, and growth $1.2$ capped at $10^6$;
only the seeded variant sets $\mu_0=1$.  Every trajectory starts at $t=0.8$.
The experiment therefore follows the full restricted correction path into the
local regime, but it is not evidence about global parameter learning.

The coefficient-controlled protocol rescales each objective to unit initial
gradient.  At $q=6$, its medians are $0.134/0.266/0.357/0.357/0.376/0.357$
for $p=1/p=1.5/p=2$/EXP/cold/seeded; full records are in the released CSV.

\begin{table}[ht]
\centering
\small
\caption{Median completion-edge scale after 1,000 raw-objective projected-gradient steps over 20 seeds. Lower is better; all runs start at $0.8$.}
\label{tab:optimizer-dynamics}
\begin{tabular}{rcccccc}
\toprule
$q$ & $p=1$ & $p=1.5$ & $p=2$ & EXP & cold ALM & seeded ALM \\
\midrule
2 & 0.000 & 0.002 & 0.044 & 0.044 & 0.094 & 0.044 \\
4 & 0.025 & 0.125 & 0.256 & 0.387 & 0.469 & 0.386 \\
6 & 0.122 & 0.268 & 0.406 & 0.707 & 0.790 & 0.706 \\
8 & 0.219 & 0.372 & 0.501 & 0.798 & 0.800 & 0.798 \\
\bottomrule
\end{tabular}
\end{table}

\begin{figure}[H]
    \centering
    \includegraphics[width=0.82\textwidth]{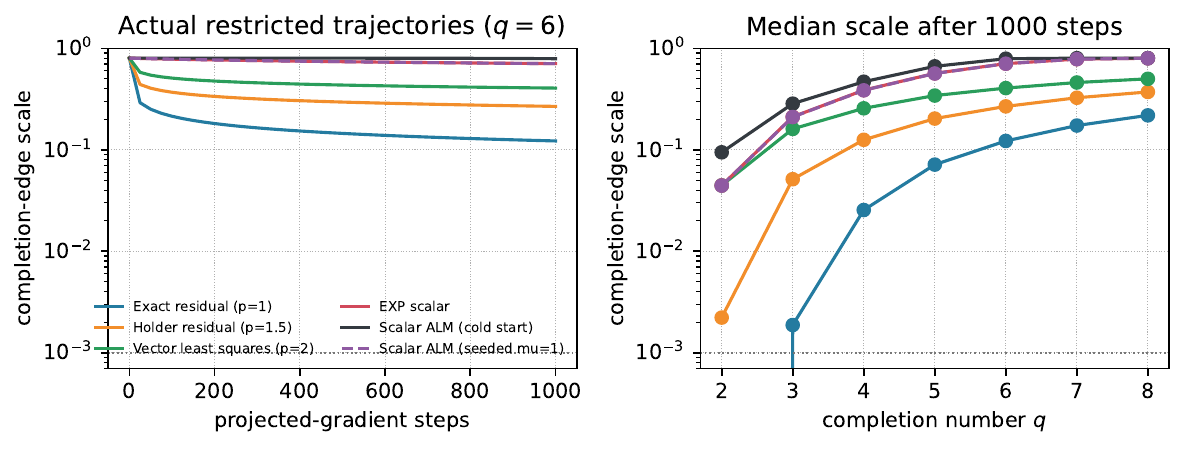}
    \caption{Restricted feasibility-correction trajectories under the fixed
    protocol. Lower is better. The exact $p=1$ residual retains the strongest
    signal, while smooth scalarization and cold-start ALM become stiffer as the
    completion number grows.}
    \label{fig:dynamics}
\end{figure}

\subsection{Full-matrix selection and SEM-score protocol}
\label{app:full-matrix-selection}

The experiment supporting Proposition~\ref{prop:symmetry-time} uses directed
cycles of lengths \(L\in\{4,6,8\}\).  Every cycle edge is optimized.  The two
tracked edges start at \(a\pm\varepsilon\), the remaining \(L-2\) edges start
at \(0.8\), and we vary \(a\in\{0.3,0.5\}\),
\(r\in\{0.1,0.25\}\), and
\(\varepsilon\in[10^{-5},10^{-2}]\).  We evaluate the exact full-matrix
NOTEARS and DAGMA constraints under
\[
\Psi(h)=h,\qquad
\Psi(h)=\frac12h^2,\qquad
\Psi(h)=h+\frac12h^2.
\]
These are the linear feasibility force, a cold quadratic penalty, and an ALM
inner objective with unit multiplier and penalty.  Selection times are
computed by quadrature of the exact matrix-flow reduction, not by fitting a
surrogate.  We additionally compare the closed values and every cycle-edge
gradient with direct matrix exponential and inverse evaluations at 120 random
cycle matrices.

For the score experiment, the population data distribution is the linear
Gaussian SEM whose true graph is the path
\[
0\longrightarrow1\longrightarrow\cdots\longrightarrow L-1
\]
with every coefficient equal to \(0.5\) and unit noise variance.  If
\(W_\star\) is this path matrix, its covariance under the row-vector
convention is
\[
\Sigma=(I-W_\star)^{-T}(I-W_\star)^{-1}.
\]
Initialization adds the false back edge \((L-1)\to0\), closing the cycle.
The optimized smooth objective is
\begin{equation}
F_\Sigma(W)=
\frac{0.2}{2}\operatorname{tr}
\!\left((I-W)^T\Sigma(I-W)\right)
+h(W)+\frac12h(W)^2.
\label{eq:population-score-objective}
\end{equation}
We run both a cycle-support flow and a dense flow in which every off-diagonal
entry may change.  ``Aligned'' initialization makes the false edge smaller by
\(2\varepsilon\); the adversarial ``reversed'' initialization makes it larger.
Correct selection is declared when the false-to-true tracked-edge ratio first
reaches \(1/4\).  The opposite \(1/4\) event is recorded as an incorrect
selection.  No ground-truth quantity enters the gradient or stopping rule
other than this post-run correctness label.

All 216 population flows reach the false-edge ratio event before its opposite,
as do explicit gradient-descent runs with step \(0.01\) on the predeclared
subset of \(\varepsilon\) values.  The fitted selection-time slopes range from
\(-0.0041\) to \(0.0039\), and the minimum total deletion margin in every
population trajectory is positive.  Dense trajectories are genuinely
different from the cycle restriction: their off-cycle Frobenius norm reaches
\(0.384\).  We enforce the DAGMA domain \(\rho(W\circ W)<1\) at every
gradient evaluation; the largest value over all population and finite-sample
DAGMA trajectories is \(0.453\).

For the finite-sample check, we replace \(\Sigma\) by the empirical
covariance from \(n\in\{128,512,2048\}\) observations and use 20 fixed seeds.
All 720 runs use reversed initialization.  Overall correctness is \(91.25\%\);
the per-setting range is \(65\%\) to \(100\%\), increasing toward the
population result with sample size.  These failures are retained because
Proposition~\ref{prop:symmetry-time} is conditional on a deterministic
positive margin.  It does not claim that observational samples always supply
that margin.

\subsection{Official optimizer trajectories}
\label{app:predictive-bridge}

The isolated-cycle experiments verify the asymptotic calculation in
Proposition~\ref{prop:symmetry-time}.  They do not show, by themselves, that
the same geometry is visible along an optimizer trajectory on an ordinary
random graph.  We therefore record the unthresholded iterates of the official
linear NOTEARS and DAGMA updates.  The update equations, continuation
schedules, stopping rules, and regularization parameters are unchanged.
Read-only callbacks store every fifth L-BFGS-B iterate for NOTEARS and every
250th Adam iterate for DAGMA.  On small deterministic tests, the instrumented
and uninstrumented implementations agree to relative tolerances
$10^{-9}$ and $10^{-12}$, respectively.

\subsubsection{Truth-free early diagnostics}

For an unthresholded matrix \(W\), define its critical DAG threshold
\begin{equation}
\tau(W)=\min\{t\ge0:\supp(\lvert W\rvert>t)\text{ is a DAG}\}.
\label{eq:critical-dag-threshold}
\end{equation}
This quantity is used as a diagnostic, not as the reported graph.  At the
first completed continuation stage whose nonzero support is cyclic, let
\(B=\supp(\lvert W\rvert>2\tau)\), and treat entries with magnitude in
\([\tau,2\tau]\) as completion coordinates.  The base \(B\) is acyclic.  We
compute the smallest number \(q_2\) of band coordinates needed to complete a
cycle and choose a deterministic shortest witnessing cycle \(C\).  Thus
\(q_2\) is a two-scale operational version of the completion statistic, not a
claim that the random trajectory lies in the exact local model of
Section~\ref{sec:completion-geometry}.

Three dimensionless quantities describe the early ranking on \(C\).  Write
\[
a_e=|W_e|,\qquad
b_e=|\partial_eh(W)|,\qquad
c_e=\mathcal L(W\text{ with }W_e=0)-\mathcal L(W).
\]
For the weights and deletion costs, the separation is the relative gap
between the smallest and second-smallest values.  For the constraint force it
is the relative gap between the largest and second-largest values.  A zero
gap denotes a first-order tie.  The outcome is the first later snapshot at
which the smallest cycle weight is at most one tenth of the second-smallest.
Time is divided by the remaining optimizer steps, separately for each method;
an unresolved trajectory is right-censored at one.  No generating edge,
coefficient, or SHD enters the anchor, cycle, predictor, or outcome.

Development seeds 100--104 cover ER2/ER4/SF2/SF4 graphs under standard
Gaussian and 40\%-weak-edge Gaussian SEMs.  They fix one ridge model per
optimizer with penalty one.  The baseline uses
\(\log h(W)\), \(\log(\tau(W)/\|W\|_{\max})\), and
\(\log\|W\|_{\max}\).  The geometry model adds \(q_2\) and the three
separation statistics.  Formal seeds 0--9 use the same two regimes plus
exponential and heteroscedastic noise.  There are 160 formal trajectories per
optimizer.  Spearman correlations, bootstrap intervals, and permutation
tests are computed only after this freeze.

\subsubsection{Formal results}

\begin{table}[H]
\centering
\small
\caption{Predictive bridge on 320 formal trajectories.  The correlation is
between early constraint separation and normalized selection time.  Models
are fitted only on disjoint development seeds.}
\label{tab:predictive-bridge}
\begin{tabular}{llcccc}
\toprule
Method & Early predictor & $\rho_s$ [95\% CI] & $p_{\rm perm}$ & Base MAE & Geom. MAE \\
\midrule
NOTEARS & constraint separation & -0.52 [-0.62,-0.40] & 0.0001 & 0.041 & 0.041 \\
DAGMA & constraint separation & -0.66 [-0.75,-0.56] & 0.0001 & 0.108 & 0.088 \\
\bottomrule
\end{tabular}
\end{table}

Constraint separation predicts faster selection for both optimizers:
\(\rho_s=-0.521\) for NOTEARS and \(-0.664\) for DAGMA.  The 95\% bootstrap
intervals exclude zero, and both permutation tests give \(p<10^{-4}\).
Ranking observations within each graph--noise stratum gives
\(-0.564\) and \(-0.649\), respectively, so the result is not explained by
pooling sparse and dense regimes.  Completion depth has the opposite sign,
with correlations \(0.435\) for NOTEARS and \(0.386\) for DAGMA.  The
development-frozen geometry model lowers DAGMA's formal MAE by \(18.5\%\),
from \(0.1078\) to \(0.0879\), and raises its prediction rank correlation from
\(0.316\) to \(0.648\).  For NOTEARS the MAE change is negligible
(\(0.04112\) to \(0.04100\)); a local association need not make a richer
cross-regime predictor useful.

\begin{figure}[H]
    \centering
    \includegraphics[width=0.94\textwidth]{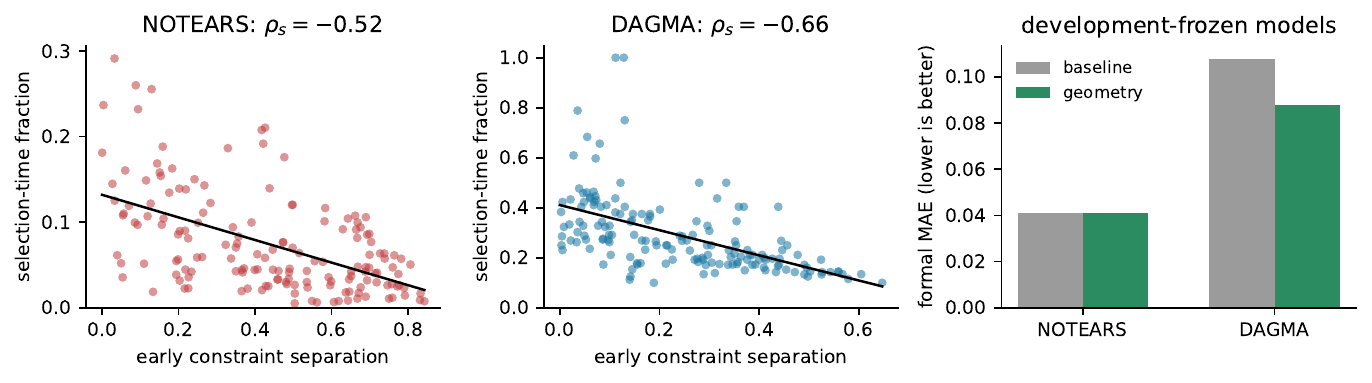}
    \caption{A truth-free early diagnostic predicts a later support event.
    Left and center: formal trajectories and pooled rank correlations.
    Right: MAE of development-frozen baseline and geometry models.  Prediction
    improves materially for DAGMA, but not for NOTEARS.}
    \label{fig:predictive-bridge}
\end{figure}

The experiment also locates a limit of the random-graph diagnostic.  Of the
320 critical cycles, 319 are directed two-cycles and one is a three-cycle;
\(q_2=1\) in 78 runs and \(q_2=2\) in 242.  Dense signed frontends therefore
expose mostly pairwise orientation conflicts at their first continuation
stage.  The higher completion orders in the controlled experiments should not
be described as typical of these trajectories.  Score separation is
method-dependent as well: its formal correlation is \(-0.304\) for NOTEARS
but \(0.118\) for DAGMA.  What survives the move from the controlled model is
the narrower statement that an early feasibility tie and a larger
two-scale completion depth forecast a slower support decision.  This is
evidence that the representation geometry is operationally visible, not a
global iteration lower bound.

\section{Modular Certificate: Proof and Scaling Audit}
\label{app:completion-margin}

\subsection{The algebraic bridge}

\begin{lemma}[Completion-repair duality]
\label{lem:completion-repair-duality}
Let
\[
I_E=\langle x^C:C\in\mathcal C_{\min}\rangle
\]
be the squarefree cycle-completion ideal on an active directed support \(E\).
If \(\mathcal T_{\min}\) is the family of inclusion-minimal deletion sets that
leave a DAG, then
\begin{equation}
I_E^\vee
=\bigcap_{C\in\mathcal C_{\min}}\langle x_e:e\in C\rangle
=\langle x^D:D\in\mathcal T_{\min}\rangle .
\label{eq:completion-repair-duality}
\end{equation}
\end{lemma}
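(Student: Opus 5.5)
This is the standard Alexander duality for squarefree monomial ideals, specialized to the cycle-completion complex. The plan has two parts. First, identify the Alexander dual of a squarefree monomial ideal with the intersection of the prime ideals generated by its generators' variables. Second, identify the minimal generators of that intersection with the minimal transversals (hitting sets) of \(\mathcal C_{\min}\), which are exactly the inclusion-minimal deletion sets \(\mathcal T_{\min}\).

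Concretely, we take the first equality in Eq.~\eqref{eq:completion-repair-duality} as the definition of \(I_E^\vee\), or as the standard characterization from \citep{miller2005combinatorial}. This is consistent with \(I_E\) being the Stanley--Reisner ideal of the acyclic-support complex \(\Delta_E\), exactly as in Definition~\ref{def:completion-ideal} with the active support \(E\) in place of the candidate set \(F\).

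For the second equality, we use the fact that an intersection of monomial ideals is monomial. A monomial \(x^\beta\) lies in \(\langle x_e:e\in C\rangle\) iff \(\supp(\beta)\cap C\neq\varnothing\). Hence \(x^\beta\) lies in the intersection iff \(\supp(\beta)\) is a transversal of \(\mathcal C_{\min}\). The intersection is therefore spanned by the squarefree monomials \(x^D\) with \(D\) a transversal, and its minimal generators are \(x^D\) for inclusion-minimal transversals \(D\).

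The combinatorial step is then to show that a set \(D\subseteq E\) meets every minimal cycle support iff \(E\setminus D\) is acyclic. For the forward direction, suppose \(E\setminus D\) is cyclic. It contains a simple directed cycle, whose edge set is a cyclic subset of \(E\) and so contains an inclusion-minimal cyclic set \(C\in\mathcal C_{\min}\) disjoint from \(D\), a contradiction. For the converse, if \(D\) misses some \(C\in\mathcal C_{\min}\), then \(C\subseteq E\setminus D\), and \(E\setminus D\) is cyclic. (Here a nonface is a cyclic edge subset; its minimal elements are the edge sets of simple cycles, though only minimality is needed.) Thus the transversals are exactly the deletion sets leaving a DAG, and the inclusion-minimal ones coincide, giving \(\mathcal T_{\min}\).

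The only real obstacle is bookkeeping. We must make sure the ``deletion sets leaving a DAG'' are taken within the same ground set \(E\) on which \(I_E\) is defined, and that base (undeletable) edges, if any, are handled consistently. If a fixed base DAG is present, as in the \(\W,F\) setting, the same argument goes through with \(E\) replaced by the candidate coordinates. The base edges then never appear as variables, so acyclicity is tested on \(\supp(\W)\cup(F\setminus D)\). The transversal characterization above is unchanged, since the minimal nonfaces already incorporate the base support.
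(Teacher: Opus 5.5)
Your proposal is correct and follows the paper's proof essentially step for step. You cite the Miller--Sturmfels characterization of the Alexander dual as the intersection of the coordinate primes $\langle x_e:e\in C\rangle$, observe that a monomial lies in that intersection exactly when its support is a transversal of $\mathcal C_{\min}$, and match minimal generators with inclusion-minimal transversals. The only difference is that you explicitly prove that a set meets every minimal cyclic support if and only if deleting it leaves a DAG, including the base-DAG variant, whereas the paper asserts this in a single sentence.
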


\begin{proof}
The Alexander dual of a squarefree monomial ideal is the intersection of the
coordinate primes indexed by its minimal generators
\citep{miller2005combinatorial}.  Hence
\[
I_E^\vee
=\bigcap_{C\in\mathcal C_{\min}}\langle x_e:e\in C\rangle.
\]
A squarefree monomial \(x^D\) belongs to this intersection exactly when
\(D\cap C\ne\varnothing\) for every \(C\in\mathcal C_{\min}\).  This is the
transversal condition, and divisibility-minimal monomials correspond exactly
to inclusion-minimal transversals.  These transversals are the minimal edge
deletions that make \(E\) acyclic.
\end{proof}

We record the edge-modular specialization used below.  After a frozen training
stage, let each active edge have bounded holdout deletion loss
\(Z_e\in[-B,B]\) and population mean \(\mu_e\).  Write
\(\mu(D)=\sum_{e\in D}\mu_e\) for a feasible transversal
\(D\in\mathcal T\), and let \(D^*\) be a minimum-cost transversal.  When this
minimizer is unique, its normalized completion-exchange margin is
\begin{equation}
\kappa_{\mathcal C}(\mu)=
\min_{D\in\mathcal T,\,D\ne D^*}
\frac{\mu(D)-\mu(D^*)}{|D\mathbin{\triangle}D^*|}.
\label{eq:completion-exchange-margin}
\end{equation}
Given simultaneous intervals \([\ell_e,u_e]\) and an empirical minimizer
\(\widehat D\), define
\begin{equation}
\underline\Delta(D;\widehat D)=
\sum_{e\in D\setminus\widehat D}\ell_e
-\sum_{e\in\widehat D\setminus D}u_e,\qquad
\mathcal A_\alpha(\widehat D)=
\{D\in\mathcal T:\underline\Delta(D;\widehat D)\le0\}.
\label{eq:completion-robust-gap}
\end{equation}
Let \(E_{\rm del}=\cap_{D\in\mathcal A_\alpha}D\),
\(E_{\rm keep}=E\setminus\cup_{D\in\mathcal A_\alpha}D\),
\(L=\min_{D\in\mathcal T}\underline\Delta(D;\widehat D)\), and
\(L_-=\min_{D\ne\widehat D}\underline\Delta(D;\widehat D)\).

\begin{theorem}[Selective completion-repair confidence set]
\label{thm:completion-margin}
If all intervals cover simultaneously, every population-optimal repair belongs
to \(\mathcal A_\alpha(\widehat D)\).  Consequently, every population optimum
deletes \(E_{\rm del}\) and retains \(E_{\rm keep}\).  On the same event,
\begin{equation}
\mu(\widehat D)-\min_{D\in\mathcal T}\mu(D)\le[-L]_+.
\label{eq:completion-regret}
\end{equation}
Moreover, \(\mathcal A_\alpha=\{\widehat D\}\) exactly when \(L_->0\).  If
the intervals have common radius \(r\) and
\(\kappa_{\mathcal C}(\mu)>0\), then
\(r<\kappa_{\mathcal C}(\mu)\) implies \(\widehat D=D^*\), while
\(r<\kappa_{\mathcal C}(\mu)/2\) makes the confidence set a singleton.
\end{theorem}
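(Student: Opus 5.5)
The plan is to work throughout on the simultaneous coverage event \(\mathcal E=\{\mu_e\in[\ell_e,u_e]\ \forall e\}\) and to reduce every claim to one pointwise inequality. For any \(D\in\mathcal T\),
\[
\mu(D)-\mu(\widehat D)=\sum_{e\in D\setminus\widehat D}\mu_e-\sum_{e\in\widehat D\setminus D}\mu_e\ \ge\ \underline\Delta(D;\widehat D)
\]
on \(\mathcal E\). This holds because common edges cancel, \(\mu_e\ge\ell_e\) on the first sum, and \(\mu_e\le u_e\) on the second.

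Let \(D^\circ\) be any population-optimal repair. Then \(\mu(D^\circ)-\mu(\widehat D)\le0\), so \(\underline\Delta(D^\circ;\widehat D)\le0\) and \(D^\circ\in\mathcal A_\alpha(\widehat D)\). Since \(D^\circ\) belongs to the family, it contains \(E_{\rm del}=\cap_{D\in\mathcal A_\alpha}D\) and avoids \(E_{\rm keep}\). For the regret bound, apply the inequality at \(D^\circ\): \(\mu(\widehat D)-\mu(D^\circ)\le-\underline\Delta(D^\circ;\widehat D)\le -L\). The bound \(\mu(\widehat D)-\min\mu\le[-L]_+\) follows, and \(L\le\underline\Delta(\widehat D;\widehat D)=0\) anyway.

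For the singleton claim, \(\underline\Delta(\widehat D;\widehat D)=0\), so \(\widehat D\in\mathcal A_\alpha\) always. The set equals \(\{\widehat D\}\) exactly when every \(D\ne\widehat D\) has \(\underline\Delta>0\), that is, when \(L_->0\), since the minimum is over a finite family. This step is a definitional unpacking and uses no coverage.

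The quantitative part is where the care lies; the main obstacle is matching the factors of \(2\). Here I assume \(\widehat D\) minimizes the empirical (midpoint) costs \(\hat\mu_e\), with intervals \([\hat\mu_e-r,\hat\mu_e+r]\), so that \(|\hat\mu_e-\mu_e|\le r\) on \(\mathcal E\).

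For \(\widehat D=D^*\), fix \(D\ne D^*\). The margin gives \(\mu(D)-\mu(D^*)\ge\kappa|D\triangle D^*|\). The empirical difference satisfies \(\hat\mu(D)-\hat\mu(D^*)\ge(\kappa-r)|D\triangle D^*|>0\) when \(r<\kappa\). So \(D^*\) is the unique empirical minimizer and \(\widehat D=D^*\).

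For the singleton, I use \(\widehat D=D^*\) (valid since \(r<\kappa/2<\kappa\)) and write
\[
\underline\Delta(D;D^*)=\sum_{D\setminus D^*}(\hat\mu_e-r)-\sum_{D^*\setminus D}(\hat\mu_e+r)=\hat\mu(D)-\hat\mu(D^*)-r|D\triangle D^*|,
\]
which is at least \((\kappa-2r)|D\triangle D^*|>0\). Then \(L_->0\), and the previous step gives \(\mathcal A_\alpha=\{D^*\}\). If the intervals are not centred at an estimate whose minimizer is \(\widehat D\), I would instead state the result with \(\hat\mu\) the centres; the normalization by \(|D\triangle D^*|\) in \(\kappa_{\mathcal C}\) is exactly what makes the per-edge radius comparable.
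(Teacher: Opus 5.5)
Your proposal is correct and follows the paper's proof essentially step for step. The single inequality \(\mu(D)-\mu(\widehat D)\ge\underline\Delta(D;\widehat D)\) on the coverage event gives membership of every population optimum, the delete/keep labels and the regret bound. The singleton equivalence with \(L_->0\) is definitional, and the two margin claims come from the same \((\kappa-r)\) and \((\kappa-2r)\) estimates. Your explicit assumptions, that \(\widehat D\) minimizes the interval centres and that \(r<\kappa/2\) first forces \(\widehat D=D^*\), are exactly the ones the paper's argument uses implicitly.
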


\begin{proposition}[Conservative cycle-cover certificate]
\label{prop:relaxed-selective}
For any feasible reference \(D_0\), relax repair indicators to
\(P=\{z\in[0,1]^E:\sum_{e\in C}z_e\ge1\ \forall C\}\).  If the robust
forced-opposite LP for edge \(e\) has lower bound above the robust score of
\(D_0\), every population optimum has the \(D_0\) label at \(e\).  These
labels are a subset of those returned by exact forced-opposite optimization
for the same reference.  Minimum-weight cycle separation gives a
polynomial-time separation oracle for the LP.
\end{proposition}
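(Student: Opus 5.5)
The plan has four parts: an integer-program characterization of repairs, a robust LP comparison, the subset claim, and a separation oracle.

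\emph{Feasible repairs as integer points.} A deletion set $D\subseteq E$ leaves a DAG exactly when it meets every directed cycle of $E$. It suffices to check simple cycles, and by Lemma~\ref{lem:completion-repair-duality} it is equivalent to check the minimal completions. So the feasible repairs $\mathcal T$ are precisely the integer points of $P$, namely $\mathcal T=\{\mathbf 1_D: \mathbf 1_D\in P\cap\{0,1\}^E\}$.

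\emph{Robust score and the certificate.} Fix the reference $D_0$ and work on the simultaneous coverage event. I would define the robust score of $z\in[0,1]^E$ relative to $D_0$ as the linear function
\[
\underline\Delta(z;D_0)=\sum_{e\notin D_0}\ell_e z_e-\sum_{e\in D_0}u_e(1-z_e).
\]
This agrees with Eq.~\eqref{eq:completion-robust-gap} on indicators. The robust score of $D_0$ itself is $0$.

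On the coverage event, $\underline\Delta(\mathbf 1_D;D_0)\le\mu(D)-\mu(D_0)$ for every $D\in\mathcal T$. The reason is that $\ell_e\le\mu_e\le u_e$, and $\mu(D)-\mu(D_0)=\sum_{D\setminus D_0}\mu_e-\sum_{D_0\setminus D}\mu_e$.

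Now fix an edge $e$ and the forced-opposite polytope $P_e=P\cap\{z_e=1-\mathbf 1[e\in D_0]\}$. Suppose the LP value $\min_{z\in P_e}\underline\Delta(z;D_0)$ is strictly positive. Take any population-optimal repair $D^\star$ and suppose it had the opposite label at $e$. Then $\mathbf 1_{D^\star}\in P_e$, so
\[
\mu(D^\star)-\mu(D_0)\ge\underline\Delta(\mathbf 1_{D^\star};D_0)\ge \mathrm{LP}_e>0.
\]
This contradicts the optimality of $D^\star$, since $D_0$ is feasible. Hence every population optimum carries the $D_0$ label at $e$.

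\emph{Subset claim.} The exact forced-opposite optimization minimizes the same objective over the integer points of $P_e$, which form a subset of $P_e$. Its value is therefore at least $\mathrm{LP}_e$. So every label certified by the LP is also certified by the exact solve with the same reference and threshold.

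\emph{Polynomial-time separation.} This is the main step needing care. Given a candidate $z\in[0,1]^E$, assign each edge the nonnegative length $z_e$ and compute a minimum-weight directed cycle, for instance by running Dijkstra from each node $v$ to find the shortest $w\to v$ path for every edge $v\to w$. If the minimum cycle weight is below $1$, that cycle gives a violated inequality; otherwise $z\in P$.

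Nonsimple closed walks cause no trouble: a minimum-weight closed walk contains a simple cycle of no larger weight, because the weights are nonnegative. The box constraints and the single fixed coordinate are explicit. By the ellipsoid equivalence of separation and optimization, the LP is then solvable in polynomial time in $|E|$ and the encoding size of the intervals. The one remaining detail is the strict versus nonstrict inequality: the certificate requires $\mathrm{LP}_e>0$ and uses exact rational LP solutions.
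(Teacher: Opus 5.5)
Your proposal is correct and follows the paper's proof: your linear robust score $\underline\Delta(z;D_0)$ equals the paper's $c^\top z-\tau_0$, and you use the same contradiction via $\mathbf 1_{D^\star}\in P_e^{\rm opp}$, the same LP-below-integer argument for the subset claim, and the same nonnegative-weight minimum-cycle separation oracle. The only cosmetic difference is that you contradict population optimality directly, whereas the paper first places every population optimum in $\mathcal A_\alpha(D_0)$ and then shows that every repair in that confidence set shares the label.
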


\subsection{Proof of Theorem~\ref{thm:completion-margin}}

A deletion set leaves an acyclic support if and only if it intersects every
simple directed cycle.  The simple cycles are precisely the inclusion-minimal
nonfaces of the acyclic edge complex, so the feasible deletion sets are the
transversals of \(\mathcal C_{\min}\).

Let \(m=|E|\).  For the common-radius statement, Hoeffding's inequality and a
union bound give the event
\begin{equation}
\mathcal E_n=\left\{
\max_{e\in E}|\widehat\mu_e-\mu_e|\le r_n
\right\},
\qquad r_n=B\sqrt{2\log(2m/\alpha)/n},
\qquad \Pr(\mathcal E_n)\ge1-\alpha.
\label{eq:completion-uniform-event}
\end{equation}
The first part of the theorem only needs the more general simultaneous event
\(\mathcal E=\{\ell_e\le\mu_e\le u_e\ \forall e\}\).  Dependence among the
coordinates of one holdout vector is allowed.  On \(\mathcal E\), every
\(D\in\mathcal T\) satisfies
\begin{align}
\mu(D)-\mu(\widehat D)
&=\sum_{e\in D\setminus\widehat D}\mu_e
  -\sum_{e\in\widehat D\setminus D}\mu_e \\
&\ge \underline\Delta(D;\widehat D).
\label{eq:completion-relative-lower}
\end{align}
Let \(D^*\) be any population-optimal repair.  Since
\(\mu(D^*)-\mu(\widehat D)\le0\), Eq.~\eqref{eq:completion-relative-lower}
implies \(\underline\Delta(D^*;\widehat D)\le0\), and hence
\(D^*\in\mathcal A_\alpha(\widehat D)\).  This holds for every population
optimum.  An edge in the intersection of the confidence set is therefore
deleted by every optimum; an edge outside its union is retained by every
optimum.

Minimizing Eq.~\eqref{eq:completion-relative-lower} over \(D\) proves
Eq.~\eqref{eq:completion-regret}.  By definition,
\(\mathcal A_\alpha=\{\widehat D\}\) if and only if
\(\underline\Delta(D;\widehat D)>0\) for every \(D\ne\widehat D\), which is
equivalent to \(L_->0\).  Equation~\eqref{eq:completion-relative-lower} then
makes \(\widehat D\) the unique population optimum.

Suppose \(D^*\) is unique.  For every other transversal,
\begin{equation}
\widehat\mu(D)-\widehat\mu(D^*)
\ge \left(\kappa_{\mathcal C}(\mu)-r_n\right)
|D\mathbin{\triangle}D^*|.
\label{eq:completion-empirical-gap}
\end{equation}
Hence \(r_n<\kappa_{\mathcal C}(\mu)\) makes \(D^*\) the unique empirical
minimizer.  Applying the confidence subtraction once more gives
\begin{equation}
\underline\Delta(D;D^*)
\ge \left(\kappa_{\mathcal C}(\mu)-2r_n\right)
|D\mathbin{\triangle}D^*|,
\label{eq:completion-certificate-gap}
\end{equation}
which is positive for all competitors when
\(r_n<\kappa_{\mathcal C}(\mu)/2\).
This proves the theorem.

\subsection{Computing the selective labels}

The confidence set need not be enumerated.  Define
\begin{equation}
c_e=\begin{cases}
u_e,&e\in\widehat D,\\
\ell_e,&e\notin\widehat D.
\end{cases}
\qquad
\underline\Delta(D;\widehat D)
=\sum_{e\in D}c_e-\sum_{e\in\widehat D}u_e.
\label{eq:selective-relative-cost}
\end{equation}
Membership in \(\mathcal A_\alpha\) is therefore a threshold test on a
weighted feedback-edge objective.  For each edge, one solve with its label
forced opposite to \(\widehat D\) determines whether any repair in the set can
reverse that label.  The whole selective output takes one unconstrained and at
most \(|E|\) forced-opposite solves.  Weighted feedback-edge optimization is
NP-hard in the worst case, so this identity removes explicit set enumeration
but not combinatorial complexity.  The formal main protocol gives each forced
query one second; a timeout remains computationally unresolved rather than
being treated as evidence.

\subsection{Proof of Proposition~\ref{prop:relaxed-selective}}

The reference \(D_0\) need not minimize either the empirical or population
score.  Define
\[
\mathcal A_\alpha(D_0)=
\{D\in\mathcal T:\underline\Delta(D;D_0)\le0\}.
\]
Let
\begin{align*}
P&=\left\{z\in[0,1]^E:
\sum_{e\in C}z_e\ge1\text{ for every directed cycle }C\right\},\\
c_e&=\begin{cases}u_e,&e\in D_0,\\ \ell_e,&e\notin D_0,\end{cases}
\qquad
\tau_0=\sum_{e\in D_0}u_e.
\end{align*}
For an active edge \(e\), let \(P_e^{\rm opp}\) impose \(z_e=0\) when
\(e\in D_0\), and \(z_e=1\) otherwise.  The relaxation reports the \(D_0\)
label precisely when
\begin{equation}
\min_{z\in P_e^{\rm opp}}c^\top z>\tau_0.
\label{eq:relaxed-selective}
\end{equation}
On the simultaneous coverage event, Eq.~\eqref{eq:completion-relative-lower}
holds with \(D_0\) in place of \(\widehat D\).  If \(D^*\) is any population
optimum, then \(\mu(D^*)-\mu(D_0)\le0\), and hence
\(\underline\Delta(D^*;D_0)\le0\).  Thus every population optimum belongs to
\(\mathcal A_\alpha(D_0)\), even when \(D_0\) is obtained by a greedy repair.

For the costs in Proposition~\ref{prop:relaxed-selective}, direct expansion
gives
\begin{equation}
\underline\Delta(D;D_0)
=\sum_{e\in D}c_e-\tau_0.
\label{eq:relaxed-cost-identity}
\end{equation}
Every incidence vector \(\mathbf 1_D\) of a feasible repair intersects every
directed cycle, so \(\mathbf 1_D\in P\).  Suppose that \(D\in
\mathcal A_\alpha(D_0)\) gives edge \(e\) the label opposite to \(D_0\).
Then \(\mathbf 1_D\in P_e^{\rm opp}\), while
Eq.~\eqref{eq:relaxed-cost-identity} gives
\(c^\top\mathbf 1_D\le\tau_0\).  This contradicts
Eq.~\eqref{eq:relaxed-selective}.  The same argument covers an empty forced
face by assigning it optimum \(+\infty\).  Therefore every repair in the
confidence set, and in particular every population optimum, shares the
reported label.

The exact forced-opposite problem minimizes the same objective over the
integer points of \(P_e^{\rm opp}\).  Its LP optimum is no larger than its
integer optimum.  Any label certified by the relaxation is consequently also
certified by an exact forced-opposite query using the same reference \(D_0\);
this need not be the confidence set centered at a different empirical
minimizer.  The relaxation can lose labels only by abstaining.

It remains to justify the computational claim.  For a candidate point
\(z\in[0,1]^E\), a cycle inequality is violated exactly when the directed
graph contains a cycle of total \(z\)-weight below one.  Since the weights are
nonnegative, a minimum-weight directed cycle can be found by shortest paths,
which supplies a polynomial-time separation oracle.  The equivalence of
separation and optimization therefore gives a polynomial-time algorithm for
the rational LP.  Our implementation uses repeated shortest-cycle separation
with a standard LP solver.  This practical cutting-plane loop is not itself
claimed to have a strongly polynomial iteration bound.

The implementation does not use a primal feasible objective as a lower bound.
For each LP it reconstructs a weak-duality bound from the row multipliers and
box constraints; for each integer query it uses the MILP dual bound.  A
floating-point slack is subtracted before either quantity can certify a label.

\subsection{Two-point rate calibration}

The following calculation checks the scale of the sufficient condition.  It
is not a DAG-specific lower bound and does not constrain an optimizer with a
different output rule.

Take a directed two-cycle with edge set
\(E=\{e_1,e_2\}\).  Its minimum transversals are \(\{e_1\}\) and
\(\{e_2\}\).  Fix \(a>\kappa\) and consider independent Gaussian score
coordinates with mean vectors
\begin{equation}
\mu_+=(a-\kappa,a+\kappa),\qquad
\mu_-=(a+\kappa,a-\kappa)
\end{equation}
and covariance \(\sigma^2I_2\).  The optimal deletion is different under the
two models, and Eq.~\eqref{eq:completion-exchange-margin} equals \(\kappa\)
under both.  Their \(n\)-sample Kullback--Leibler divergence is
\begin{equation}
\operatorname{KL}(P_+^n,P_-^n)=\frac{4n\kappa^2}{\sigma^2}.
\end{equation}
For \(\delta\in(0,1/4)\), any selector that succeeds under both models induces
a test with both error probabilities at most \(\delta\).  The standard
two-point testing inequality
gives
\(2\delta\ge\tfrac12\exp\{-\operatorname{KL}(P_+^n,P_-^n)\}\), which gives
\begin{equation}
n\ge \frac{\sigma^2}{4\kappa^2}\log\frac1{4\delta}.
\label{eq:completion-sample-lower}
\end{equation}
General instance-dependent best-set bounds are substantially sharper
\citep{chen2017combinatorial}.  We use this familiar rate only to calibrate the
score margin.

\subsection{Controlled and random-graph audits}

The two-cycle calculation has an exact error curve.  Under \(P_+\), the
empirical score difference is Gaussian with mean \(2\kappa\) and variance
\(2\sigma^2/n\), so the wrong-deletion probability is
\begin{equation}
\Phi\!\left(-\sqrt{2n}\,\kappa/\sigma\right).
\end{equation}
We use four margins, six values of \(n\kappa^2/\sigma^2\), and 200,000
replications per cell.  The simulated curve differs from the formula by at
most \(0.0016\); the fitted log--log slope of the sample size required for
error \(0.05\) is \(-2.000\).

\begin{figure}[H]
    \centering
    \includegraphics[width=0.82\textwidth]{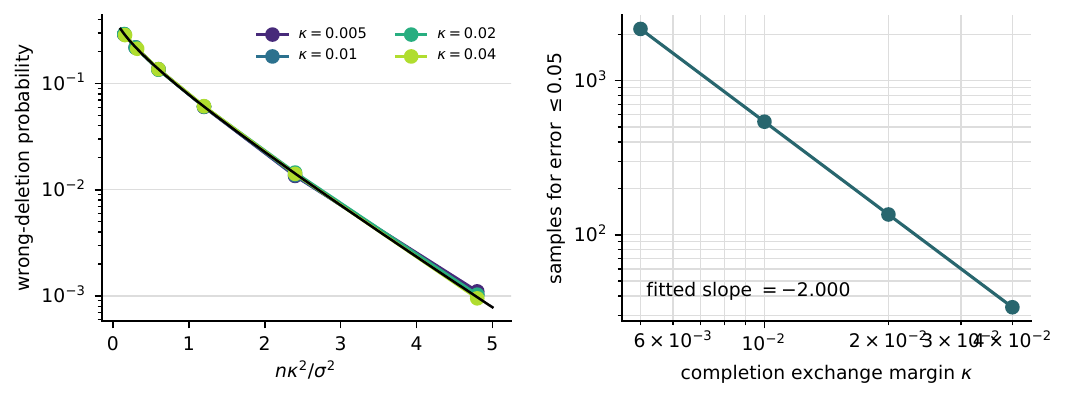}
    \caption{Statistical selection on the directed two-cycle.  Error curves
    collapse under the scaled budget \(n\kappa^2/\sigma^2\) (left), and the
    sample size for fixed error has slope \(-2\) in \(\kappa\) (right).}
    \label{fig:completion-margin-law}
\end{figure}

The random-graph audit uses the 160 frozen all-node Lasso screens from the
modular score experiment.  For each screen, an exact weighted feedback-edge
solver finds the best and second-best deletion transversals under an
independent 20,000-sample oracle proxy.  Their objective gap divided by their
symmetric-difference size estimates \(\kappa_{\mathcal C}\).  This proxy is
used only for the audit, never for selection or certification.

\begin{table}[H]
\centering
\small
\caption{Completion-margin audit over 160 formal datasets per holdout size.
The margin is measured with an independent oracle proxy.}
\label{tab:completion-margin-audit}
\begin{tabular}{rrrrr}
\toprule
$n$ & median $\kappa$ & median $2r/\kappa$ & $2r<\kappa$ & unique cert. \\
\midrule
500 & 1.16e-05 & 809.8 & 1.2\% & 2.5\% \\
1,000 & 1.16e-05 & 522.3 & 1.9\% & 2.5\% \\
5,000 & 1.16e-05 & 204.0 & 3.1\% & 4.4\% \\
20,000 & 1.16e-05 & 95.9 & 5.0\% & 7.5\% \\
\bottomrule
\end{tabular}
\end{table}

\begin{figure}[H]
    \centering
    \includegraphics[width=0.82\textwidth]{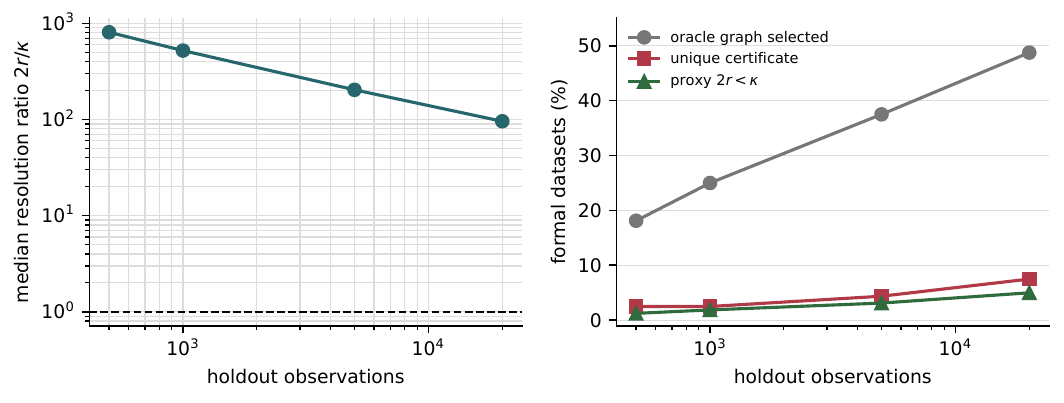}
    \caption{Why unique certificates remain rare.  The median confidence
    radius is far larger than the oracle-proxy exchange margin (left), even as
    graph selection and certification improve with holdout size (right).}
    \label{fig:completion-margin-audit}
\end{figure}

All 640 regret bounds cover their independent oracle audit.  The median
exchange margin is \(1.16\times10^{-5}\) at every nested holdout size.  At
\(n=20{,}000\), the empirical optimizer selects the oracle deletion set in
\(48.75\%\) of runs, but only \(7.5\%\) are uniquely certified.  The median
resolution ratio remains \(95.9\), so the gap between selection and proof is
consistent with near-tied transversals.  Because the oracle margin is itself
estimated, the column \(2r<\kappa\) is a diagnostic rather than a certificate.

\subsection{Selective output and score dependence}

The selective audit uses the same simultaneous empirical-Bernstein intervals
as the regret experiment.  It records the fraction of active edges with a
common label across \(\mathcal A_\alpha\), along with singleton and
oracle-proxy checks.

\begin{table}[H]
\centering
\small
\caption{Selective repair over 160 formal datasets per holdout size under the
one-second main protocol.  ``Statistical'' means that an optimally solved
forced-opposite query remains inside the confidence set; ``computational''
means that the query timed out.  ``Proxy retained'' checks the independent
oracle-proxy optimum.}
\label{tab:selective-repair-audit}
\begin{tabular}{rrrrrr}
\toprule
$n$ & decided & statistical & computational & singleton & proxy retained \\
\midrule
500 & 3.8\% & 96.2\% & 0.0\% & 2.5\% & 100.0\% \\
1,000 & 6.0\% & 94.0\% & 0.0\% & 2.5\% & 100.0\% \\
5,000 & 16.5\% & 83.5\% & 0.0\% & 4.4\% & 100.0\% \\
20,000 & 36.8\% & 63.2\% & 0.0\% & 7.5\% & 100.0\% \\
\bottomrule
\end{tabular}
\end{table}

\begin{figure}[H]
    \centering
    \includegraphics[width=0.82\textwidth]{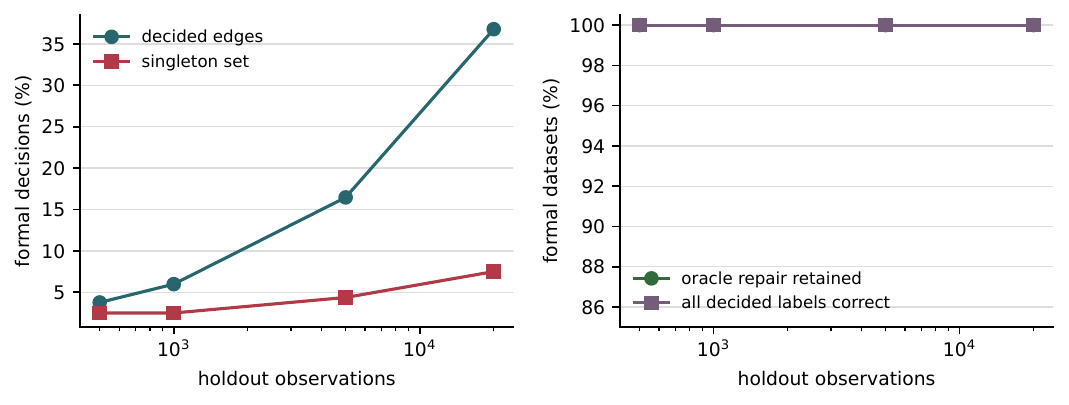}
    \caption{Selective rather than forced output.  More edges become decidable
    as the holdout grows (left); the independent oracle audit checks set and
    label coverage (right).}
    \label{fig:selective-repair-audit}
\end{figure}

\begin{table}[H]
\centering
\small
\caption{Compute-budget sensitivity at \(n=20{,}000\) over the same 160
datasets.  Solved queries are reused as the cap increases; only previous
timeouts are retried.}
\label{tab:selective-timeout-sensitivity}
\begin{tabular}{rrrrr}
\toprule
cap & decided & statistical & timeout & singleton \\
\midrule
0.01s & 34.6\% & 48.0\% & 17.3\% & 7.5\% \\
0.1s & 36.8\% & 62.9\% & 0.3\% & 7.5\% \\
1s & 36.8\% & 63.2\% & 0.0\% & 7.5\% \\
10s & 36.8\% & 63.2\% & 0.0\% & 7.5\% \\
\bottomrule
\end{tabular}
\end{table}

\begin{figure}[H]
    \centering
    \includegraphics[width=0.82\textwidth]{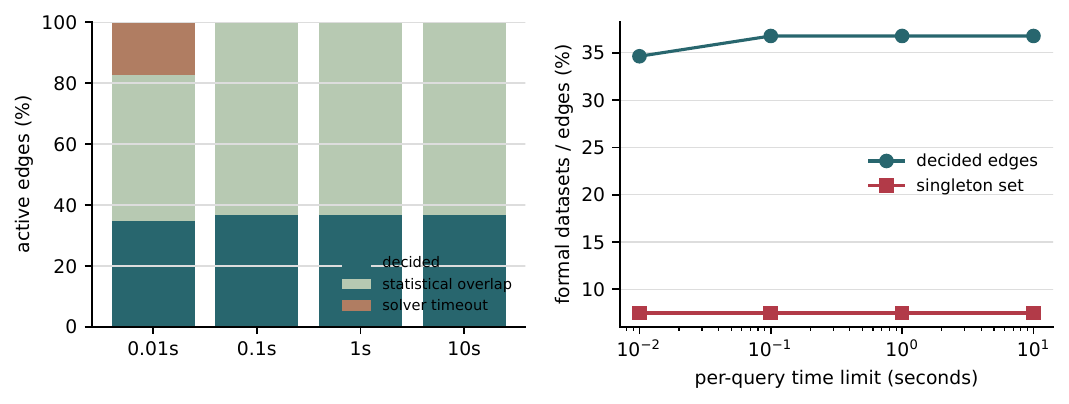}
    \caption{Statistical and computational abstention are reported separately.
    A 0.01-second cap leaves a visible timeout component; one second eliminates
    it on the formal suite.}
    \label{fig:selective-timeout-sensitivity}
\end{figure}

\subsection{Scaling audit for exact and relaxed queries}

The \(d=20\) main certificate suite was chosen to keep the statistical audit
from being obscured by a failed combinatorial solve.  We separately test the
computational boundary on \(d\in\{20,40,60,80,100\}\).  The frozen grid has
ER2 and SF2 graphs, standard and mixed-strength Gaussian SEMs, three seeds,
500 training observations, and 20,000 independent holdout observations:
60 datasets in total.  A Lasso screen fixes the active directed support.  A
deterministic greedy repair supplies the feasible \(D_0\), which is permitted
by Proposition~\ref{prop:relaxed-selective}; the experiment does not require
an exact solve to enter the confidence set.

The uncapped LP relaxation audits every active edge.  For comparison, each dataset has
20 evenly spaced active-edge indices fixed before any score is evaluated.
Their forced-opposite integer queries receive one second each.  Ground truth
is not used by either path.

\begin{table}[H]
\centering
\small
\caption{Scaling audit.  Percentages in the upper panel are edge- or
query-weighted within dimension.  The lower panel separates solved
statistical overlap from exact-solver timeout by candidate-support size.
LP sec. is the median time to audit every active edge.}
\label{tab:relaxed-certificate-scaling}
\begin{tabular}{rrrrrrr}
\toprule
$d$ & sets & edges & LP decided & LP sec. & exact decided & timeout \\
\midrule
20 & 12 & 85.7 & 6.7\% & 0.32 & 15.9\% & 0.0\% \\
40 & 12 & 101.5 & 5.9\% & 0.38 & 9.2\% & 0.0\% \\
60 & 12 & 203.8 & 1.3\% & 0.68 & 7.3\% & 16.7\% \\
80 & 12 & 122.2 & 5.5\% & 0.53 & 22.7\% & 0.0\% \\
100 & 12 & 199.1 & 0.7\% & 0.74 & 0.8\% & 0.0\% \\
\bottomrule
\end{tabular}
\vspace{3pt}
\begin{tabular}{lrrrrr}
\toprule
candidate edges & sets & LP sec. & exact decided & statistical & timeout \\
\midrule
$\leq100$ & 23 & 0.17 & 26.7\% & 73.3\% & 0.0\% \\
101--200 & 26 & 0.58 & 2.7\% & 97.3\% & 0.0\% \\
201--400 & 9 & 2.03 & 0.0\% & 100.0\% & 0.0\% \\
$>400$ & 2 & 16.13 & 0.0\% & 2.5\% & 97.5\% \\
\bottomrule
\end{tabular}
\end{table}

\begin{figure}[H]
    \centering
    \includegraphics[width=0.92\textwidth]{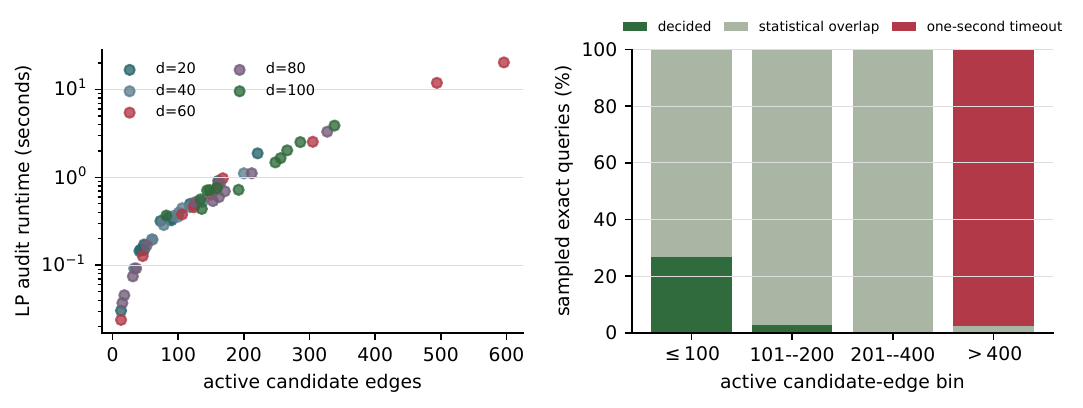}
    \caption{Computational audit.  The relaxed path audits every active edge
    (left).  Exact-query outcomes (right) show statistical overlap up to 400
    candidate edges; the 39 one-second timeouts all occur in the two supports
    above 490 edges.}
    \label{fig:relaxed-certificate-scaling}
\end{figure}

\subsection{Frontend-agnostic certificate audit}
\label{app:frontend-certificate}

The certificate is conditional on a fixed candidate support and score; it
does not require that the support come from continuous optimization.  We test
this point directly using NOTEARS, DAGMA, SDCD, and GOLEM as four interchangeable
screens.  Each frontend contributes its top \(2d\) off-diagonal magnitudes
with deterministic ties at \(d=20\).  The downstream procedure is otherwise
identical: regressors and clipping are frozen on \(n=500\) training samples,
simultaneous empirical-Bernstein intervals use an independent holdout of
5,000 samples, and a separate 20,000-sample draw is consulted only after all
labels have been fixed.  The grid contains ER2/ER4/SF2/SF4, Gaussian,
non-Gaussian, mixed-strength, and heteroscedastic SEMs, and five seeds, for
320 frontend--dataset pairs.

\begin{table}[H]
    \centering
    \small
    \caption{The same selective certificate after four frontend screens.
    Coverage refers to simultaneous coverage of all screened deletion costs;
    errors compare LP-certified labels with the independent oracle proxy.}
    \label{tab:frontend-certificate}
\begin{tabular}{lrrrr}
\toprule
Frontend & Screen recall & Labels (\%) & Coverage (\%) & Errors \\
\midrule
NOTEARS & 67.3 & 29.5 & 100.0 & 0 \\
DAGMA & 95.3 & 51.2 & 98.8 & 0 \\
SDCD & 63.1 & 13.2 & 100.0 & 0 \\
GOLEM & 94.8 & 46.1 & 98.8 & 0 \\
\bottomrule
\end{tabular}
\end{table}

\begin{figure}[H]
    \centering
    \includegraphics[width=0.86\textwidth]{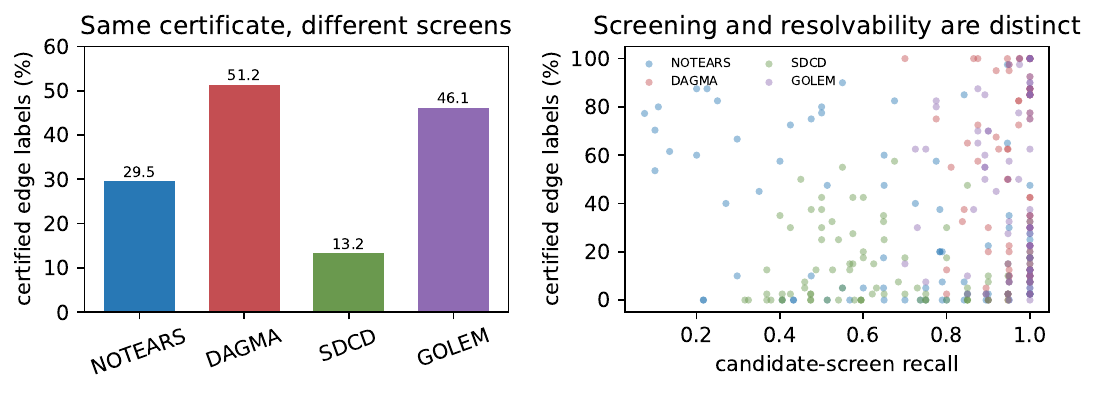}
    \caption{Frontend-agnostic audit.  The common certificate resolves
    different fractions of the four screened supports (left).  Candidate
    recall and selective resolvability are distinct: a broad screen may retain
    true edges without creating a large holdout margin between feasible
    repairs (right).}
    \label{fig:frontend-certificate}
\end{figure}

Regret coverage is \(100\%\) for every frontend and no LP-certified label
disagrees with the oracle proxy.  Simultaneous interval coverage is
\(98.75\%\) for DAGMA and GOLEM and \(100\%\) for NOTEARS and SDCD.  The
decided fraction is \(51.2\%\), \(46.1\%\), \(29.5\%\), and \(13.3\%\) for
DAGMA, GOLEM, NOTEARS, and SDCD, respectively.  These numbers are not an
end-to-end causal ranking: the top-\(2d\) screen itself has different recall.
They isolate the claim needed here---the certificate applies after different
frontends and exposes how much of each frontend's proposed support the frozen
score can actually distinguish.

The relaxed labels are conservative by construction.  On the 1,179 sampled
integer queries, they also coincide empirically with all 131 labels certified
by completed MILP solves and never certify an edge left unresolved by a
completed solve.  Of the
remaining queries, 1,009 are statistically unresolved and 39 time out.  The
LP path was not assigned a timeout and completed every audit; its median
all-edge runtime ranges from 0.32 to 0.74
seconds across the five dimensions, although the two supports above 490 edges
take 11.90 and 20.35 seconds.  At \(d=100\), only \(0.7\%\) of active edges
are certified.  The relaxation therefore addresses computation, not weak
statistical separation.  In this suite, active-support density is a better
warning sign for exact-query failure than ambient dimension alone; this is an
empirical boundary, not an average-case complexity claim.

The exchange margin is not a property of the causal graph alone.  It is
indexed by the candidate support, deletion score, and fitting protocol.  We
therefore repeat the 160-dataset oracle audit with four dimensionless scores:
the frozen clipped score used by the certificate, the same frozen regressors
without clipping, oracle-refitted normalized quadratic loss, and
oracle-refitted Gaussian profile loss.  The oracle-refitted variants are
diagnostics and are not used for selection.

\begin{table}[H]
\centering
\small
\caption{Score sensitivity on the same 160 frozen Lasso supports.  Relative
\(\kappa\) divides by the median absolute edge cost.}
\label{tab:score-margin-sensitivity}
\begin{tabular}{lrrrr}
\toprule
Score & median $\kappa$ & relative $\kappa$ & zero margin & SHD \\
\midrule
Frozen clipped & 1.16e-05 & 4.31e-03 & 0.0\% & 34.90 \\
Frozen raw & 2.09e-05 & 8.38e-04 & 0.0\% & 37.33 \\
Refit quadratic & 2.94e-07 & 2.07e-05 & 15.6\% & 42.49 \\
Refit profile & 7.88e-08 & 1.92e-06 & 43.8\% & 38.79 \\
\bottomrule
\end{tabular}
\end{table}

\begin{figure}[H]
    \centering
    \includegraphics[width=0.82\textwidth]{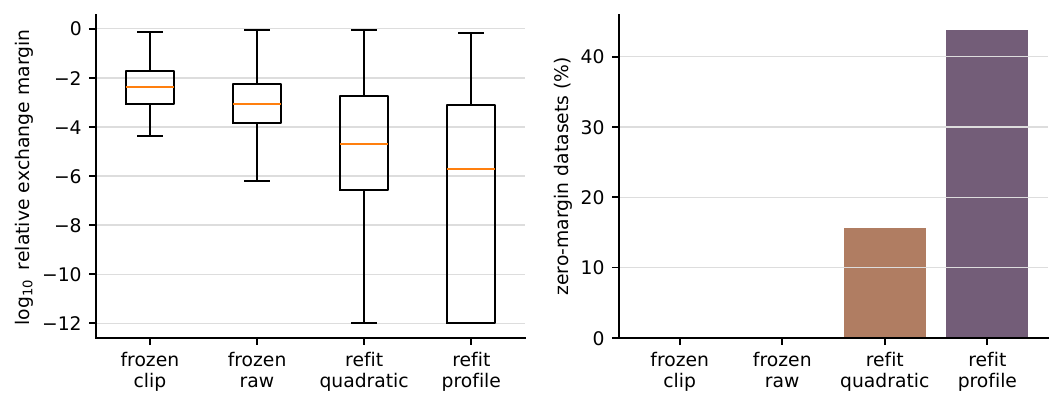}
    \caption{The completion margin is score-conditional.  Replacing clipping,
    refitting regressors, or changing the loss moves the margin distribution
    and can create exact ties.}
    \label{fig:score-margin-sensitivity}
\end{figure}

The one-second protocol has no query timeouts.  It decides progressively more
labels without requiring a unique graph: the mean decided fraction rises from
\(3.8\%\) to \(36.8\%\), while the singleton rate rises from \(2.5\%\) to
\(7.5\%\).  All 640 oracle-proxy repairs remain in the reported sets, and every
reported label agrees with the corresponding proxy.  At \(n=20{,}000\), a
0.01-second cap classifies \(34.6\%\) of edges, leaves \(48.0\%\) statistically
unresolved after completed solves, and times out on \(17.3\%\).  At 0.1
seconds, the timeout fraction falls to \(0.3\%\); at one second it is zero,
with \(36.8\%\) decided and \(63.2\%\) statistically unresolved.  The low
singleton rate therefore persists after computational abstention is removed.

The median absolute margin ranges from \(7.88\times10^{-8}\) for refitted
profile loss to \(2.09\times10^{-5}\) for the frozen unclipped score.  Exact
proxy ties occur in \(43.75\%\) of profile-score datasets and \(15.63\%\) of
refitted-quadratic datasets, but not for either frozen score.  The associated
SHD does not improve monotonically with the margin.  Thus the small
\(1.16\times10^{-5}\) value in the primary audit diagnoses one frozen
score/frontend pair; it is not evidence for an intrinsic sample complexity of
causal discovery.

\section{Interaction-Aware Certificate: Proof and Unified Audit}
\label{app:global-certificate}

An edge-modular deletion model does not represent interactions created by
refitting several parents jointly.  We therefore treat the entire parent set
as the local statistical unit.  This retains collider-forming interactions
and supports selective statements about skeleton and V-structure features;
the resulting exact search is combinatorial.

For each compatible DAG \(G\), define
\begin{equation}
 Q(G)=\sum_j(R_{j,\operatorname{pa}_G(j)}+a_{j,\operatorname{pa}_G(j)}),
 \qquad G_0\in\mathcal G(\mathcal P).
\label{eq:global-score}
\end{equation}
The reference $G_0$ is arbitrary and may be data-dependent.  The benchmark
chooses the empirical holdout minimizer, but the theorem also covers a
compatible graph supplied by a continuous, constraint-based, or score-based
frontend.

For completeness, we record the data-dependent quantities suppressed in the
main statement.  Write
\(R_{j,S}=\mathbb E\ell_{j,S}(X)\),
\(\widehat R_{j,S}=n^{-1}\sum_i\ell_{j,S}(X_i)\), and
\(M=\sum_j|\mathcal P_j|(|\mathcal P_j|-1)\).  For each ordered pair
\(S,T\in\mathcal P_j\), let \(\widehat V_{j,S,T}\) be the sample variance of
\(\ell_{j,S}(X_i)-\ell_{j,T}(X_i)\), and set
\begin{equation}
r_{j,S,T}=\sqrt{\frac{2\widehat V_{j,S,T}\log(2M/\alpha)}{n}}
+\frac{28B\log(2M/\alpha)}{3(n-1)}.
\label{eq:global-radius}
\end{equation}
For \(S_{0j}=\operatorname{pa}_{G_0}(j)\), define
\begin{equation}
g_{j,S_{0j}}=0,\qquad
g_{j,S}=\widehat R_{j,S}-\widehat R_{j,S_{0j}}
+a_{j,S}-a_{j,S_{0j}}-r_{j,S,S_{0j}}
\quad(S\ne S_{0j}).
\label{eq:robust-local-gap}
\end{equation}

The main text defines \(L\), \(L_-\),
\(\mathcal A_\alpha(G_0)\), and the forced-opposite value
\(L_\varphi^{\rm opp}\).  We now prove the simultaneous regret, confidence
family, and feature statements in Theorem~\ref{thm:global-certificate}.

\subsection{Proof of Theorem~\ref{thm:global-certificate}}

Condition on the training sample.  The parent-set families, fitted predictors,
losses, and penalties are then fixed, while the validation observations remain
i.i.d.  For an ordered pair $S,T\in\mathcal P_j$, put
\[
D_i^{j,S,T}=\ell_{j,S}(X_i)-\ell_{j,T}(X_i).
\]
Because each loss lies in $[-B,B]$, the paired difference lies in
$[-2B,2B]$, an interval of length $4B$.  The empirical Bernstein inequality
of \citet{maurer2009empirical}, with failure probability $\alpha/M$, gives
\begin{equation}
R_{j,S}-R_{j,T}
\ge \widehat R_{j,S}-\widehat R_{j,T}-r_{j,S,T}
\label{eq:global-pair-event}
\end{equation}
with the radius in Eq.~\eqref{eq:global-radius}.  A union bound over all $M$
ordered local pairs makes Eq.~\eqref{eq:global-pair-event} simultaneous with
probability at least $1-\alpha$.  Covering the entire ordered-pair family is
essential when $G_0$ depends on the same holdout.

Work on this simultaneous event and set $T=S_{0j}$.  Adding the
deterministic penalty difference to Eq.~\eqref{eq:global-pair-event} yields
\[
R_{j,S}+a_{j,S}-R_{j,S_{0j}}-a_{j,S_{0j}}
\ge g_{j,S}
\]
for every local alternative, including equality at $S=S_{0j}$.  Hence,
for every candidate DAG $G$,
\begin{equation}
Q(G)-Q(G_0)
\ge \sum_j g_{j,\operatorname{pa}_G(j)}.
\label{eq:global-dag-lower}
\end{equation}
Minimizing both sides over $G\in\mathcal G(\mathcal P)$ gives
$\min_GQ(G)-Q(G_0)\ge L$.  Rearrangement proves
Eq.~\eqref{eq:global-regret-certificate}.  Notice that $L\le0$ because
$G_0$ is feasible and has robust relative cost zero.

Let $G^*$ be any population minimizer.  Since
$Q(G^*)-Q(G_0)\le0$, Eq.~\eqref{eq:global-dag-lower} gives
$\sum_jg_{j,\operatorname{pa}_{G^*}(j)}\le0$.  Hence every such $G^*$ lies
in Eq.~\eqref{eq:parent-set-confidence-family}.  If $L_->0$,
Eq.~\eqref{eq:global-dag-lower} is strictly positive for every
$G\ne G_0$; the reference graph is therefore the unique minimizer of
$Q$.  Finally, if a population minimizer had the feature label opposite to
$G_0$, its robust relative cost would be at least
$L_\varphi^{\rm opp}>0$, contradicting its membership in
$\mathcal A_\alpha(G_0)$.  This proves the feature statement.

\subsection{Optimization and target of the certificate}

For each child, the implementation screens at most eight candidate parents on
the training sample and fits every subset of those parents.  The validation
loss for $S\in\mathcal P_j$ is
\begin{equation}
\ell_{j,S}(x)=\operatorname{clip}_{[-B,B]}\!\left\{
\frac{(x_j-\widehat f_{j,S}(x))^2
      -(x_j-\widehat f_{j,F_j}(x))^2}{2\widehat v_j}\right\},
\label{eq:parent-set-loss}
\end{equation}
where $F_j$ is the full screened parent set.  We use
$a_{j,S}=\lambda_{\mathrm{sp}}|S|$.  The subtraction of the full-model loss
does not affect the graph minimizer but reduces paired variance, and division
by the training response variance makes the loss dimensionless.

Both minimizations in Eq.~\eqref{eq:robust-dag-oracles} are standard exact
Bayesian-network structure-learning problems over a restricted parent-set
family.  We use the parent-set integer program and add cluster inequalities
until the selected support is acyclic \citep{cussens2011cutting}.  A no-good
constraint excludes $G_0$ when computing $L_-$.  These optimization
devices, including parent-set screening and cluster cuts, are established
tools \citep{cussens2020gobnilp}; they are not claimed as contributions.  The
new object is the robust objective in Eq.~\eqref{eq:robust-local-gap} and its
conversion of simultaneous local score uncertainty into the global statement
of Theorem~\ref{thm:global-certificate}.

The feature queries use the same parent-set variables.  If $x_{j,S}$ selects
parent set $S$ for child $j$, write
\begin{equation}
a_{uv}=\sum_{S\in\mathcal P_v:u\in S}x_{v,S},\qquad
s_{uv}=a_{uv}+a_{vu}\quad(u<v).
\label{eq:parent-set-feature-variables}
\end{equation}
On a DAG, $s_{uv}$ is the skeleton-adjacency indicator.  Thus an opposite
skeleton or directed-edge label is one linear equality.  An unshielded
collider $u\to v\leftarrow w$ is present exactly when
$a_{uv}=a_{wv}=1$ and $s_{uw}=0$.  Its presence is imposed by these three
equalities; its absence is the single inequality
\begin{equation}
a_{uv}+a_{wv}-s_{uw}\le1.
\label{eq:collider-opposite-row}
\end{equation}
One forced-opposite solve therefore computes
Eq.~\eqref{eq:forced-opposite-feature}.  Skeletons and unshielded colliders
characterize a Markov equivalence class, so this projection does not require
the data to resolve arbitrary orientations inside the class
\citep{chickering2002optimal}.

The target is deliberately conditional.  Screening can omit a true parent,
and even the population minimizer of the clipped regularized score can differ
from the causal DAG.  The theorem certifies population regret only within
$\mathcal G(\mathcal P)$.  It neither assumes nor concludes causal support
recovery.

\subsection{Frozen protocol and results}

The formal suite uses $d=20$ and 500 training observations.  A standardized
all-node Lasso frontend uses $0.002\lambda_{\max}$; at most eight incoming
parents per node are retained, and all of their subsets are enumerated.  We
set $B=0.02$ and $\lambda_{\mathrm{sp}}=0.001$ using disjoint development
seeds 100--102, then freeze them.  Formal seeds are 0--9.  We test ER2, ER4,
SF2, and SF4 graphs under standard Gaussian, exponential-noise, 40\%-weak-edge
Gaussian, and heteroscedastic Gaussian SEMs.  Independent holdouts contain
$500,1000,5000$, or $20{,}000$ observations; a further 20,000-sample oracle is
used only after selection and certification.  This gives 160 datasets at each
holdout size and 640 records in total.

\begin{figure}[H]
    \centering
    \includegraphics[width=0.92\textwidth]{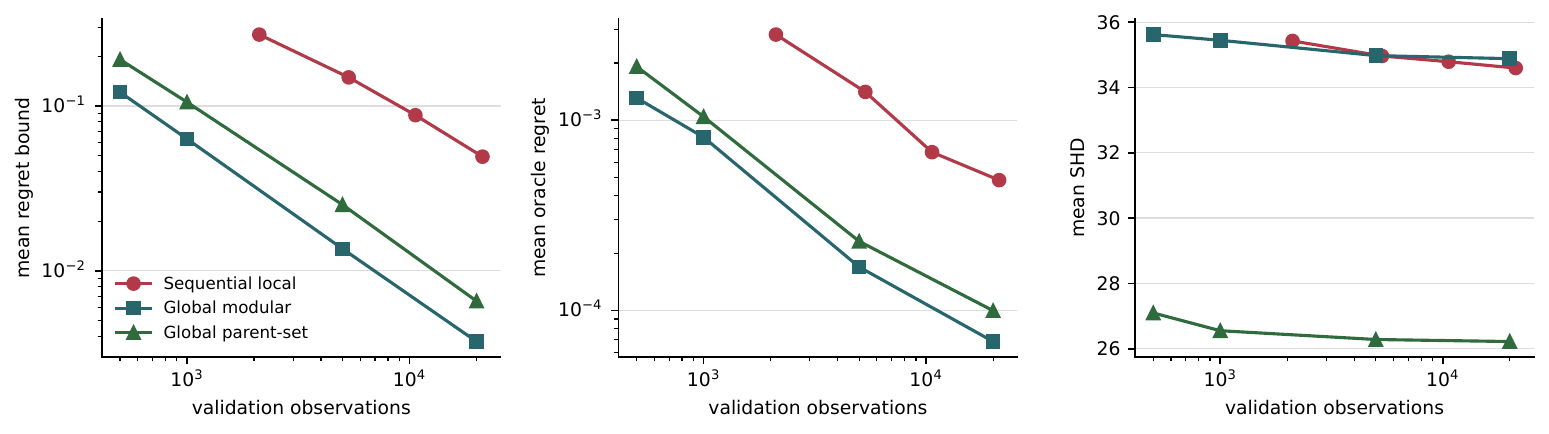}
    \caption{Matched-budget comparison of the original sequential certificate,
    a globally optimized modular deletion score, and the parent-set certificate.
    The methods certify different population losses, so regret values should be
    compared as tightening curves rather than as a common performance metric.
    Every reported bound covered its independent oracle audit.}
    \label{fig:global-certificate}
\end{figure}

\begin{table}[t]
\centering
\small
\setlength{\tabcolsep}{3.4pt}
\caption{Matched validation-budget audit over 160 formal datasets. The three rows certify different targets; their regret values are not losses on a common scale.}
\label{tab:global-certificate}
\begin{tabular}{@{}lrrrrr@{}}
\toprule
Method & Budget & Bound & Oracle reg. & Exact path/graph & SHD \\
\midrule
Sequential local & 21,275 & 0.0492 & 0.00048 & 0.0\% & 34.59 \\
Global modular & 20,000 & 0.0037 & 0.00007 & 7.5\% & 34.88 \\
Global parent-set & 20,000 & 0.0065 & 0.00010 & 0.0\% & 26.22 \\
\bottomrule
\end{tabular}
\end{table}

For the parent-set target, all 640 bounds cover the independent oracle proxy.
The mean bound decreases from $0.1911$ at $n=500$ to $0.00655$ at
$n=20{,}000$, while the mean oracle regret decreases from $0.00191$ to
$0.00010$.  The selected graph matches the oracle graph in $3.75\%$ and
$47.5\%$ of runs, respectively, but $L_-$ is never positive.  Thus finite
samples often select the oracle graph before the simultaneous confidence box
is narrow enough to prove uniqueness.

The parent-set construction retains deletion interactions and reduces mean
SHD from $34.88$ for the modular certificate to $26.22$ at the largest matched
budget.  This remains far behind the strongest frozen frontends in the broad
benchmark, so the experiment supports the certificate rather than a recovery
claim.  Mean screen recall is $0.692$, and the difficult ER4/SF4 cells account
for most remaining errors.  The result also identifies the operative boundary:
no validation certificate can recover parents removed by the training screen.

The modular companion fixes one bounded loss per active-edge deletion and
solves an exact weighted feedback-edge problem.  At $n=20{,}000$, its mean
bound is $0.00372$, its oracle regret is $0.000069$, and $7.5\%$ of graphs are
uniquely certified.  Its empirical modular objective is lower than that of a
same-cost greedy cycle repair in every formal run.  This verifies the value of
global optimization for its stated target, while the parent-set result shows
why an additive single-edge target is statistically incomplete.

\subsection{Unified frontend-to-feature audit}

We next keep the full pipeline fixed within one experiment.  NOTEARS, DAGMA,
SDCD, and GOLEM each supply their top-$2d$ weighted entries.  A deterministic
four-parent-per-child cap fixes the parent-set family before holdout scores are
evaluated; every subset is refitted on 500 training observations.  The
interaction-aware score uses a 5,000-sample holdout, and an independent
20,000-sample draw is consulted only after the graph and feature labels have
been fixed.  The grid contains ER2/ER4/SF2/SF4, Gaussian, non-Gaussian,
mixed-strength, and heteroscedastic SEMs, and five seeds, for 320
frontend--dataset pairs.  The cap, clipping level, and sparsity penalty are
fixed across all cells.

\begin{table}[H]
\centering
\small
\caption{Unified pipeline audit.  ``Front.'' is the frontend's standard
reported DAG; ``Modular'' and ``Parent'' apply the two repair scores to the
same top-$2d$ screen.  ``Oracle parent'' uses 20,000 independent observations;
label columns are certified candidate-feature fractions.}
\label{tab:interaction-aware-benchmark}
\begin{tabular}{lrrrrrrr}
\toprule
Frontend & Screen & Front. & Modular & Parent & Oracle & Adj. & Collider \\
 & recall & SHD & SHD & SHD & parent SHD & labels & labels \\
\midrule
NOTEARS & 57.4 & 15.6 & 19.9 & 20.4 & 20.1 & 19.5 & 16.8 \\
DAGMA & 77.6 & 4.5 & 3.6 & 8.5 & 8.6 & 44.5 & 34.6 \\
SDCD & 53.4 & 39.3 & 24.1 & 21.1 & 21.0 & 10.5 & 4.4 \\
GOLEM & 77.2 & 8.1 & 4.0 & 8.5 & 8.5 & 44.4 & 36.3 \\
\bottomrule
\end{tabular}
\end{table}

\begin{figure}[H]
\centering
\includegraphics[width=0.88\textwidth]{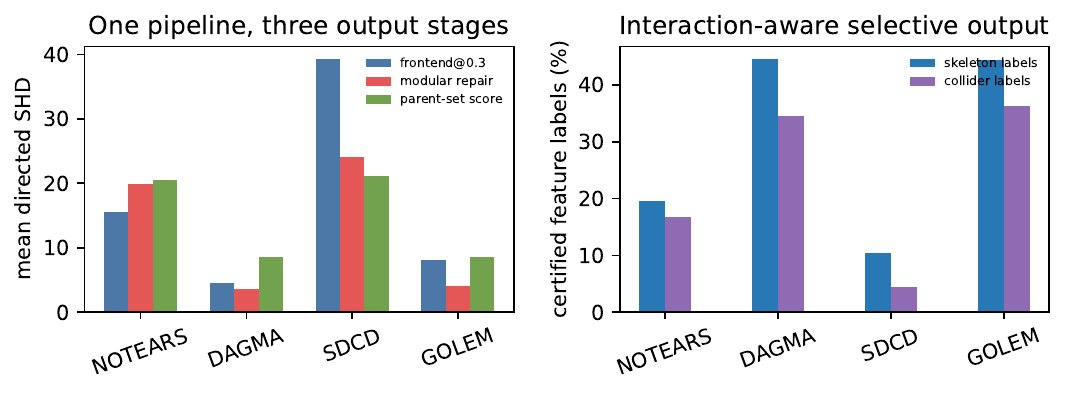}
\caption{One frozen pipeline, reported end to end.  Joint parent-set refitting
does not uniformly improve SHD over modular repair (left), but it changes the
statistical target and permits selective skeleton and collider labels (right).}
\label{fig:interaction-aware-benchmark}
\end{figure}

All 320 regret bounds cover the independent oracle-score audit.  Among 3,042
certified skeleton labels and 2,396 certified collider labels, none disagrees
with the corresponding oracle-score optimum.  Decision rates vary with the
frontend screen: adjacency labels range from $10.5\%$ to $44.5\%$, and
collider labels from $4.4\%$ to $36.3\%$.  The parent-set score lowers mean SHD
relative to the reported frontend for SDCD, but trails modular repair for
DAGMA and GOLEM.  Its mean SHD is $14.63$, nearly the $14.56$ obtained by
replacing the holdout score with a 20,000-sample oracle score.  Across all 320
cells, parent-set SHD and screen recall have Pearson correlation $-0.886$.
The deterioration is therefore not a finite-holdout effect that tighter
intervals would repair; it reflects the screened family and the score's
population target.  Averaged over frontends, modular and reported frontend
SHDs are $12.91$ and $16.87$.

The score certificate is not a causal-truth certificate.  Final truth-only
evaluation finds disagreement on $4.4\%$ of certified skeleton labels and
$5.5\%$ of certified collider labels.  Those errors can arise from candidate
screen omissions, the finite oracle proxy, or a population score whose
minimizer differs from the generating DAG.  Reporting both comparisons keeps
score uncertainty separate from causal identification.

\subsection{Screen-budget diagnosis}

The top-$2d$ screen in the primary audit is a protocol choice, not an oracle
density estimate.  We rerun all 320 frontend--dataset pairs with predeclared
top-$2d$, top-$4d$, and top-$6d$ budgets.  Every frontend fit, data split,
four-parent cap, loss, and penalty is unchanged; no budget is selected using
truth.  ``Raw recall'' is measured before the per-child cap, whereas ``family
recall'' describes the parent sets actually passed to the certificate.

\begin{table}[H]
\centering
\small
\caption{Frozen screen-budget audit over 960 cells.  Coverage compares the
holdout regret bound with the independent 20,000-sample score optimum.}
\label{tab:interaction-screen-budget}
\begin{tabular}{lrrrrrr}
\toprule
Budget & Raw recall & Family recall & Parent SHD & Oracle SHD & Coverage & Time (s) \\
\midrule
top-$2d$ & 80.1 & 66.4 & 14.63 & 14.56 & 100.0 & 0.009 \\
top-$4d$ & 84.5 & 68.4 & 15.40 & 15.41 & 100.0 & 0.028 \\
top-$6d$ & 85.4 & 68.5 & 15.73 & 15.67 & 100.0 & 0.058 \\
\bottomrule
\end{tabular}
\end{table}

The extra candidates raise raw recall by 5.3 percentage points, but the local
cap passes only 2.1 points to the parent-set family.  More importantly, the
oracle-score solution does not improve: its mean SHD rises from 14.56 to
15.67.  The effect is frontend-dependent.  At top-$4d$, oracle-parent SHD is
8.54 for DAGMA and 8.18 for GOLEM, but 21.65 for NOTEARS and 23.28 for SDCD.
Thus the top-$2d$ result is not explained by screen omissions alone.  Enlarging
the family recovers some omitted edges while admitting alternatives favored by
the frozen score but not by the generating DAG.

\subsection{Bounded-parent scaling}

The unified accuracy audit uses $d=20$ so that statistical and score-target
effects are not mixed with solver failures.  We separately test whether the
interaction-aware implementation itself stops at that dimension.  A
truth-blind absolute-correlation screen supplies at most two candidate parents
per node.  For each instance we solve the global reference certificate and 20
feature queries fixed by index before scores are evaluated: ten skeleton
adjacencies and ten collider candidates.  The grid contains ER2 and SF4,
standard and mixed-strength Gaussian SEMs, and five seeds, giving 20 cases at
each dimension.

\begin{table}[H]
\centering
\small
\caption{Interaction-aware computational scaling.  Times are seconds for the
global certificate (core) and the batch of 20 forced-opposite queries.}
\label{tab:parent-set-scaling}
\begin{tabular}{rrrrrr}
\toprule
$d$ & Cases & Parent sets & Queries & Core med./p95 & Query med./p95 \\
\midrule
20 & 20 & 80 & 20 & 0.051/0.069 & 0.343/0.472 \\
50 & 20 & 200 & 20 & 0.175/0.241 & 1.139/1.674 \\
100 & 20 & 400 & 20 & 0.533/0.742 & 3.675/5.226 \\
200 & 20 & 800 & 20 & 1.800/2.433 & 12.798/17.487 \\
\bottomrule
\end{tabular}
\end{table}

All 80 core solves and all 1,600 feature queries terminate.  At $d=200$, the
median core time is 1.80 seconds and the median time for 20 queries is 12.80
seconds.  This is an empirical bounded-parent result, not a polynomial-time
guarantee: runtimes grow superlinearly, two-parent families capture only local
pair interactions, and denser candidate families remain subject to the
worst-case complexity of exact Bayesian-network structure learning.  The
edge-modular LP in Appendix~\ref{app:completion-margin} is the scalable
conservative option when that restriction is unacceptable.

\end{document}